\title{How do simple rotations affect the implicit bias of Adam?}
\author[1,*]{Adela DePavia}
\author[2,3]{Vasileios Charisopoulos}
\author[1,3,4]{Rebecca Willett}
\affil[1]{Committee on Computational and Applied Mathematics, University of Chicago}
\affil[2]{Department of Electrical \& Computer Engineering, University of Washington}
\affil[3]{NSF-Simons National Institute for Theory and Mathematics in Biology}
\affil[4]{\parbox[t]{.8\textwidth}{\centering Department of Statistics, Department of Computer Science, and Data Science Institute, University of Chicago}}
\affil[*]{Corresponding author: adepavia@uchicago.edu}
\newcommand{\leqnomode}{\tagsleft@true}
\newcommand{\reqnomode}{\tagsleft@false}
\begin{document}
\maketitle

\begin{abstract}
Adaptive gradient methods such as Adam and Adagrad are widely used in machine learning, yet their effect on the generalization of learned models---relative to methods like gradient descent---remains poorly understood.
Prior work on binary classification suggests that Adam exhibits a ``richness bias,'' which can help it learn nonlinear decision boundaries closer to the Bayes-optimal decision boundary relative to gradient descent.
However, the coordinate-wise preconditioning scheme employed by Adam renders the overall method sensitive to orthogonal transformations of feature space. We show that this sensitivity can manifest as a reversal of Adam's competitive advantage: even small rotations of the underlying data distribution can make Adam forfeit its richness bias and converge to a linear decision boundary that is farther from the Bayes-optimal decision boundary than the one learned by gradient descent. 
To alleviate this issue, we show that a recently proposed reparameterization method---which applies an orthogonal transformation to the optimization objective---endows any first-order method with equivariance to data rotations, and we empirically demonstrate its ability to restore Adam's bias towards rich decision boundaries.
\end{abstract}

\section{Introduction}

Adaptive gradient methods---including \adam, \adagrad, and variants like \adamw---are extremely popular optimizers in neural network training and other areas of modern machine learning \cite{Crew_2020}.
Recent work has explored the impact of the choice of optimizer on generalization. For instance, \citet{vasudeva2025rich} ask, ``What is the implicit bias of \adam for nonlinear models such as [neural networks], and how does it differ from the implicit bias of (stochastic) gradient descent?" Answering this question can provide insights into the ramifications of optimizer choice in machine learning. 

The central result of \citet{vasudeva2025rich} is essentially that adaptive methods can overcome the ``simplicity bias'' of gradient descent (\GD) \cite{morwani2023simplicity, shah2020pitfalls, perez2019deep}. To illustrate this, they consider a binary classification setting, construct a family of data distributions, and analyze training a two-layer ReLU network classifier. For their family of distributions, the Bayes decision boundary is nonlinear, but training using \GD provably yields a linear classifier. In contrast, training using an adaptive optimization method yields a nonlinear decision boundary similar to the Bayes decision boundary, as illustrated in \cref{fig:OG_gamma=0}.

At first glance, these findings may suggest that adaptive methods naturally prefer ``richer'' predictors than gradient descent. However,
most algorithms in this family (including \adam) are essentially preconditioned variants of
gradient descent; crucially, their preconditioners are \emph{diagonal}---i.e., applied coordinate-wise---meaning that the overall method is not equivariant to simple transformations (e.g., rotations) of the model parameters. As we highlight in this work, rotations in data space induce rotations in parameter space for feedforward neural networks. Thus, the sensitivity of adaptive algorithms to parameter rotations implies sensitivity to data rotations. Consequently, it is natural to ask:

\begin{center}
    \begin{minipage}{0.6\textwidth}
    \centering \itshape
        How do simple transformations of the data distribution affect the implicit bias of adaptive gradient methods?
    \end{minipage}
\end{center}

In this paper, we study this question in the stylized setting of \citet{vasudeva2025rich}, considering rotations of the feature space. We show that even for small rotations of the same distribution considered in \cite{vasudeva2025rich}, adaptive optimization yields a linear classifier which, for almost all rotation angles, is further from the Bayes decision boundary than the result of \GD training, as illustrated in \cref{fig:OG_gamma=pi_over_32}. Our results complement the work of \citet{vasudeva2025rich}: taken together, these results establish that arbitrarily small perturbations of the training data can significantly worsen the generalization of adaptive gradient methods.

This dramatic shift in qualitative behavior prompts the following question: can we \textbf{algorithmically} identify a coordinate system in which adaptive methods produce rich decision boundaries?
To answer this, we turn our attention to a simple reparameterization
method proposed by the current authors~\cite{depavia2025fasteradaptiveoptimizationexpected},
which corresponds to an orthonormal transformation of model parameters derived from the
\textit{expected gradient outer product} (EGOP) matrix of the training loss. In particular,
we show that this reparameterization method coupled with adaptive optimization results in a rotation
invariance that restores the implicit bias identified by~\citet{vasudeva2025rich}, as illustrated
in~\cref{fig:EGOP_gamma=pi_over_32}. Our analysis establishing rotation invariance under EGOP-reparameterization extends to related algorithms, including \SOAP \cite{vyas2024soap} and \Shampoo \cite{gupta2018shampoo}.

\paragraph{Organization.} In the rest of this section, we introduce the notation used
throughout our paper. In Section~\ref{ssec:results-summary} we provide brief formal statements of the main results of the paper and review related literature in~\cref{ssec:related-works}.
In Section~\ref{sec:decision-boundaries-under-rotation}, we analyze the decision boundaries learned by adaptive algorithms on binary classification problems, emphasizing their sensitivity to data rotations. In Section~\ref{sec:EGOP-reparameterization}, we show that EGOP-reparameterization endows adaptive algorithms with equivariance to data rotations, producing generalization behavior that is invariant to these orthogonal transformations. Finally, in~\cref{sec:empirical_results}, we present a suite of numerical experiments illustrating
our theoretical predictions, which remain valid even in settings not explicitly covered by our theory. 

\subsection{Notation and background}
We write $\ip{\cdot, \cdot}$ for the Euclidean inner product on $\R^d$ and $\norm{\cdot} = \sqrt{\ip{\cdot, \cdot}}$ for the induced Euclidean norm. For a matrix $A \in \R^{m\times n}$, we denote its (column-major) vectorization by $\operatorname{vec}(A) \in \R^{mn}$ and its adjoint operation by
$\reshape:\R^{mn}\rightarrow \R^{m\times n}$, such that $\reshape(\vecop(A)) = A$.
We denote the $i^{\text{th}}$ entry of a vector $\theta$ by $\theta_i$ and the $k^{\text{th}}$
row of a matrix $A \in \R^{m \times n}$ by $A_{k,:}\in \R^{n}$. We use $\odot$ to denote the Hadamard (element-wise) product operation between vectors and matrices of compatible dimensions and $\otimes$ for the Kronecker product. We write $\diag(v)$ for the square matrix which equals $v$ across its
diagonal and $0$ elsewhere.
We use $\mathbb{I}_d \in \R^{d\times d}$ to denote the identity matrix. We let $O(d)$ denote the set of orthogonal $d\times d$ matrices. Finally, we write $\indic(\cdot)$ for the indicator function.

In \cref{app:methods}, we formally define the methods that we study---namely, \adam (\cref{alg:adam}),
\SignGD (\cref{alg:signgd}), gradient descent (\cref{alg:SGD}), and EGOP-reparameterization (\cref{alg:meta-algorithm-block}); they all assume first-order access to the loss function $\loss$. 

\section{Overview of main results}
\label{ssec:results-summary}
In this section, we introduce the problem setup and summarize our main results. Our goal is to study the implicit bias of \adam (\cref{alg:adam})
and gradient descent (\GD, \cref{alg:SGD}) for the synthetic binary classification
task introduced by~\citet{vasudeva2025rich} under rotations to the input data distribution.
As in~\citet{vasudeva2025rich}, we focus on 2-layer neural networks with
ReLU activations and
\emph{fixed} outer layer weights:
\begin{equation}\label{eq:og-two-layer-objective}
    f(W; x) \defeq a^\T \ReLU\big(W x\big),
\end{equation}
where $a \in \R^m$ is a fixed vector of outer layer weights, $m$ is the hidden layer width,
$\ReLU(x) = \max(0, x)$ is applied elementwise, and $W \in \R^{m \times d}$ 
are the hidden layer weights.

While we are ultimately interested in the implicit bias of \adam, we analyze a simplified,
analytically tractable version known as \SignGD (\cref{alg:signgd}) --- a memoryless instantiation of \adam ($\beta_1 = \beta_2 = 0$)
which reduces to the following update when $\epsilon = 0$:
\begin{equation}
    \label{eq:def-signGD}
    \theta_{t+1} = \theta_{t} - \eta \sign(\grad \loss(\theta_t)), \qquad \text{where }
    \sign(z) := \begin{pmatrix}
        \sign(z_i)
    \end{pmatrix}_{i=1}^p.
\end{equation}
From an analytical standpoint, \SignGD is a tractable proxy for \adam because the setting
$\beta_1 = \beta_2 = 0$ removes the dependence on trajectory history. However, as shown in
\cref{fig:opener} and throughout our empirical results, \SignGD and \adam exhibit very similar
behavior for the task at hand; consequently, our theoretical conclusions for \SignGD are highly
predictive of the behavior of both algorithms.

As our goal is to study the impact of data rotations on the implicit bias of each method, we briefly
introduce some relevant notation.
Fix a joint distribution $\mathcal{D}$ over feature-label pairs $(x, y) \in \R^{d} \times \set{\pm 1}$ and an orthogonal matrix $U \in O(d)$, 
which induces rotations/reflections about the origin via the map $x \mapsto U x$. We let $\mathcal{D}^{(U)}$ denote the transformation of $\mathcal{D}$ by $U$:

\begin{figure}[t]
     \centering
     \begin{subfigure}[t]{0.31\textwidth}
         \centering
         \includegraphics[width=\linewidth]{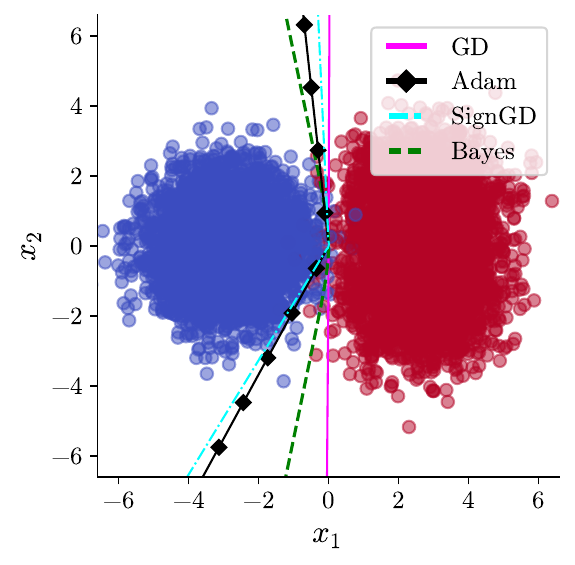}
         \caption{\centering Data rotation $\gamma=0$, learning with base algorithms}
         \label{fig:OG_gamma=0}
     \end{subfigure}
     \begin{subfigure}[t]{0.31\textwidth}
         \centering
         \includegraphics[width=\linewidth]{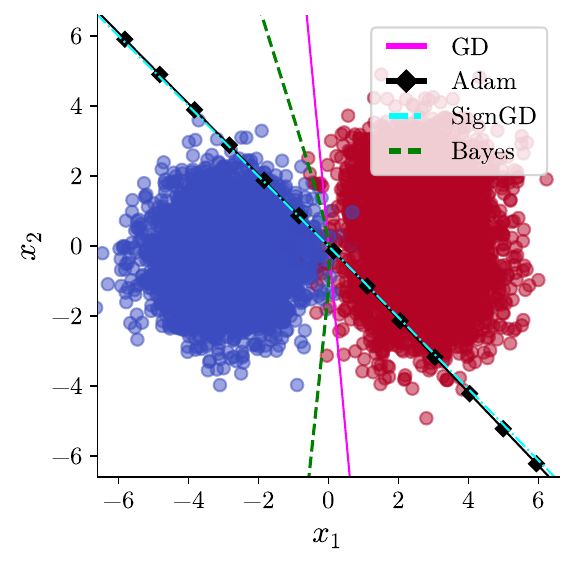}
         \caption{\centering Data rotation  $\gamma = \nicefrac{\pi}{32}$, learning with base algorithms}
         \label{fig:OG_gamma=pi_over_32}
     \end{subfigure}
     \begin{subfigure}[t]{0.35\textwidth}
         \centering
         \includegraphics[width=\linewidth]{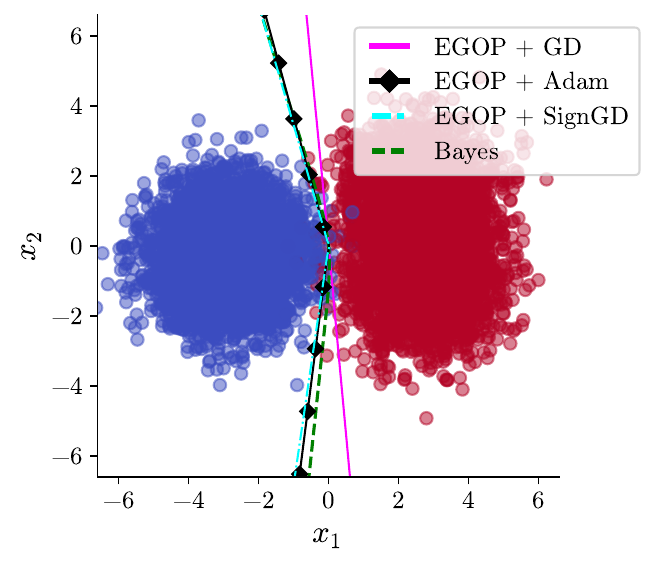}
         \caption{\centering Data rotation $\gamma = \nicefrac{\pi}{32}$, learning with EGOP reparameterization}
         \label{fig:EGOP_gamma=pi_over_32}
     \end{subfigure}
     \caption{Illustration of main results. Small data rotations can drastically change  generalization. (a) Base algorithms \adam and \SignGD produce nonlinear decision boundaries, similar to the nonlinear Bayes-optimal rule, compared with GD's linear decision boundary. (b) After rotating the data by a small angle ($\gamma = \nicefrac{\pi}{32}$ radians), the base \adam and \SignGD algorithms produce linear boundaries with poor generalization. (c) Training  EGOP-reparameterized \adam and \SignGD restores \adam's implicit bias towards nonlinear decision boundaries.
     }
     \label{fig:opener}
\end{figure}

\begin{equation}\label{eq:def-rotated-dist}
    \mathcal{D}^{(U)}\big((x,y)\big) \defeq \mathcal{D}\big((U^\T x, y)\big).
\end{equation}
Our first result shows that there are binary classification instances for which small rotations of
the data distribution can drastically change the behavior of adaptive algorithms.
\begin{theorem}[Informal; see~\cref{thm:general-rotation-linear-boundary}]
\label{thm:informal-rotation-boundary}
    For any nontrivial rotation matrix $U$, there exists a joint distribution $\mathcal{D}$ over feature-label pairs such that the following hold:
    \begin{enumerate}
        \item \label{item:informal:nonlinear} 
        \citep[Theorem 2]{vasudeva2025rich}
        Training a 2-layer ReLU network~\eqref{eq:og-two-layer-objective}
        using \SignGD with $\epsilon = 0$ to classify points from $\mathcal{D}$ produces a \emph{nonlinear} decision boundary;
        \item \label{item:informal:linear} In constrast, training the same network using the same algorithm to classify points from $\mathcal{D}^{(U)}$ produces the following \emph{linear} decision boundary:
        \[
            \set*{x \in \R^d \mid x_1 = -x_2}.
        \]
    \end{enumerate}
\end{theorem}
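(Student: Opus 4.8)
The plan is to take $\mathcal{D}$ to be a (suitably re-coordinatized) copy of the distribution constructed by~\citet{vasudeva2025rich}, so that the first claim holds verbatim by their Theorem~2, and to devote the argument to the second claim: showing that \SignGD with $\epsilon = 0$ run on $\mathcal{D}^{(U)}$ drives the hidden-layer weight matrix to a configuration whose rows are all parallel to a single direction in the plane $\mathrm{span}\{e_1,e_2\}$, collapsing~\eqref{eq:og-two-layer-objective} to a linear classifier with normal $e_1+e_2$ --- i.e., decision boundary $\{x : x_1 = -x_2\}$.

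The first thing I would isolate is the algebraic obstruction that makes a rotation matter. Writing $\loss^{(U)}$ for the training objective on $\mathcal{D}^{(U)}$ and $\loss$ for the one on $\mathcal{D}$, the change of variables in~\eqref{eq:def-rotated-dist} gives $\loss^{(U)}(W) = \loss(WU)$, hence $\grad \loss^{(U)}(W) = \grad \loss(WU)\,U^\T$. For \GD this is a clean reduction: the substitution $V_t \defeq W_t U$ turns \GD on $\loss^{(U)}$ into \GD on $\loss$, so from a rotation-invariant initialization \GD simply applies $U$ to the boundary it would have learned on $\mathcal{D}$ --- in particular it keeps that boundary linear. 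For \SignGD the substitution fails, since the coordinate-wise $\sign$ does not commute with right multiplication by $U^\T$ for generic $U$; this non-commutation is the lever to pull. (The same identity shows \SignGD \emph{is} equivariant under a signed permutation $U$, because then $\sign(MU^\T) = \sign(M)U^\T$; this is what lets us replace $\mathcal{D}$ by a signed-permuted copy without affecting the first claim, and we will use that freedom to ensure the collapse lands on $e_1+e_2$ rather than $e_1-e_2$, which is exactly where the dependence of $\mathcal D$ on $U$ enters.)

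The heart of the proof is a direct analysis of the recursion $W_{t+1} = W_t - \eta\,\sign\!\big(\grad \loss^{(U)}(W_t)\big)$ from~\eqref{eq:def-signGD}. Since $\mathcal{D}$ --- hence $\mathcal{D}^{(U)}$ in the coordinates defining $\mathcal D$ --- is supported on $\mathrm{span}\{e_1,e_2\}$, every entry of $\grad \loss^{(U)}(W)$ in columns $3,\dots,d$ vanishes identically, so those columns of $W_t$ never move and the dynamics is effectively two-dimensional per row. Within the active block, $\partial_{W_{kj}}\loss^{(U)}(W)$ equals, up to the fixed factor $a_k$ and a single-signed loss-derivative weight, a sum over the atoms of $\mathcal{D}^{(U)}$ of $y_i\,\indic(\ip{W_{k,:},x_i}>0)\,(x_i)_j$. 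Mirroring the trajectory analysis of~\citet{vasudeva2025rich}, I would track along the \SignGD iterates (i) each hidden unit's \ReLU activation pattern $\{i : \ip{W_{k,:},x_i}>0\}$ and (ii) the sign of each active gradient coordinate, and prove that both stabilize, with the stabilized sign satisfying $\sign(\partial_{W_{k1}}\loss^{(U)}) = \sign(\partial_{W_{k2}}\loss^{(U)})$ and --- crucially --- pointing every row of $W_t$ toward the same line $\mathrm{span}\{e_1+e_2\}$ (the $\sign(a_k)$ factor only flips which side of that line a unit contributes to). Granting this, $W_t$ eventually has rows $-c^{(t)}_k(e_1+e_2)$, so $f(W_t;x) = a^\T\ReLU(W_t x)$ depends on $x$ only through $x_1+x_2$ via the piecewise-linear map $A^{(t)}_+\ReLU(x_1+x_2) + A^{(t)}_-\ReLU(-(x_1+x_2))$; choosing the fixed outer weights $a$ so that $A^{(t)}_+$ and $A^{(t)}_-$ have opposite signs --- automatic once $(\sign c^{(t)}_k)$ is a fixed multiple of $(\sign a_k)$ and $a$ has entries of both signs --- makes this map change sign exactly at $x_1+x_2 = 0$, yielding the claimed boundary for all sufficiently large $t$. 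This is in sharp contrast with the axis-aligned case behind the first claim, where the analogous analysis shows the units \emph{specialize} to $e_1$ versus $e_2$ and the boundary stays nonlinear.

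The hard part will be the stabilization-and-collapse step, and I expect two pieces to be delicate. First, one must show that no atom of $\mathcal{D}^{(U)}$ ever crosses a hidden unit's hyperplane and that the loss-derivative weights keep a consistent sign, so that activation patterns --- and hence the gradient sign pattern --- are frozen along the whole trajectory; this needs a quantitative relation between $\eta$, the initialization scale, and the margins/angles of $\mathcal{D}^{(U)}$, and is where the informal ``any nontrivial $U$'' must be refined to a genericity condition on the rotation (almost all angles, excluding a discrete set of critical angles at which the collapsed direction would flip to $e_1-e_2$). Second, and most importantly, one must show that the two groups of units --- those with $a_k>0$ and those with $a_k<0$ --- collapse onto the \emph{same} line, since otherwise $f(W_t;\cdot)$ would be a genuine piecewise-linear function of two distinct linear forms and the boundary would not be linear; establishing this uniform collapse is precisely what quantifies the loss of ``richness'' under rotation. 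A final, routine obstacle is choosing $\mathcal{D}$ as a function of $U$ --- a scaled, signed-permuted copy of Vasudeva's distribution, possibly after reducing a general $U \in O(d)$ to a planar rotation --- so that both claims hold simultaneously.
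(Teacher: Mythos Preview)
Your overall target is right --- show each row of $W_t$ drifts toward $a_k[1,1]^\T$ --- but the route you sketch diverges from the paper's in a way that makes you work much harder than necessary, and rests on a mischaracterization of $\mathcal{D}$.

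First, $\mathcal{D}$ is a \emph{continuous} Gaussian mixture (cf.~\eqref{eq:data-distribution}), not a discrete set of atoms, and the training objective is the \emph{population} linear correlation loss~\eqref{eq:population_loss}. The gradient is therefore an expectation, not a finite sum over points $(x_i,y_i)$; there is no finite ``activation pattern'' $\{i:\ip{W_{k,:},x_i}>0\}$ to track, and the ``loss-derivative weight'' is just the constant $-y$. The paper instead computes the population gradient in closed form via truncated-Gaussian moments (\cref{lemma:grads-under-rotation}): each row-gradient is an explicit linear combination of the rotated cluster means $U\mu_+,U\mu_\pm,U\mu_-$ and the normalized row $\bar w_k$, with coefficients built from Gaussian CDFs and Mills ratios evaluated at $\sigma^{-1}\ip{\bar w_k, U\mu_\alpha}$.

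Second --- and this is the key simplification you miss --- the paper shows (\cref{lemma:update-for-any-deg-rotation}) that under \cref{assumption:general-rotation-omega-mu-sigma-relationship} the sign of $\nabla_{W_{k,:}}\loss^{(\gamma)}(W)$ equals $-a_k[1,1]^\T$ \emph{for every} $W\in\R^{m\times 2}$. There is no stabilization, no trajectory analysis, no relation between $\eta$ and the initialization scale: the gradient sign is globally constant, so \SignGD simply adds $\eta a_k[1,1]^\T$ to row $k$ at every step, and $(W_t)_{k,:}/\norm{(W_t)_{k,:}}\to a_k[1,1]^\T/\sqrt{2}$ follows immediately. The uniform sign is obtained by showing that the dominant term --- proportional to $\frac{\mu}{\sigma}(p_++p_\pm+2p_-)\,\omega$, with $p_++p_\pm+2p_-$ bounded below uniformly in $\bar w_k$ via \cref{lemma:cmu-defn} --- overwhelms all other terms in each gradient coordinate; this is precisely where the static parameter inequality in \cref{assumption:general-rotation-omega-mu-sigma-relationship} enters.

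So both items you flag as the ``hard part'' dissolve: activation patterns never need to freeze because the gradient sign does not depend on them, and the two $a_k$-groups collapse onto the same line automatically because the closed-form gradient is $-a_k$ times a vector with both entries positive. Your instinct about genericity of $U$ is partly right --- the paper restricts to $\gamma\in(0,\pi/4]$ (extendable by symmetry to $(0,2\pi)\setminus\{k\pi/2\}_k$) and requires $\omega$ and $\mu/\sigma$ large enough as a function of $\gamma$ --- but this enters as a one-line inequality on the distribution parameters, not as a condition on the optimization trajectory.
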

\Cref{item:informal:nonlinear} in~\cref{thm:general-rotation-linear-boundary} is precisely
Theorem 2 from~\citet{vasudeva2025rich}, which shows a separation between the implicit bias of
\adam and gradient descent: namely, that
\begin{displayquote}
\itshape
    ``\GD exhibits a simplicity bias, resulting in a linear decision boundary with a suboptimal margin, whereas \adam leads to much richer and more diverse features, producing a nonlinear boundary that is closer to the Bayes optimal predictor.''
\end{displayquote}
In turn, \cref{item:informal:linear} in~\cref{thm:general-rotation-linear-boundary} complements the findings of~\citet{vasudeva2025rich} by showing that
for any---potentially very small---rotation, there exist binary classification instances wherein
using \adam to learn a ReLU classifier from the rotated input data $\mathcal{D}^{(U)}$ not only learns a \emph{linear} decision boundary, but the boundary itself possesses \emph{worse} generalization properties than the linear boundary learned by gradient descent (which is invariant to rotations).
Taken together,~\cref{thm:informal-rotation-boundary} and~\cite[Theorem 2]{vasudeva2025rich} imply 
that the implicit bias of adaptive algorithms is highly sensitive to even small perturbations of the training data, as illustrated by comparing Figure~\ref{fig:OG_gamma=0} and Figure~\ref{fig:OG_gamma=pi_over_32}.

Shifting our focus towards algorithmic remedies for this sensitivity, we proceed
to show how to systematically ``reorient'' the objective in data-driven manner by means of a suitable transformation in parameter (rather than data) space.
The method, dubbed \emph{EGOP-reparameterization}, amounts to an orthonormal change of basis derived from the eigenvectors
of the \emph{expected gradient outer product} (EGOP) matrix.
\begin{definition}[EGOP]
    \label{def:egop-matrix}
    Consider a differentiable loss function $\loss: \R^p \to \R$ and a sampling distribution
    $\rho$. Then the \emph{EGOP matrix} of $\loss$ with respect to $\rho$ is defined as:
    \begin{equation}
        \label{eq:egop-def}
        \mathrm{EGOP}_{\rho}(\loss) \defeq
        \mathbb{E}_{\theta \sim \rho}[\grad \loss(\theta) \grad \loss(\theta)^{\T}].
    \end{equation}
\end{definition}
The reparameterization method, proposed in~\cite{depavia2025fasteradaptiveoptimizationexpected} as
a preprocessing step to accelerate the convergence of adaptive optimization methods,
uses the eigenvector matrix $V \in O(p)$ of $\mathrm{EGOP}_{\rho}$ to define an linearly transformed\footnote{Since the learnable parameter in~\eqref{eq:og-two-layer-objective} is matrix-valued, we calculate the EGOP using gradients with respect to the \emph{flattened} decision variable $\theta \defeq\vecop(W)$.
}
objective
$\widetilde{\loss}(\theta) \defeq (\loss \circ V)(\theta)$
which is then minimized with a first-order method such as \adagrad or \adam. We show that
EGOP-reparameterization \emph{provably} yields rotation-equivariant decision boundaries for \emph{any} first-order method with deterministic updates. For simplicity, we state this result for the binary classification problem introduced above, but as discussed in Section~\ref{sec:EGOP-reparameterization} our results hold for broad classes of objectives, data distributions, and multilayer neural networks.

\begin{theorem}[Informal; see~\cref{thm:EGOP-invariant-decision-boundaries-2d-gamma}]
\label{thm:informal-invariant}
    Consider any deterministic, first-order algorithm $\mathcal{A}$, initial point $\theta_0 \in \mathbb{R}^p$, rotation angle $\gamma \in \R$, corresponding rotation matrix $U(\gamma)$, and data distribution $\mathcal{D}$.
    Let $\theta_{\mathcal{A}}$ denote
    the result of optimizing the linear correlation loss~\eqref{eq:population_loss} with $\mathcal{A}$ starting at $\theta_{0}$ for the
    2-layer ReLU network defined in~\eqref{eq:og-two-layer-objective}, and let
    $\theta^{(\gamma)}_{\mathcal{A}}$ denote the result of optimizing the EGOP-reparameterized objective with $\mathcal{A}$ when the data have
    been rotated by $\gamma$ radians.
    Then under EGOP-reparameterization, $\mathcal{A}$ learns an equivariant decision boundary:
    \begin{equation*}
        \left\{x\in \R^{d} \ \Big|\  f(\theta^{(\gamma)}_{\mathcal{A}}; x) = 0\right\} = \left\{U(\gamma) x \ \Big|\   f(\theta_{\mathcal{A}}; x) = 0\right\}.
    \end{equation*}
\end{theorem}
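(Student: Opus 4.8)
The plan is to track how a rotation of the data distribution propagates through (i) the loss function, (ii) the EGOP matrix and its eigenvector basis, and (iii) the iterates of a deterministic first-order method, and then show these transformations cancel at the level of the decision boundary. Let me set up notation: write $\theta = \vecop(W)$, and observe that for the two-layer ReLU network $f(W;x) = a^\T \ReLU(Wx)$, rotating the input $x \mapsto U x$ is equivalent to the parameter transformation $W \mapsto W U^\T$, i.e. $f(W U^\T; U x) = f(W; x)$. In vectorized form this is a linear orthogonal map $\theta \mapsto R_U \theta$ with $R_U = U \otimes \mathbb{I}_m$ (up to the conventions fixed in the notation section), since $\vecop(W U^\T) = (U \otimes \mathbb{I}_m)\vecop(W)$. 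The key structural fact I would establish first is that the population correlation loss satisfies $\loss_{\mathcal{D}^{(U)}}(\theta) = \loss_{\mathcal{D}}(R_U^\T \theta)$: because the loss~\eqref{eq:population_loss} depends on the data only through $f(W; x)$, and the distribution of $U^\T x$ under $\mathcal{D}^{(U)}$ equals the distribution of $x$ under $\mathcal{D}$.

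Next I would compute how the EGOP transforms. By the chain rule, $\grad \loss_{\mathcal{D}^{(U)}}(\theta) = R_U \grad \loss_{\mathcal{D}}(R_U^\T \theta)$, so under a sampling distribution $\rho$ that is itself $R_U$-equivariant (e.g. isotropic Gaussian, or any rotation-symmetric choice — this is the hypothesis I would want to pin down precisely by checking the definition of $\rho$ used in the formal statement),
\[
\mathrm{EGOP}_\rho(\loss_{\mathcal{D}^{(U)}}) = R_U \, \mathrm{EGOP}_\rho(\loss_{\mathcal{D}}) \, R_U^\T.
\]
Consequently, if $V$ is an eigenvector matrix of $\mathrm{EGOP}_\rho(\loss_{\mathcal{D}})$, then $R_U V$ is an eigenvector matrix of $\mathrm{EGOP}_\rho(\loss_{\mathcal{D}^{(U)}})$ with the same eigenvalues. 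The reparameterized objectives are therefore $\widetilde{\loss}_{\mathcal{D}} = \loss_{\mathcal{D}} \circ V$ and $\widetilde{\loss}_{\mathcal{D}^{(U)}} = \loss_{\mathcal{D}^{(U)}} \circ (R_U V) = \loss_{\mathcal{D}} \circ R_U^\T \circ R_U V = \loss_{\mathcal{D}} \circ V = \widetilde{\loss}_{\mathcal{D}}$. So the two reparameterized problems have \emph{identical} loss landscapes, hence a deterministic first-order method started from the same $\theta_0$ produces the \emph{same} iterate sequence $\widetilde{\theta}_t$ in the reparameterized coordinates. The final step is to translate back: the model trained on rotated data uses parameters $W^{(\gamma)} = \reshape\big( (R_U V)\, \widetilde{\theta}_{\mathcal A} \big)$ while the unrotated one uses $W = \reshape(V \widetilde{\theta}_{\mathcal A})$, and since $\widetilde{\theta}_{\mathcal A}$ is the same vector, $W^{(\gamma)} = W U^\T$ (reading off the Kronecker action). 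Then $f(W^{(\gamma)}; x) = f(W U^\T; x) = f(W; U^\T x)$, so $\{x : f(W^{(\gamma)}; x) = 0\} = \{x : f(W; U^\T x) = 0\} = \{U x : f(W; x) = 0\}$, which is the claimed equivariance.

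The main obstacle, and the part requiring genuine care rather than bookkeeping, is the eigenvector ambiguity: eigenvectors are only defined up to sign (and up to arbitrary rotation within degenerate eigenspaces), so ``$V$'' and ``$R_U V$'' are not canonically tied to each other by the abstract argument above — one must either invoke a deterministic tie-breaking convention in the reparameterization algorithm (Algorithm alg:meta-algorithm-block), or argue that any valid choice $V'$ of eigenbasis for the rotated problem satisfies $V' = R_U V D$ for some block-diagonal orthogonal $D$ commuting with the eigenvalue structure, and then show the first-order method's output is equivariant under that residual symmetry as well. I would handle this by first stating the theorem for the generic case where $\mathrm{EGOP}_\rho(\loss_{\mathcal{D}})$ has distinct eigenvalues and a fixed sign convention (reducing $D$ to a diagonal sign matrix), checking that sign flips of eigenvectors correspond to sign flips of the corresponding coordinates of $\theta_0$ and of every gradient, so a deterministic first-order method commutes with them and the decision boundary is unchanged; then remark that the degenerate case follows by a limiting or block-equivariance argument. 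A secondary point to verify is that the rotation $U(\gamma)$ in the 2D setting acts on feature space in a way compatible with the specific coordinates appearing in~\eqref{eq:og-two-layer-objective} and that the outer weights $a$ (held fixed) do not break the symmetry — which they do not, since the transformation lives entirely in the hidden layer.
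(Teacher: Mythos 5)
Your proposal is correct and follows essentially the same argument as the paper. Both trace the rotation through four stages: (i) a data rotation $x\mapsto Ux$ corresponds to an orthogonal map in parameter space (your $R_U = U\otimes\mathbb{I}_m$ is the transpose of the paper's $Q^{(U)} = U^\T\otimes\mathbb{I}_{m_1}$, reflecting whether one tracks $W\mapsto WU^\T$ or $W\mapsto WU$; either convention works as long as it is applied consistently); (ii) the chain rule conjugates the EGOP matrix by this orthogonal map (Lemma~\ref{lemma:gen-rotational-invariance-fact4}); (iii) the eigenbasis absorbs the conjugation, so the two reparameterized objectives coincide as functions (Lemma~\ref{thm:objective-invariant}); and (iv) a deterministic first-order method run from the same $\theta_0$ on identical objectives produces identical iterates, and unwinding the reparameterization yields the equivariant decision boundary. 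You also correctly flag the eigenvector non-uniqueness as the one genuinely delicate point and sketch the same two resolutions the paper adopts in its appendix on eigenbasis computation: a deterministic tie-breaking convention for the eigenvector routine, and, under distinct eigenvalues, equivariance of $\mathcal{A}$ under coordinate sign flips to handle the residual sign ambiguity. The only stylistic difference is that the paper proves stage (i) by a layer-by-layer induction valid for general feed-forward networks (Lemma~\ref{lemma:gen-rotational-invariance-fact5}), whereas your direct Kronecker computation is specialized to the two-layer case; this is a matter of generality, not correctness, for the informal statement at hand.
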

Figure~\ref{fig:EGOP-array-of-angles} illustrates Theorem~\ref{thm:informal-invariant}. For simplicity, we present Theorem~\ref{thm:informal-invariant} in terms of data \textit{distributions} $\mathcal{D}$, corresponding to optimizing the population loss during training, but we note that an analogous result can be established for finite datasets.

We finally turn our attention towards a short numerical study, whose key aim is to determine whether
the theoretical predictions of Theorem~\ref{thm:informal-rotation-boundary} are still valid when several simplifying assumptions are no longer in force; namely, we examine both \SignGD and \adam (i.e., $\beta_1, \beta_2 \neq 0$), replace all expectations with finite-sample estimates, allow for $\epsilon > 0$, 
incorporate bias terms in the
hidden network layer, and make \emph{all} neural network parameters trainable (see~\cref{sec:empirical_results}). Our numerical results suggest that our core observations about the interplay between data rotations and implicit bias are consistent across configurations.

\begin{figure}[t]
     \centering
     \begin{subfigure}{0.315\textwidth}
         \centering
         \includegraphics[width=\linewidth]{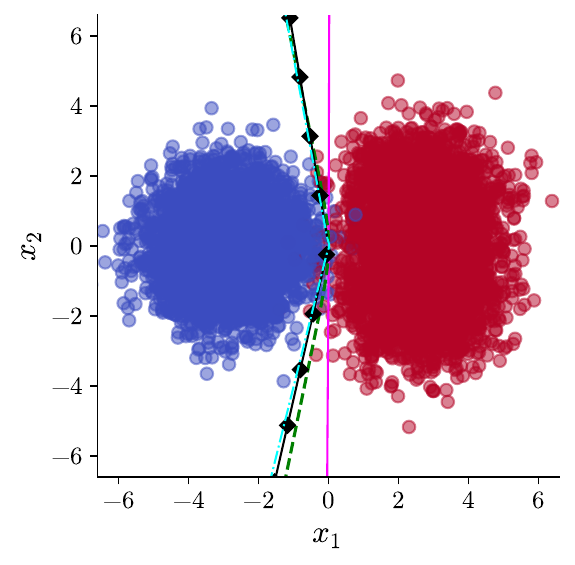}
         \caption{$\gamma = 0$}
         \label{fig:EGOP_gamma=0}
     \end{subfigure}
     \begin{subfigure}{0.35\textwidth}
         \centering
         \includegraphics[width=\linewidth]{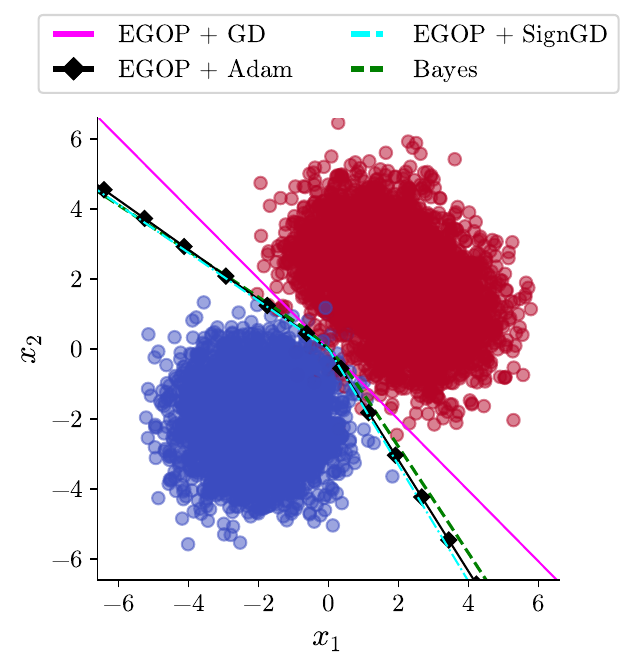}
         \caption{$\gamma = \frac{\pi}{4}$}
         \label{fig:EGOP_gamma=pi_over_4}
     \end{subfigure}
     \begin{subfigure}{0.315\textwidth}
         \centering
         \includegraphics[width=\linewidth]{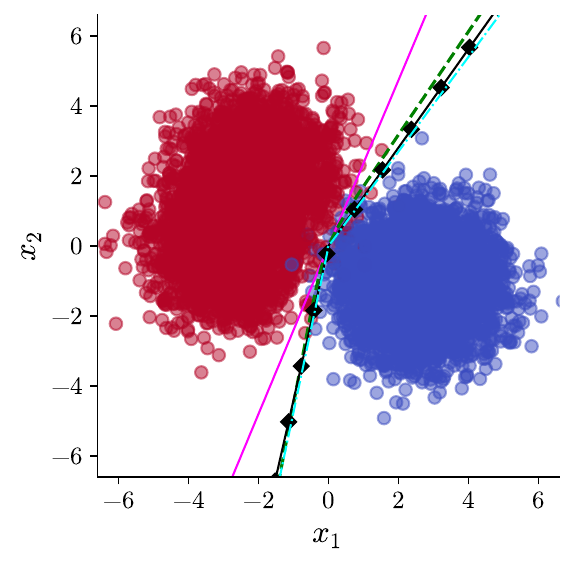}
         \caption{$\gamma = \frac{7\pi}{8}$}\label{fig:EGOP_gamma=7pi_over_8}
     \end{subfigure}
     \caption{Illustration of Theorem~\ref{thm:informal-invariant}. Decision boundaries produced by EGOP-reparameterized \adam and \SignGD are equivariant. As the data distribution rotates by varying angles $\gamma$, the decision boundaries learned by these algorithms also rotate by $\gamma$ radians. For all data rotations, EGOP-reparameterized \adam and \SignGD learn nonlinear decision boundaries that are closer to the Bayes-optimal decision boundary than those produced by gradient descent. Experiment details in Section~\ref{sec:empirical_results}.}
     \label{fig:EGOP-array-of-angles}
\end{figure}

\section{Related works}\label{ssec:related-works}

\paragraph{Implicit bias of \adam versus \SGD.}
Existing work examines the structure of solutions found by \SGD and adaptive methods and their effect on generalization; for example, the work of~\citet{wilson2017marginal} constructs linearly separable binary classification instances where in \adam generalizes poorly relative to gradient descent. \citet{zhang2024implicit} show that \adam---equipped with a suitable learning rate---converges to a maximum $\ell_{\infty}$ margin solution, while \SGD instead maximizes the
$\ell_{2}$ margin~\cite{soudry2018implicit}.
This area of research has produced the \textit{simplicity bias hypothesis}, which posits that gradient descent exhibits a preference for ``simpler'' (often linear) predictors~\cite{perez2019deep, shah2020pitfalls}, which may act as a type of regularization and thus enable generalization on overparameterized problems;
however, this simplicity bias
can also lead to poor generalization when training data contains spurious correlations~\cite{arpit2017closer,shah2020pitfalls,morwani2023simplicity,kalimeris2019sgd}. Closest to ours is the work of~\citet{vasudeva2025rich}, which draws a separation between
\SGD and \adam through the lens of simplicity bias for a stylized binary classification problem. Our paper complements this stream of research by showing that the bias of \adam towards ``richer'' models is contingent on the geometry of the underlying data distribution and not an intrinsic feature of the algorithm.

\paragraph{Rotations in parameter space.}

Several recent works study how rotations in parameter space impact the performance of adaptive optimization algorithms \cite{depavia2025fasteradaptiveoptimizationexpected, maes2024understanding, vyas2024soap,gupta2018shampoo}. \citet{depavia2025fasteradaptiveoptimizationexpected} propose a data-driven method for computing an orthonormal reparameterization, called \textit{EGOP-reparameterization}, and prove that this method improves the worst-case convergence guarantees of adaptive algorithms for various machine learning problems. \citet{maes2024understanding} show that random rotations in parameter space can degrade the performance of adaptive algorithms in training neural networks, and together with~\citet{vyas2024soap,zhao2024galore} propose reparameterization methods to improve them. Our work draws an explicit connection between rotations in data space and 
reparameterization methods (in parameter space), and demonstrates that reparameterization
methods can temper the effect of coordinate system choices on the behavior of algorithms
like \adam.

\paragraph{Coordinate-wise analyses of adaptive algorithms} Motivated by the lack of rotational equvariance, another stream of research studies adaptive algorithms through connections to the $\ell_\infty$ geometry of objective functions \cite{xie2024adam,xie2024implicit,zhang2024implicit}.
Therein, the $\ell_{\infty}$ norm has been identified as an implicit regularizer for \adamw under suitable learning rate schedules~\cite{xie2024implicit}, as well as a non-Euclidean measure of smoothness establishing a separation between the convergence rates of adaptive methods and \SGD~\cite{xie2024adam,liu2024adagrad,jiang2024convergence} that was not achievable with Euclidean
analyses. Our work complements this stream of research by studying how non-invariance impacts
generalization, rather than convergence rates. \citet{zhang2024implicit} study the implicit bias of Adam on linearly separable data and show that Adam converges towards a linear classifier with maximum $\ell_{\infty}$ margin.
In contrast, our work considers non-separable instances where the Bayes-optimal decision boundary is \emph{nonlinear}, showing that the generalization capability of \adam is highly sensitive to data rotations
in this setting.
\section{Decision boundaries under data rotation}\label{sec:decision-boundaries-under-rotation}
In this section, we present our main theoretical result on the implicit bias of
\SignGD under rotation. We start by formalizing the problem setup.

\subsection{Problem setup}\label{sec:problem-setup}
\paragraph{Data distribution.} We first define the data distribution $\mathcal{D}$ over samples $(x,y) \in \R^d \times \set{\pm 1}$ used in~\cite{vasudeva2025rich}. For simplicity, we restrict our attention to distributions in $\R^2$. Samples
from $\mathcal{D}$ are generated according to the following Gaussian mixture:
\begin{equation}\label{eq:data-distribution}
    \left\{ \; \begin{aligned}
    y &\sim \operatorname{Unif}(\{\pm 1\}), \\
    \epsilon &\sim \operatorname{Unif}(\{\pm 1\}),\\
    x_1 &\sim \mathcal{N}\Big(\frac{\mu_1 - \mu_3}{2} + y\frac{\mu_1 + \mu_3}{2}, \sigma^2\Big), \\
    x_2 &\sim \mathcal{N}\Big(\mu_2\epsilon\cdot\frac{y+1}{2}, \sigma^2\Big)
    \end{aligned} \right.
\end{equation}
\Cref{fig:OG_gamma=0} shows $(x,y)$ pairs drawn from this distribution in dimension $d=2$ 
with $\omega = 2(1+\sqrt{2})$, $\mu = 1.15$, and $\sigma = 1.0$. Points are marked red if $y=+1$ and blue if $y=-1$.

We study the problem of training 2-layer ReLU networks belonging to the class defined in \cref{eq:og-two-layer-objective} to predict the label $y$ of a datapoint $x$ drawn from this mixture of Gaussians. The family of models defined in \cref{eq:og-two-layer-objective} does not contain bias terms, and thus can only express decision boundaries which pass through the origin. We make the following \textit{realizability assumption} on the data distribution to ensure that the Bayes-optimal predictor for this dataset also passes through the origin:
\begin{assumption}\label{assumption:realizability}
    The parameters $\mu_1, \mu_2$ and $\mu_3$ in~\eqref{eq:data-distribution} satisfy
    \[
        \mu_1 = \frac{\mu_2}{2} \left(\omega - \frac{1}{\omega}\right) \quad
        \text{and} \quad
        \mu_3 = \frac{\mu_2}{2} \left(\omega + \frac{1}{\omega}\right),
    \]
    where $\omega \defeq \frac{\mu_1 + \mu_3}{\mu_2}$ is a slope parameter. We also
    write $\mu \defeq \mu_2$ for brevity.
\end{assumption}

Intuitively, $\omega$ corresponds to the slope (and negative slope) of the approximately piecewise-linear components of the Bayes-optimal predictor decision boundary, and $\mu$ corresponds to the difference between the means of the clusters with label $+1$. 

Since most of our analysis focuses on the special case $d = 2$, wherein proper rotations
can be identified with scalar values, we use $\gamma$ to denote angles of rotation and let $U(\gamma)$ denote the following rotation matrix:
\begin{equation}\label{eq:def-U-gamma}
    U(\gamma) \defeq \begin{bmatrix*}[r]
        \cos(\gamma) & -\sin(\gamma)\\
        \sin(\gamma) & \cos(\gamma)
    \end{bmatrix*}.
\end{equation}

The transformation $z \mapsto U(\gamma) z$ rotates any 2D vector counter-clockwise by $\gamma$ radians. Accordingly, we write $\mathcal{D}^{(\gamma)}$ for the rotated distribution:
\begin{equation}\label{eq:def-P_gamma}
    \mathcal{D}^{(\gamma)}\big((x,y)\big) \defeq \mathcal{D}\Big((\Ugamma^{\T} x, y)\Big)
\end{equation}
Informally, samples $(x_\gamma,y_\gamma)$ drawn from $\mathcal{D}^{(\gamma)}$ have the same distribution over labels $y$ as samples from $\mathcal{D}$, but compared to samples from $\mathcal{D}$ the data $x_\gamma$ is rotated by $\gamma$ degrees counter-clockwise. \Cref{fig:OG_gamma=pi_over_4} illustrates samples drawn from $\mathcal{D}^{(\pi/4)}$.

\paragraph{Training objective.}
Following \citet{vasudeva2025rich}, we focus on the
\textit{expected linear correlation loss}:
\begin{equation}\label{eq:population_loss}
    \loss^{(\gamma)}(\theta; f) \defeq 
    \mathbb{E}_{(x, y) \sim \mathcal{D}^{(\gamma)}}[
    -y \cdot f(\theta; x)
    ].
\end{equation}
In order to assess the performance of different algorithms, we compare the decision boundaries they produce to the decision boundary of the \textit{Bayes-optimal predictor}, which is defined as
\begin{equation}\label{eq:def-bayes-opt}
    \BayesOpt(\cdot) \defeq \argmax_{f\colon \mathbb{R}^d \to \{\pm 1\}}\mathbb{P}_{(x,y)\sim \mathcal{D}^{(\gamma)}}[f(x) = y \mid x]
\end{equation}
and achieves the minimum classification risk. For \emph{any} distribution $\mathcal{D}$
satisfying \cref{assumption:realizability}, we can explicitly characterize the Bayes-optimal predictor for the setting $\gamma = 0$:
\begin{equation}\label{eq:bayes-opt-our-problem}
    \BayesOpt[0](x) = \begin{cases}
        +1 \quad &\text{if}\quad x_2 \geq -\omega x_1 -\frac{\sigma^2}{\mu} \log \left(\frac{1+\exp(-2\mu x_2/\sigma^2)}{2}\right)\\
        -1 &\text{else.}
    \end{cases}
\end{equation}
The Bayes classifier is equivariant to data rotations: for general $\gamma > 0$, $\BayesOpt$  is a rotation of~\eqref{eq:bayes-opt-our-problem} by
$\gamma$ radians. For the case $\gamma=0$, \eqref{eq:bayes-opt-our-problem} describes a nonlinear decision boundary passing through the origin that approaches a piecewise linear curve in the limit as the signal to noise ratio $\nicefrac{\mu}{\sigma} \to \infty$; \cref{fig:opener} visualizes the
Bayes-optimal decision boundary in dashed green.

\subsection{Rotations change the implicit bias of \SignGD}\label{ssec:learning-with-rotated-data}
We now study the impact of rotations on the decision boundary
learned by \SignGD (\cref{alg:signgd}). The flavor of our result is the following: for
\emph{any} rotation angle {$\gamma \in (0,2\pi)\setminus \set{\nicefrac{k\pi}{2}}_{k \in \mathbb{N}}$}, there exists a slope $\omega$
and parameters $\mu, \sigma$ inducing a distribution $\mathcal{D}^{(\gamma)}$ such that
minimizing~\eqref{eq:population_loss} with \SignGD leads to a \emph{linear} decision
boundary. 
Setting the stage, we quantify the range of parameters under which our results hold:
\begin{assumption}\label{assumption:general-rotation-omega-mu-sigma-relationship}
    Let $S$ denote the signal-to-noise ratio $S := \nicefrac{\mu}{\sigma}$. For a fixed rotation angle $\gamma \in (0,\pi/4]$ and slope parameter $\omega$ satisfying
    \begin{equation}\label{eq:variable-gamma-omega-assumption}
        \omega >\sqrt{2/\cmu}
        \quad \text{for}\quad \cmu \defeq
        \Phi\left(S \cdot \min\set*{
            1, 
            \frac{\omega - \nicefrac{1}{\omega}}{2}
        }\right)
        \in \left[\frac{1}{2}, 1\right]
    \end{equation}
    where $\Phi(\cdot)$ denotes the CDF of the standard Gaussian distribution, we assume $S \in (0, \infty)$ satisfies
    \begin{align}\label{eq:variable-gamma-full-assumption}
       \frac{1}{S} \leq \sqrt{\frac{\pi}{2}}\left( 
       \frac{1}{2}\left(\cmu\omega-\frac{2}{\omega}\right)\sin(\gamma)-\cos(\gamma)\right).
    \end{align}
\end{assumption}
Given a fixed slope parameter $\omega$, Assumption~\ref{assumption:general-rotation-omega-mu-sigma-relationship} requires that $1/S$ must be sufficiently small. As $\gamma\rightarrow 0$, $\sin(\gamma)\rightarrow 0$ and $\cos(\gamma)\rightarrow 1$, so Assumption~\ref{assumption:general-rotation-omega-mu-sigma-relationship} becomes more restrictive for small $\gamma$. We note that \citet{vasudeva2025rich} make related assumptions in their setting, which corresponds to $\gamma=0$: they require Assumption~\ref{assumption:realizability} to
hold, as well as 
\[
    \omega \geq 1+\sqrt{2}\approx 2.41, \quad \frac{2}{3} \leq \frac{1}{S} \leq \frac{5}{4}.
\]
In \cref{fig:comparing-decision-boundaries}, we visualize results for $\omega = 2(1+\sqrt{2})$, $\sigma = 1$, and $\mu = 1.15$; these parameter settings simultaneously satisfy the assumptions of \citet{vasudeva2025rich} and Assumption~\ref{assumption:general-rotation-omega-mu-sigma-relationship} for the rotation $\gamma = \pi/4$.

With~\cref{assumption:general-rotation-omega-mu-sigma-relationship} in place, we characterize the classifier found by $\SignGD$ for $\gamma > 0$. In the following, we show this for $\gamma \in (0,\pi/4]$ for simplicity of presentation, but the analysis extends trivially to other rotations {$\gamma \in (0,2\pi)\setminus \set{\nicefrac{k\pi}{2}}_{k \in \mathbb{N}}$}.

\begin{theorem}\label{thm:general-rotation-linear-boundary}
    Fix $\gamma\in (0,\pi/4]$ and consider $f$ as in~\eqref{eq:og-two-layer-objective}
    with hidden size $m$ and fixed outer layer weights $a \in \{\pm 1\}^m$. Let $W_{t}$ denote the $t^{\text{th}}$ iterate produced
    by \SignGD with $\epsilon = 0$ and any $\eta > 0$ applied to~\eqref{eq:population_loss}. Then if~\cref{assumption:realizability,assumption:general-rotation-omega-mu-sigma-relationship} are in force, we have that
    \begin{equation}
        \lim_{t \to \infty} \frac{(W_t)_{k, :}}{\norm{(W_t)_{k, :}}}
        = 
            \frac{a_k}{\sqrt{2}} [
                1, 1
            ]^{\T},
        \quad \text{for all $k \in \set{1, \dots, m}$.}
        \label{eq:normalized-solution-at-infinity}
    \end{equation}
    In particular, the solution described in~\eqref{eq:normalized-solution-at-infinity}
    yields the following classification rule
    \begin{equation}
        \lim_{t \to \infty} \sign(f(W_{t}; x)) = \sign(x_1 + x_2),
    \end{equation}
    whose decision boundary is equal to the subspace $\set{x \mid x_1 = -x_2}$.
\end{theorem}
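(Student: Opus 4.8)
The plan is to track the dynamics of \SignGD on the population linear correlation loss for the rotated distribution $\mathcal{D}^{(\gamma)}$ and show that every row of $W_t$ is pushed toward the direction $\pm(1,1)/\sqrt 2$. The first step is to compute the gradient of $\loss^{(\gamma)}$ with respect to $W$. Since $f(W;x) = a^\T \ReLU(Wx)$ and the loss is linear in $f$, we get
\begin{equation*}
    \grad_{W} \loss^{(\gamma)}(W) = -\,\mathbb{E}_{(x,y)\sim\mathcal{D}^{(\gamma)}}\big[ y\, a \odot \indic(Wx > 0)\, x^\T \big],
\end{equation*}
so that row $k$ has gradient $-a_k\, \mathbb{E}[y\, \indic(\ip{(W)_{k,:}, x} > 0)\, x]$. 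The key structural observation is that this decouples across rows: the update to row $k$ depends only on the current direction of row $k$ (through the ReLU indicator) and on $a_k$. Because $a_k \in \{\pm1\}$, there are really only two coupled sign patterns to analyze, and the \SignGD update $(W_{t+1})_{k,:} = (W_t)_{k,:} - \eta\, \sign\!\big(\grad_{(W)_{k,:}}\loss^{(\gamma)}\big)$ moves each row by a fixed step in one of four diagonal directions $\{\pm1\}^2 \cdot \eta$.

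**Next I would** compute, for a generic direction $w = (w_1, w_2)$, the conditional expectation $g(w) \defeq \mathbb{E}_{(x,y)\sim\mathcal{D}^{(\gamma)}}[y\,\indic(\ip{w,x}>0)\, x] \in \R^2$, which governs the sign pattern of the gradient. Using $\mathcal{D}^{(\gamma)}((x,y)) = \mathcal{D}((U(\gamma)^\T x, y))$, this is $U(\gamma)\, \mathbb{E}_{(z,y)\sim\mathcal{D}}[y\,\indic(\ip{U(\gamma)^\T w, z}>0)\, z]$, reducing everything to integrals against the original Gaussian-mixture $\mathcal{D}$ in~\eqref{eq:data-distribution}, which are tractable: conditioning on $y$ and $\epsilon$, each coordinate is Gaussian, so the relevant quantities are expectations of the form $\mathbb{E}[\indic(\ip{v,z}>0)z]$ for a Gaussian $z$, expressible via the standard Gaussian CDF $\Phi$ and density — this is where \cref{assumption:general-rotation-omega-mu-sigma-relationship} enters, ensuring the resulting signs come out the "right" way. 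The goal of this computation is to show that whenever a row $w$ points into (say) the half-plane $\{w_1 + w_2 > 0\}$ but is not yet aligned with $(1,1)$, the sign of $-a_k g(w)$ is such that the \SignGD step rotates $w$ toward the diagonal, and once $w$ is (approximately) aligned with $a_k(1,1)$, the update keeps it in a bounded neighborhood of that ray while its norm grows linearly — so the \emph{normalized} row converges to $a_k(1,1)/\sqrt2$. One must also rule out the measure-zero-in-initialization edge cases where a row sits exactly on an axis or exactly on the anti-diagonal; the excluded rotation angles $\{k\pi/2\}$ are presumably what guarantees these sign computations are non-degenerate.

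**The main obstacle** I anticipate is the case analysis over the sign pattern of $w$ together with making the ``attraction to the diagonal'' argument fully rigorous as a discrete dynamical system: \SignGD takes fixed-size steps, so a row near the diagonal does not converge to a point but oscillates within an $O(\eta)$ box, and one has to argue carefully that (i) from any starting quadrant the iterate reaches the attracting region in finitely many steps, (ii) it cannot escape, and (iii) the norm $\norm{(W_t)_{k,:}}$ grows without bound (so that the $O(\eta)$ oscillation becomes negligible after normalization, yielding the stated limit). The sign-of-gradient computations themselves are elementary but delicate — the thresholds in~\eqref{eq:variable-gamma-omega-assumption} and~\eqref{eq:variable-gamma-full-assumption} are exactly the conditions under which the four conditional-expectation integrals have the signs needed for the diagonal to be the unique attractor — so I would organize the proof as: (1) gradient formula and row decoupling; (2) a lemma computing $\sign(g(w))$ on each relevant region of directions under \cref{assumption:general-rotation-omega-mu-sigma-relationship}; (3) a discrete-dynamics lemma showing the diagonal ray is globally attracting with linearly growing norm; (4) assembling these to get~\eqref{eq:normalized-solution-at-infinity} and reading off the decision boundary $\sign(a_k)\,a_k(x_1+x_2) = \sign(x_1+x_2)$ summed over $k$.
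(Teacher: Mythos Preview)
Your setup is right---the row-decoupling, the reduction via $U(\gamma)$ to integrals against the unrotated mixture, and the role of the Gaussian CDF are all exactly how the paper proceeds (its Lemma~\ref{lemma:grads-under-rotation}). But you have overestimated the difficulty of the dynamics. The paper's key observation (Lemma~\ref{lemma:update-for-any-deg-rotation}) is that under Assumptions~\ref{assumption:realizability} and~\ref{assumption:general-rotation-omega-mu-sigma-relationship} the sign of the gradient is \emph{globally constant}: for \emph{every} $W\in\R^{m\times 2}$,
\[
    \sign\big(\grad_{W_{k,:}} \loss^{(\gamma)}(W)\big) = -a_k\,[1,1]^\T.
\]
There is no dependence on the current direction of $w_k$ at all. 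Once you have this, the \SignGD iterates satisfy $(W_t)_{k,:} = (W_0)_{k,:} + \eta\,t\,a_k\,[1,1]^\T$ exactly, and the normalized limit~\eqref{eq:normalized-solution-at-infinity} follows immediately by dividing through by $\|(W_t)_{k,:}\|\sim \eta t\sqrt 2$.

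So your anticipated ``main obstacle''---the case analysis over quadrants, the attraction-to-the-diagonal argument, the $O(\eta)$ oscillation, the escape analysis---simply does not arise. The thresholds in~\eqref{eq:variable-gamma-full-assumption} are engineered precisely so that the $\omega$-dependent ``mean'' term in each coordinate of the gradient dominates both the $\bar w$-dependent term (whose magnitude is bounded by $2/\sqrt{2\pi}$ uniformly, via the Gaussian density) and the smaller mixture terms, forcing both coordinates of $-a_k^{-1}\grad_{w_k}\loss^{(\gamma)}$ to be strictly positive regardless of $\bar w_k$. If you carry out your step~(2) you will discover this, and your step~(3) then collapses to a one-line limit. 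I would reorganize: aim directly for the uniform sign statement rather than a region-by-region analysis.
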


\begin{figure}[t]
     \begin{subfigure}[c]{0.28\textwidth}
         \centering
         \includegraphics[width=\linewidth]{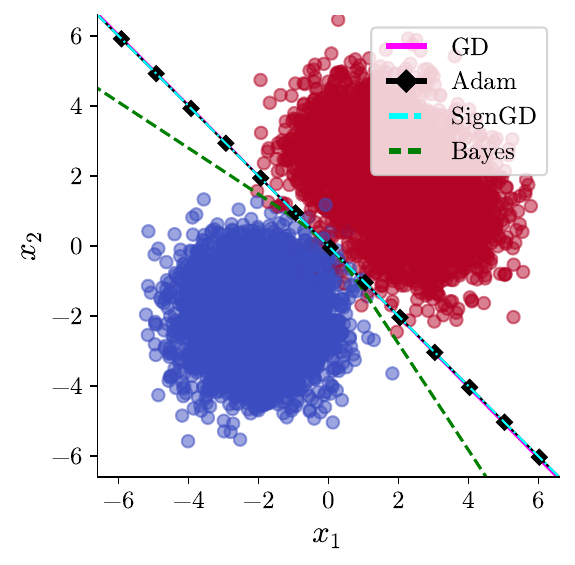}
         \caption{}
         \label{fig:OG_gamma=pi_over_4}
     \end{subfigure}
     \begin{subfigure}[c]{0.72\textwidth}
         \centering
         \includegraphics[width=\linewidth]{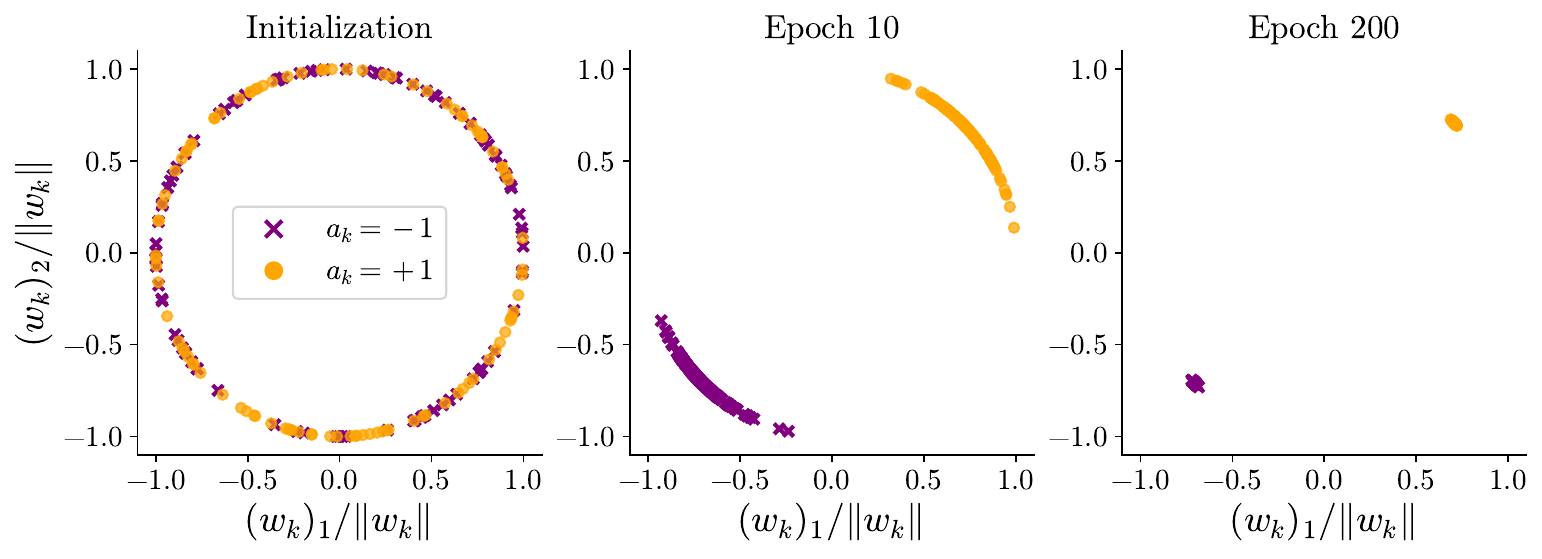}
         \caption{}
         \label{fig:learned_weights}
     \end{subfigure}
     \caption{
     (a) Illustration of Theorem~\ref{thm:general-rotation-linear-boundary}. When $\gamma = \pi/4$, \adam and \SignGD learn a linear decision boundary identical to that learned by gradient descent. Compare with \cref{fig:OG_gamma=0}, which shows that when $\gamma=0$, the decision boundaries produced by \adam and \SignGD are nonlinear and are closer to the Bayes' optimal decision boundary than that learned by GD. (b) For each epoch $t$ pictured, we visualize the weights produced by training with \adam. For every $k\in \{1,\dots,m\}$, we scatter $\nicefrac{w_k}{\norm{w_k}}$ where $w_k \defeq (W_t)_{k,:}$ denotes the $k$th row of weights. As predicted by Theorem~\ref{thm:general-rotation-linear-boundary}, these values rapidly converge to $a_k \cdot \nicefrac{[1, 1]^\T}{ \sqrt{2}}$. Details in Section~\ref{sec:empirical_results}.
     }
     \label{fig:comparing-decision-boundaries}
\end{figure}

Some comments are in order. First, we note that~\cref{thm:general-rotation-linear-boundary} states that \emph{for any rotation} $\gamma \in (0, \pi/4]$, there exist
binary classification instances such that \SignGD converges \emph{to the same linear
decision boundary} $x_1 = -x_2$, independent of $\gamma$. This contrasts with the decision boundary learned by \GD, which is linear but equivariant and thus rotates with the data distribution (see \cref{sec:invariant-opt}).
Second, while~\cref{thm:general-rotation-linear-boundary} holds for the {expected} linear correlation loss, \cref{fig:comparing-decision-boundaries} suggests a similar phenomenon occurs in the finite-sample setting.

Theorem~\ref{thm:general-rotation-linear-boundary} is a direct consequence of Lemma~\ref{lemma:update-for-any-deg-rotation}, which gives a precise characterization of \SignGD updates.
\begin{lemma}\label{lemma:update-for-any-deg-rotation}
    Fix $\gamma \in (0,\pi/4]$, and let~\cref{assumption:realizability,assumption:general-rotation-omega-mu-sigma-relationship} hold. Then for any $W \in \R^{m \times 2}$ the gradient of loss $\loss^{\rot}(\cdot)$ from~\eqref{eq:population_loss} satisfies
    \[
        \sign(\grad_{W_{k, :}} \loss^{\rot}(W; f)) = -a_k \cdot [1, 1]^{\T}.
    \]
\end{lemma}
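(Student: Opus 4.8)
The plan is to reduce the lemma to a single one-dimensional inequality involving the \emph{unrotated} distribution, and then verify that inequality by an explicit Gaussian computation and a short case analysis over the direction $\hat v$ of the relevant weight row. Differentiating the population objective~\eqref{eq:population_loss} (interchanging $\grad$ and $\mathbb{E}$ is justified by dominated convergence, as $\ReLU$ is Lipschitz and $x$ has Gaussian tails) gives, for every $k$ and every $W$ with $W_{k,:}\neq 0$,
\[
  \grad_{W_{k,:}}\loss^{(\gamma)}(W;f)\;=\;-\,a_k\,\mathbb{E}_{(x,y)\sim\mathcal{D}^{(\gamma)}}\!\big[\,y\,\indic(W_{k,:}x>0)\,x\,\big],
\]
using $f(W;x)=\sum_j a_j\ReLU(W_{j,:}x)$ from~\eqref{eq:og-two-layer-objective} and that $\{W_{k,:}x=0\}$ has $\mathcal{D}^{(\gamma)}$-measure zero. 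Since $\mathcal{D}^{(\gamma)}$ is the pushforward of $\mathcal{D}$ under $x\mapsto U(\gamma)x$ (see~\eqref{eq:def-P_gamma}), substituting $x=U(\gamma)x'$ rewrites this as $-a_k\,U(\gamma)\,g(v)$ with $v\defeq U(\gamma)^{\T}W_{k,:}^{\T}$ and $g(v)\defeq\mathbb{E}_{(x,y)\sim\mathcal{D}}[\,y\,\indic(\ip{v,x}>0)\,x\,]$. As $a_k\in\set{\pm1}$ and $\sign(\cdot)$ acts coordinate-wise, $\sign(\grad_{W_{k,:}}\loss^{(\gamma)})=-a_k\,\sign(U(\gamma)g(v))$, and since $W_{k,:}\mapsto v$ is onto $\R^2$, the lemma reduces to showing $U(\gamma)g(v)\in\R^2_{>0}$ for every $v\neq 0$.

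Note $g$ depends only on $\hat v\defeq v/\norm{v}$. For a single Gaussian $x\sim\mathcal{N}(c,\sigma^2\mathbb{I}_2)$, writing $x=c+\sigma z$ and decomposing $z$ along $\hat v$ (its orthogonal part being mean-zero given $\ip{\hat v,z}$, and using $\int_{-t}^{\infty}s\phi(s)\,ds=\phi(t)$) yields $\mathbb{E}[\indic(\ip{\hat v,x}>0)\,x]=c\,\Phi(\ip{\hat v,c}/\sigma)+\sigma\,\phi(\ip{\hat v,c}/\sigma)\,\hat v$, where $\phi$ is the standard normal density. Under \cref{assumption:realizability}, $\mathcal{D}$ is a three-component mixture with label-$(+1)$ mass $\tfrac14$ at each of $c_{\pm}=(\mu_1,\pm\mu)$, $\mu_1=\tfrac\mu2(\omega-\tfrac1\omega)$, and label-$(-1)$ mass $\tfrac12$ at $c_0=(-\mu_3,0)$, $\mu_3=\tfrac\mu2(\omega+\tfrac1\omega)$; writing $\hat v=(\cos\alpha,\sin\alpha)$ and $t_i=\ip{\hat v,c_i}/\sigma$, summing over the components gives
\[
  g(v)\;=\;\tfrac14 c_{+}\Phi(t_{+})+\tfrac14 c_{-}\Phi(t_{-})-\tfrac12 c_{0}\Phi(t_{0})\;+\;\sigma\,\hat v\big(\tfrac14\phi(t_{+})+\tfrac14\phi(t_{-})-\tfrac12\phi(t_{0})\big).
\]
The mixture is symmetric under $x_2\mapsto -x_2$, so $g_1(-\alpha)=g_1(\alpha)$ and $g_2(-\alpha)=-g_2(\alpha)$; pairing $\alpha$ with $-\alpha$ collapses the four sign conditions ``$(U(\gamma)g)_{1,2}>0$ at $\pm\alpha$'' into $\cos\gamma\,g_1>\sin\gamma\,|g_2|$ and $\sin\gamma\,g_1>\cos\gamma\,|g_2|$, and since $\gamma\le\pi/4$ the second implies the first. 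Hence it suffices to prove
\[
  \sin\gamma\;g_1(\alpha)\;>\;\cos\gamma\;|g_2(\alpha)|\qquad\text{for every }\alpha,
\]
which in particular forces $g_1>|g_2|\ge 0$.

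To establish this last inequality I would partition $[0,2\pi)$ into a few arcs according to the signs of $t_+,t_-,t_0$. On the ``leftward'' arcs ($\cos\alpha<0$), the term $-\tfrac12 c_0\Phi(t_0)$ alone makes $g_1$ large and positive (of order $\mu\omega$) while $|g_2|$ stays $O(\mu/\omega)+O(\sigma)$, so the inequality is slack; near $\alpha=0$, $g_2\to 0$. The binding case is $\hat v\approx(0,-1)$: exactly at $\hat v=(0,-1)$ one computes $g_1=\tfrac{\mu\omega}{4}$ (using $\mu_1+\mu_3=\mu\omega$ and $\Phi(-S)+\Phi(S)=1$) and $g_2=-\tfrac\mu4(2\Phi(S)-1)+\tfrac\sigma2(\phi(0)-\phi(S))$. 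Lower-bounding $g_1$ uniformly over a neighborhood of this direction via the estimate $\Phi(t_\pm)\ge\cmu$ (recall $\cmu=\Phi(S\min\{1,(\omega-1/\omega)/2\})$ equals $\Phi(\min\{\mu/\sigma,\mu_1/\sigma\})$) and upper-bounding $|g_2|$ via $2\Phi(S)-1\le 1$ together with the noise bound $\phi\le\phi(0)=\tfrac1{\sqrt{2\pi}}$, the target inequality reduces precisely to~\eqref{eq:variable-gamma-full-assumption}: the coefficient $\tfrac12(\cmu\omega-\tfrac2\omega)$ is the resulting lower bound on $g_1$, the $\cos\gamma$ term absorbs the negative part of $g_2$, the $\tfrac1S$ term comes from the $\sigma\phi(0)$ noise estimate (note $\sqrt{\pi/2}=1/(2\phi(0))$), and~\eqref{eq:variable-gamma-omega-assumption}, i.e.\ $\cmu\omega^2>2$, keeps this lower bound positive. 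With this inequality verified, $U(\gamma)g(v)\in\R^2_{>0}$, and the reduction of the first paragraph gives $\sign(\grad_{W_{k,:}}\loss^{(\gamma)})=-a_k[1,1]^{\T}$, which is \cref{lemma:update-for-any-deg-rotation} and hence, by iterating, \cref{thm:general-rotation-linear-boundary}.

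The main obstacle is entirely in the third paragraph: $g(v)$ is a sum of six mixed-sign terms whose direction sweeps through more than a half-circle, so one must (i) correctly locate the binding direction --- which I believe is $\hat v=(0,-1)$, since moving $\alpha$ from $-\tfrac\pi2$ toward $0$ decreases $|g_2|$ by $\approx\tfrac\mu4$ while decreasing $g_1$ by only $\approx\tfrac{\mu}{4\omega}$ --- and (ii) choose lower/upper bounds on $g_1$ and $|g_2|$ that are simultaneously valid on a whole neighborhood of that direction and tight enough for~\eqref{eq:variable-gamma-full-assumption} to close the gap. Everything else (the reduction, the single-Gaussian formula, the symmetry argument, and the slack arcs) is routine; the content of \cref{assumption:general-rotation-omega-mu-sigma-relationship} is calibrated precisely so that step (ii) succeeds, with the restrictiveness of the assumption as $\gamma\to 0$ (noted after its statement) reflecting that $\sin\gamma\to 0$ kills the only favorable term.
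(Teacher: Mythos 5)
Your outline follows essentially the same route as the paper: derive the gradient explicitly via the single-Gaussian indicator identity (this is the paper's Lemma~\ref{lemma:grads-under-rotation}, up to notation $p_\alpha = \Phi(t_\alpha)$, $\phi(t_\alpha) = p_\alpha\Gamma_\alpha$), and then verify a sufficient sign inequality using the uniform bounds $p_++p_\pm+2p_- \geq \cmu$ (the paper's Lemma~\ref{lemma:cmu-defn}, which is exactly your $\Phi(\min\{\mu/\sigma,\mu_1/\sigma\})$ observation), $|p_+-p_\pm|\leq 1$, and $\phi\leq 1/\sqrt{2\pi}$. Your first two paragraphs are correct, and the $x_2\mapsto -x_2$ symmetry reduction to the single inequality $\sin\gamma\,g_1(\alpha) > \cos\gamma\,|g_2(\alpha)|$ is a clean alternative packaging of what the paper does by bounding both gradient components separately and then observing that for $\gamma\in(0,\pi/4]$ the condition on the second component implies the one on the first.

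The real issue is your third and fourth paragraphs, which you leave unfinished and where you talk yourself into believing a harder problem than you actually face. You do not need to partition $[0,2\pi)$ into arcs by the signs of $t_+,t_\pm,t_-$, locate a binding direction, or argue over a ``neighborhood'' of $\hat v=(0,-1)$. The paper's bounds are \emph{uniform} in $\hat v$: $|p_+-p_\pm|\le 1$, $|p_++p_\pm-2p_-|\le 2$, $|p_+\Gamma_+ + p_\pm\Gamma_\pm - 2p_-\Gamma_-|\le 2/\sqrt{2\pi}$ hold for every direction, and the lower bound $p_++p_\pm+2p_-\ge\cmu$ comes from a single, self-contained two-case split on the sign of $\ip{\hat v, u_1}$ inside Lemma~\ref{lemma:cmu-defn} --- not a multi-arc analysis. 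Plugging these into
\[
  g_1 \;\ge\; \frac{\mu}{8}\Big(\cmu\omega - \frac{2}{\omega}\Big) - \frac{\sigma}{4}\cdot\frac{2}{\sqrt{2\pi}}\,|\bar v_1|,
  \qquad
  |g_2| \;\le\; \frac{\mu}{4} + \frac{\sigma}{4}\cdot\frac{2}{\sqrt{2\pi}}\,|\bar v_2|,
\]
and observing (Cauchy--Schwarz with $\|\bar v\|=1$) that $\sin\gamma\,|\bar v_1| + \cos\gamma\,|\bar v_2| \le 1$ --- or, as the paper does, bounding the single quantity $|\sin\gamma\,\bar v_1 + \cos\gamma\,\bar v_2| = |\bar w_2| \le 1$ without splitting --- makes the target $\sin\gamma\,g_1 > \cos\gamma\,|g_2|$ reduce, after dividing by $\mu/4$, to exactly Assumption~\ref{assumption:general-rotation-omega-mu-sigma-relationship} (with $\sqrt{\pi/2} = 1/(2\phi(0))$ absorbing the $2/\sqrt{2\pi}$). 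So the ``main obstacle'' you flag is not actually there; the calculation closes in one line once you trust the uniform bounds, and the only structural case split is the one buried inside the $\cmu$ lemma. Your symmetry trick is nice, but if you bound $g_1$ and $|g_2|$ separately rather than bounding $\sin\gamma\,g_1+\cos\gamma\,g_2$ as a single quantity, remember you need the Cauchy--Schwarz step to recover the same constant the paper gets for free from $|\bar w_2|\le 1$.

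One more minor remark: your reduction relies on $W_{k,:}\mapsto v$ being onto $\R^2$, i.e.\ you may assume $W_{k,:}\ne 0$ so that $\hat v$ is well-defined; the gradient formula shows dependence only through $\hat v$, and the paper makes the same implicit normalization.
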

For any $k\in [m]$ and any initialization $W_0$, Lemma~\ref{lemma:update-for-any-deg-rotation} implies that the $k^{\text{th}}$ row of the weight matrix after $t$ updates
of \SignGD with stepsize $\eta > 0$ satisfies
\[
    (W_t)_{k,:} = (W_0)_{k,:} - \eta \sum_{\tau = 1}^{t-1}
    \sign(\grad_{W_{k,:}} \loss^{\rot}(W_{\tau}; f)) =
    (W_0)_{k,:} + \eta (t-1) a_k \cdot \begin{bmatrix*}[r] 1 \\ 1 \end{bmatrix*}.
\]
Thus in the limit as $t\rightarrow \infty$, we obtain exactly the conclusion
of~\cref{thm:general-rotation-linear-boundary}:
\[
    \lim_{t\rightarrow \infty}\frac{(W_{t})_{k, :}}{\norm{(W_{t})_{k, :}}}
  = \frac{a_k [1, 1]^{\T}}{\norm{a_k [1, 1]^{\T}}} =
    \frac{a_k}{\sqrt{2}} [1, 1]^{\T}.
\]
Figure~\ref{fig:learned_weights} shows that empirically, these are precisely the weights learned during training.
\section{Implicit bias under EGOP-reparameterization}\label{sec:EGOP-reparameterization}
Invariance of generalization to simple transformations of the input is a desirable feature of learning algorithms,
since the generalization of invariant algorithms is less likely to be sensitive to small perturbations of the data.
In this section, we first give a precise characterization of such invariance with respect to data rotations, using \GD as an example. We then show that \textit{EGOP-reparameterization} makes the generalization of adaptive algorithms invariant to data rotations. For simplicity, in this section we present results in terms of the data distributions and loss functions defined in Section~\ref{sec:decision-boundaries-under-rotation}, but in Section~\ref{ssec:appendix-equivariance} we generalize these results to broad families of data distributions and loss functions, including higher-dimensional settings. 

In this section, we consider an expressive class of neural network architectures, which include as special cases the 2-layer ReLU networks studied in Section~\ref{sec:decision-boundaries-under-rotation}.

\begin{definition}\label{def:multilayer-network}
    A family of feed-forward\footnote{These results can be generalized to other architectures, including convolutional neural networks and residual neural networks with linear projection shortcuts.} networks 
    with $L$ hidden layers, $f:\R^{p}\times \R^d \rightarrow \R$ , is
    \begin{equation}\label{eq:multilayer-network}
    f(\theta; x) \defeq W_{L+1} h_{L}(\theta; x) + b_{L+1} \hspace{.3cm} \text{for}\hspace{.2cm}\begin{cases}
        h_{j}(\theta; x) \defeq \act_{j}(W_{j-1} h_{j-1}(\theta; x) + b_{j-1}) \ \forall j\in \{2,...,L\}\\
        h_1(\theta; x) \defeq x
    \end{cases}
    \end{equation}
    where $\act_{j}(\cdot)$ denotes the activation function for layer $j$, $W_j$ denotes the weights for layer $j$ of dimensions
    \[
        W_1 \in \R^{m_1 \times d}\quad \text{and}\quad W_j \in \R^{m_j \times m_{j-1}} \ \forall j \in \{2,\dots,L+1\},
    \]
    $b_j$ denotes the bias vector for layer $j$ of dimension $b_{j} \in \R^{m_{j}}$, and
    \[
        \theta \defeq \left[\vecop(W_{1}),\dots, \vecop{(W_{L+1})}, b_1,\dots, b_{L+1}\right] \in \R^p. 
    \]
    For a fixed choice of architecture hyperparameters, we denote the corresponding family of networks by $\mathcal{F}\left( d, L, \{m_j\}_{j=1}^{L+1}, \{\act_j\}_{j=2}^{L+1}\right)$, which we shorten to $\mathcal{F}$ where appropriate.
\end{definition}

For this general class of network architectures, \emph{rotations in data space induce rotations in parameter space}, as formalized by the following lemma, whose proof we defer to Section~\ref{ssec:appendix-equivariance}.
\begin{lemma}\label{lemma:generalized-rotations-in-data-are-rotations-in-param-gamma}
    Consider any $\mathcal{F}$ satisfying Definition~\ref{def:multilayer-network}, and any $f\in \mathcal{F}$. Consider any data rotation angle $\gamma \in \R$, and recall the corresponding orthogonal matrix $U(\gamma)$ defined in \cref{eq:def-U-gamma}. Then there exists a $Q^{(\gamma)} \in O(p)$ such that
    \[
        f(\theta; U(\gamma)x) = f(Q^{(\gamma)}\theta; x).
    \]
    See \cref{eq:def-Q-U-gamma} for the explicit construction of $Q^{(\gamma)}$.
\end{lemma}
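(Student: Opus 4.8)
The plan is to exploit the fact that, in the architecture of Definition~\ref{def:multilayer-network}, the input $x$ enters the network only through the first affine map $x \mapsto W_1 x + b_1$; every subsequent hidden representation $h_j$ depends on $x$ solely through this quantity. Consequently, feeding in $U(\gamma) x$ in place of $x$ has exactly the same effect as replacing the first-layer weight matrix $W_1$ by $W_1 U(\gamma)$ while leaving all other weights and biases untouched. Since $U(\gamma)$ is orthogonal, and right-multiplication of $W_1$ by an orthogonal matrix becomes, in column-major vectorized coordinates, left-multiplication by a Kronecker product of orthogonal matrices, this reweighting is realized by an orthogonal transformation of $\theta$.

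Concretely, I would define $Q^{(\gamma)} \in \R^{p \times p}$ to be the block-diagonal matrix that acts as $U(\gamma)^{\T} \otimes \mathbb{I}_{m_1}$ on the $\vecop(W_1)$ block of $\theta$ and as the identity on every remaining block (the blocks $\vecop(W_2),\dots,\vecop(W_{L+1})$ and $b_1,\dots,b_{L+1}$); this is the matrix referenced as \cref{eq:def-Q-U-gamma}. Using the vectorization identity $\vecop(\mathbb{I}_{m_1} W_1 U(\gamma)) = \bigl(U(\gamma)^{\T} \otimes \mathbb{I}_{m_1}\bigr)\vecop(W_1)$, one checks that $\theta' \defeq Q^{(\gamma)}\theta$ has first-layer weights $W_1' = W_1 U(\gamma)$ and all other parameters equal to those of $\theta$. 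Orthogonality of $Q^{(\gamma)}$ follows because a Kronecker product of orthogonal matrices is orthogonal (so $U(\gamma)^{\T} \otimes \mathbb{I}_{m_1} \in O(m_1 d)$) and a block-diagonal matrix with orthogonal diagonal blocks is orthogonal; hence $Q^{(\gamma)} \in O(p)$.

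The claimed identity then follows by a short induction on the layer index $j$. For the base case, $h_2(\theta'; x) = \act_2(W_1' x + b_1) = \act_2(W_1 U(\gamma) x + b_1) = h_2(\theta; U(\gamma) x)$. For the inductive step, for $j \geq 3$ the representation $h_j$ depends on $\theta'$ only through $W_{j-1}'$, $b_{j-1}'$, and $h_{j-1}(\theta'; \cdot)$; since $W_{j-1}' = W_{j-1}$ and $b_{j-1}' = b_{j-1}$, the inductive hypothesis $h_{j-1}(\theta'; x) = h_{j-1}(\theta; U(\gamma) x)$ propagates to $h_j$. Applying the same reasoning one final time at the output layer gives $f(\theta'; x) = W_{L+1} h_L(\theta; U(\gamma) x) + b_{L+1} = f(\theta; U(\gamma) x)$, which is exactly $f(Q^{(\gamma)}\theta; x) = f(\theta; U(\gamma) x)$.

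I do not expect a genuine obstacle here; the only points that require care are bookkeeping ones — fixing the column-major vectorization convention so that $W_1 \mapsto W_1 U(\gamma)$ corresponds to left-multiplication by $U(\gamma)^{\T} \otimes \mathbb{I}_{m_1}$ in $\theta$-coordinates, and correctly indexing the blocks of $\theta$ on which $Q^{(\gamma)}$ acts as the identity. The argument extends verbatim to convolutional and residual architectures with linear shortcut projections, since in those cases too the input is consumed only by the first (affine or convolutional) layer, so the same localized orthogonal reweighting of the first-layer parameters suffices.
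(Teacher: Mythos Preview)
Your proposal is correct and follows essentially the same approach as the paper: both define $Q^{(\gamma)}$ as the block-diagonal matrix with $U(\gamma)^{\T}\otimes\mathbb{I}_{m_1}$ acting on the $\vecop(W_1)$ block and the identity elsewhere, invoke the Kronecker--vectorization identity to convert right-multiplication of $W_1$ by $U(\gamma)$ into left-multiplication of $\theta$ by $Q^{(\gamma)}$, and propagate the resulting equality through the layers by induction. Your write-up even adds the brief verification that $Q^{(\gamma)}\in O(p)$, which the paper leaves implicit.
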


\subsection{Equivariant algorithms: a case study with gradient descent}
\label{sec:invariant-opt}

The classic algorithms \GD and \GD with momentum are equivariant to rotations of the data distribution: fix any family of networks satisfying Definition~\ref{def:multilayer-network}, any data rotation angle $\gamma$, any number of time steps $T$, and any initialization $\theta_0 \in \R^{p}$. Let $\theta_{\GD}^{(0)}$ denote the result of minimizing $\loss^{(0)}(\cdot, \cdot)$ with \GD
initialized at $\theta_0$,
and let $\theta_{\GD}^{(\gamma)}$ denote the result
of minimizing $\loss^{(\gamma)}(\cdot, \cdot)$ with $\GD$ initialized at $(Q^{(\gamma)})^{\T} \theta_{0}$,
for $Q^{(\gamma)}$ as in~\cref{lemma:generalized-rotations-in-data-are-rotations-in-param-gamma},
for $T$ timesteps.
Then for any $\gamma\in \R$, the following hold:
\begin{enumerate}
    \item \label{item:sgd-equivalence} The iterates of \GD in both settings are equivalent up to rotation:
    \[
        \theta^{(\gamma)}_{\GD} = (Q^{(\gamma)})^\T \theta^{(0)}_{\GD}.
    \]
    \item \label{item:net-invariance}
    The resultant trained neural networks have equivariant forward maps:
    \[
        f\left(\theta^{(\gamma)}_{\GD}; x \right) =
        f\left(\theta^{(0)}_{\GD}; U(\gamma)^\T x \right),
    \]
    where $U(\gamma)$ is defined in \cref{eq:def-U-gamma}. The above immediately implies equivariance of the decision boundaries:
    \[
        \left\{x\in \R^{d} \ \Big|\  f(\theta^{(\gamma)}_{\GD}; x) = 0\right\} = \left\{U(\gamma) x \ \Big|\   f(\theta^{(0)}_{\GD}; x) = 0\right\}.
    \]
\end{enumerate}
These facts imply that the generalization of decision boundaries produced by training with \GD is \textit{invariant} to rotations of the data distribution. For example, one consequence of fact~\eqref{item:net-invariance} is that rotations to the data distribution cannot change the linearity/nonlinearity of decision boundaries produced by training with \GD. The above facts also hold for \GD with momentum.

\paragraph{Equivariance of stochastic methods} The preceding discussion focuses on \GD and \GD with momentum for simplicity.
However, similar properties hold for stochastic first-order oracles as long as the samples used to produce stochastic gradient estimates of $f(\theta^{(\gamma)}_{\GD}; \cdot)$ and $f(\theta^{(0)}_{\GD}; \cdot)$ are equivalent up to rotation by $\gamma$ degrees.

\subsection{EGOP implicit bias is invariant to data rotation}

As shown in \cref{sec:decision-boundaries-under-rotation}, adaptive gradient methods do not share the equivariance of \GD articulated in \cref{sec:invariant-opt}, and are sensitive to even small rotations of the data distribution. We now show that, under EGOP-reparameterization, the implicit biases of \adam, \SignGD,
and other adaptive first-order algorithms become invariant to rotations of the underlying data distribution.

Given a target loss function $\mathcal{L}(\theta)$, EGOP-reparameterization computes the eigenvector matrix $V\in O(p)$ of the matrix  $\mathrm{EGOP}_{\rho}$, defined in ~\cref{def:egop-matrix}.
The procedure then defines a reparameterization objective $\widetilde{\loss}(\theta) \defeq (\loss \circ V)(\theta)$ which is minimized with a first-order method such as \adagrad or \adam. Pseudocode for EGOP-reparameterization is provided in \cref{alg:meta-algorithm-block}.

We assume that the EGOP sampling distribution $\rho(\cdot)$ is \textit{rotationally invariant}. Many common distributions, including scaled Gaussians, satisfy this assumption. 
\begin{assumption}\label{assumption:rotational-invariance}
    The sampling distribution $\rho(\cdot)$ in~\cref{def:egop-matrix} is rotationally-invariant: for any $\theta\in \R^p$ and any $V\in O(p)$,
    $
        \rho(\theta) = \rho(V\theta).
    $
\end{assumption}

We analyze the impact of data rotations on the solutions produced by EGOP reparameterization. Fix any family of networks satisfying Definition~\ref{def:multilayer-network}. For any rotation angle $\gamma \in \R$, recall the rotated data distribution $\mathcal{D}^{(\gamma)}$ and corresponding population loss $\loss^{(\gamma)}(\theta; f)$ defined in Section~\ref{sec:decision-boundaries-under-rotation} (cf. \cref{eq:def-P_gamma,eq:population_loss}). We write $V_{\gamma} \in O(p)$ for the eigenbasis of the EGOP matrix for the problem setting under data rotation $\gamma$:
\begin{equation}\label{eq:EGOP-eigenbasis-gamma}
    V_{\gamma} \defeq \mathtt{eigenvectors}(\mathrm{EGOP}_{\rho}(\mathcal{L}^{(\gamma)}(\cdot\,; f))).
\end{equation}

We define the EGOP-reparameterized network $f_{\gamma}(\cdot\,; \cdot)$ as the forward map parameterized by the eigenbasis $V_\gamma$: 
\begin{equation}\label{eq:def-EGOP-forward-map-gamma}
    f_{\gamma}(\theta; x) \defeq f(V_{\gamma} \theta; x).
\end{equation}
The EGOP-reparameterized network is trained by optimizing the reparameterized loss $\mathcal{L}^{(\gamma)}(\theta; f_{\gamma})$, the instantiation of \cref{eq:population_loss} with EGOP-reparameterized forward map $f_{\gamma}$.

We consider the class of \textit{deterministic, first-order} algorithms. These are algorithms which, given an objective $\mathcal{L}(\cdot)$, generate iterates according to
\begin{equation}\label{eq:deterministic-update}
    \theta_{t+1} = \mathtt{update}\left(\{\theta_\tau\}_{\tau\leq t}, \{\nabla_{\theta} \mathcal{L}(\theta_{\tau})\}_{\tau \leq t}\right),
\end{equation}
where $\mathtt{update}(\cdot,\cdot)$ is a deterministic function.

We prove that EGOP reparameterization endows such algorithms with equivariance to rotations in data space.
\begin{theorem}\label{thm:EGOP-invariant-decision-boundaries-2d-gamma}
    Consider any family of networks satisfying Definition~\ref{def:multilayer-network}, any deterministic, first-order algorithm $\mathcal{A}$, and any EGOP sampling distribution $\rho$ satisfying \cref{assumption:rotational-invariance}. For any $\gamma\in \R$, let $\theta^{(\gamma)}_T$ denote the result of optimizing the EGOP-reparameterized objective $\mathcal{L}^{(\gamma)}(\cdot; f_{\gamma})$ with $\mathcal{A}$ for $T$ timesteps, given initialization $\theta_0$. Then the resultant forward map defined in \cref{eq:def-EGOP-forward-map-gamma} is \emph{equivariant} under rotations of the data:
    \[
        f_{\gamma}\left(\theta^{(\gamma)}_t; x\right) = f_{0}\left(\theta^{(0)}_t; U(\gamma)^\T x\right)\quad   \forall t\in \{1,\dots,T\}
    \]
    where $U(\gamma)$ is as-defined in \cref{eq:def-U-gamma}. Equivariance of the forward map immediately implies equivariance of the decision boundary. Moreover, the iterates themselves are \emph{invariant} under rotations of the data distribution: for any $t\in \{1,\dots, T\}$, we have that $\theta^{(\gamma)}_t=\theta^{(0)}_t$.
\end{theorem}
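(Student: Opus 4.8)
The plan is to establish the invariance of the iterates, $\theta^{(\gamma)}_t = \theta^{(0)}_t$ for all $t$, and then derive equivariance of the forward map as a corollary using Lemma~\ref{lemma:generalized-rotations-in-data-are-rotations-in-param-gamma}. The crux is a single algebraic identity relating the rotated and unrotated EGOP-reparameterized objectives: I claim that $\mathcal{L}^{(\gamma)}(\theta; f_\gamma) = \mathcal{L}^{(0)}(\theta; f_0)$ as functions of $\theta$, i.e., the two reparameterized loss landscapes are literally identical. If this holds, then since $\mathcal{A}$ is a deterministic first-order algorithm (its updates depend only on the sequence of past iterates and gradients of the objective it is handed, cf.~\eqref{eq:deterministic-update}) and both runs share the initialization $\theta_0$, an immediate induction on $t$ gives $\theta^{(\gamma)}_t = \theta^{(0)}_t$ for all $t$.

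To prove the landscape identity, I would proceed in two steps. First, use Lemma~\ref{lemma:generalized-rotations-in-data-are-rotations-in-param-gamma}: writing $Q^{(\gamma)} \in O(p)$ for the parameter-space rotation induced by the data rotation $U(\gamma)$, we have $f(\theta; U(\gamma)^\T x) = f((Q^{(\gamma)})^\T \theta; x)$, hence $\mathcal{L}^{(\gamma)}(\theta; f) = \mathbb{E}_{(x,y)\sim\mathcal{D}}[-y f(\theta; U(\gamma)^\T x)] = \mathcal{L}^{(0)}((Q^{(\gamma)})^\T\theta; f)$ by the change-of-variables in~\eqref{eq:def-P_gamma}. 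Second, I need to understand how this relationship passes through the EGOP construction. The key sub-claim is that the EGOP eigenbasis is itself equivariant: $V_\gamma = (Q^{(\gamma)})^\T V_0 R$ for some block-diagonal orthogonal $R$ accounting for within-eigenspace rotations. Concretely, $\mathrm{EGOP}_\rho(\mathcal{L}^{(\gamma)}) = \mathbb{E}_{\theta\sim\rho}[\nabla\mathcal{L}^{(\gamma)}(\theta)\nabla\mathcal{L}^{(\gamma)}(\theta)^\T]$; using $\mathcal{L}^{(\gamma)}(\theta; f) = \mathcal{L}^{(0)}((Q^{(\gamma)})^\T\theta; f)$ we get $\nabla\mathcal{L}^{(\gamma)}(\theta) = (Q^{(\gamma)})^\T \nabla\mathcal{L}^{(0)}((Q^{(\gamma)})^\T\theta)$, and then the rotational invariance of $\rho$ (Assumption~\ref{assumption:rotational-invariance}) lets me substitute $\theta \mapsto Q^{(\gamma)}\theta$ inside the expectation to conclude $\mathrm{EGOP}_\rho(\mathcal{L}^{(\gamma)}) = (Q^{(\gamma)})^\T \mathrm{EGOP}_\rho(\mathcal{L}^{(0)}) Q^{(\gamma)}$. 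Thus the two EGOP matrices are orthogonally similar via $Q^{(\gamma)}$, so their eigenvectors are related by $V_\gamma = (Q^{(\gamma)})^\T V_0 R$ with $R$ any orthogonal matrix commuting with the (diagonal) eigenvalue matrix.

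The remaining work is to check that the residual factor $R$ does not affect the trained network. Plugging the eigenbasis relation into the definition of the reparameterized forward map, $f_\gamma(\theta; x) = f(V_\gamma\theta; x) = f((Q^{(\gamma)})^\T V_0 R\theta; x) = f(V_0 R\theta; U(\gamma) x) = f_0(R\theta; U(\gamma)x)$, where the third equality again invokes Lemma~\ref{lemma:generalized-rotations-in-data-are-rotations-in-param-gamma}. Taking expectations yields $\mathcal{L}^{(\gamma)}(\theta; f_\gamma) = \mathcal{L}^{(0)}(R\theta; f_0)$. To get the clean identity $\mathcal{L}^{(\gamma)}(\theta; f_\gamma) = \mathcal{L}^{(0)}(\theta; f_0)$ I need $R$ to be the identity, which I would arrange by fixing a canonical choice of eigenvectors in~\eqref{eq:EGOP-eigenbasis-gamma} that is itself equivariant — or, more robustly, by observing that $R$ amounts to a relabeling of coordinates within degenerate eigenspaces and arguing it can be absorbed without loss of generality. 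The main obstacle I anticipate is precisely this handling of eigenvector non-uniqueness: if $\mathrm{EGOP}_\rho(\mathcal{L}^{(0)})$ has repeated eigenvalues (which, given the symmetry of the ReLU network objective, is likely), then $V_0$ and $V_\gamma$ are only determined up to block rotations, and I must either pin down a deterministic tie-breaking rule compatible with the $Q^{(\gamma)}$-conjugation or show the algorithm's output is invariant to such rotations. Once that is settled, equivariance of the forward map follows from $f_\gamma(\theta^{(\gamma)}_t; x) = f_0(\theta^{(0)}_t; U(\gamma)^\T x)$ via $\theta^{(\gamma)}_t = \theta^{(0)}_t$ and the forward-map relation, and equivariance of the decision boundary is then immediate by taking the zero level set.
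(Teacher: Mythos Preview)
Your proposal is correct and follows essentially the same route as the paper: show the two EGOP matrices are orthogonally similar via $Q^{(\gamma)}$ (using rotational invariance of $\rho$), deduce the eigenbasis relation, conclude that $\mathcal{L}^{(\gamma)}(\cdot; f_\gamma)$ and $\mathcal{L}^{(0)}(\cdot; f_0)$ are identical as functions, and then induct on the deterministic first-order updates from the shared initialization. The eigenvector non-uniqueness obstacle you correctly flag is resolved in the paper not by arguing that $\mathcal{A}$ is insensitive to $R$, but simply by \emph{assuming} the eigenvector routine is equivariant (so $R = I$), with the justification that for generic data the EGOP has distinct nonzero eigenvalues and the residual sign ambiguity can be absorbed; aside from a few harmless $U(\gamma)$ vs.\ $U(\gamma)^\T$ and $Q^{(\gamma)}$ vs.\ $(Q^{(\gamma)})^\T$ slips in your write-up, your argument matches the paper's.
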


Theorem~\ref{thm:EGOP-invariant-decision-boundaries} implies that the generalization of any such algorithm $\mathcal{A}$ is \textit{invariant} to rotations of the data distribution. Figure~\ref{fig:EGOP-array-of-angles} illustrates Theorem~\ref{thm:EGOP-invariant-decision-boundaries-2d-gamma}. In Section~\ref{ssec:appendix-equivariance}, we generalize Theorem~\ref{thm:EGOP-invariant-decision-boundaries-2d-gamma} and show that analogous results hold for broad families of data distributions and loss functions and in higher-dimensional settings. We note that while these results are presented for reparameterization with the exact EGOP eigenbasis, as defined in \cref{eq:EGOP-eigenbasis-gamma}, the proofs presented in Section~\ref{sec:deferred-proofs} extend to the setting when the EGOP matrix is estimated with gradient samples.

\section{Empirical results in the finite data regime}\label{sec:empirical_results}

In this section, we provide details for our numerical experiments and empirically examine relaxations of the simplifying assumptions made in our analysis. For figures throughout, we draw samples from $\mathcal{D}$ (cf. \cref{eq:data-distribution}) with the following parameters:
\begin{equation}\label{eq:empirical-parameter-settings}
    \omega = 2(1+\sqrt{2}), \quad \sigma = 1.0 \quad \mu = 1.15.
\end{equation}
These values simultaneously satisfy \cref{assumption:realizability}, \cref{assumption:general-rotation-omega-mu-sigma-relationship} for $\gamma = \pi/4$, and also the assumptions necessary for the guarantees in \citet{vasudeva2025rich} to apply. For each figure, we generate $10^{4}$ data samples $(x,y)$ following \cref{eq:def-P_gamma} for corresponding value of $\gamma$. 

\begin{wrapfigure}{l}{0.4\textwidth}
    \includegraphics[width=0.9\linewidth]{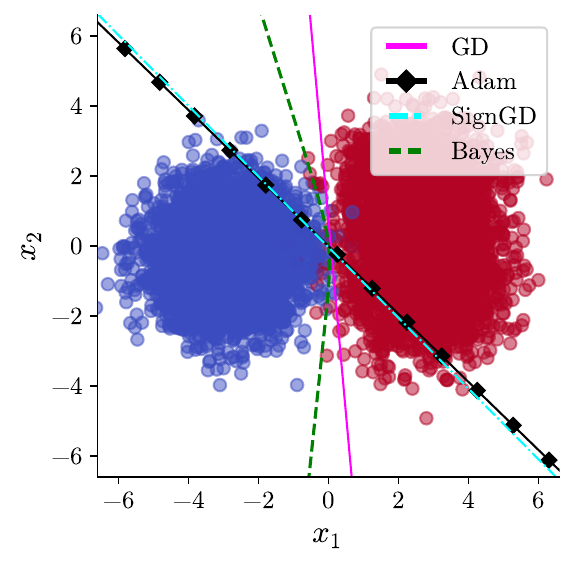}
    \captionsetup{margin=0.3cm}
    \caption{Relaxing simplifying assumptions:  for $\gamma=\nicefrac{\pi}{32}$, decision boundaries produced by a 2-layer ReLU network including bias terms and trainable outer-layer weights, defined in \eqref{eq:full-network-class}. \adam and \SignGD still yield the decision boundary predicted by Thm.~\ref{thm:general-rotation-linear-boundary}.
}\label{fig:trainA-useBias}
\end{wrapfigure}

Empirically, we find that the behavior predicted by \cref{thm:general-rotation-linear-boundary} holds even when we relax \cref{assumption:general-rotation-omega-mu-sigma-relationship}. For example, in \cref{fig:OG_gamma=pi_over_32} the parameter values in \eqref{eq:empirical-parameter-settings} do not satisfy \cref{assumption:general-rotation-omega-mu-sigma-relationship} for the rotation angle $\gamma = \pi/32$. However, the observed decision boundary is still linear and equals the theoretical prediction for admissible nonnegative values of $\gamma$.

We train 2-layer ReLU networks with hidden width $m=200$. Unless otherwise noted, we use the architecture specified in  \cref{eq:og-two-layer-objective}, which is the setting addressed by Theorem~\ref{thm:general-rotation-linear-boundary}, meaning we train only first-layer weights, do not include bias terms, and define the fixed outer-layer weights as Rademacher i.i.d. variables. 
Figure~\ref{fig:trainA-useBias} is the notable exception, where we verify that the predictions of our theory hold empirically under the relaxation of two of our simplifying assumptions: namely, the fact that \eqref{eq:og-two-layer-objective} does not contain bias terms and uses fixed outer-layer weights. In Figure~\ref{fig:trainA-useBias}, we display results obtained when training the more general 2-layer ReLU network:
\begin{equation}\label{eq:full-network-class}
    f_{\operatorname{full}}(\theta;x) = \theta_1^\T \ReLU\big(\reshape(\theta_2)x + b\big),
\end{equation}
where all parameters $\theta_1$, $\theta_2$ and $b$ are trainable. Comparing Figure~\ref{fig:trainA-useBias} with Figure~\ref{fig:OG_gamma=pi_over_32} shows that the inclusion of bias and trainable outer-layer weights does not change the core observation that under small rotations, both \adam and \SignGD learn linear decision boundaries that generalize poorly relative to the Bayes optimal decision boundary.

We conduct experiments in the finite-sample setting using the sample linear correlation loss:
\begin{equation}\label{eq:sample-loss}
    \loss_{N}(\theta)\defeq \frac{1}{N}\sum_{k=1}^N -y_k f(\theta; x_k)
\end{equation}
We train for 1000 epochs using full-batch gradients. For each algorithm (\GD, \adam, \SignGD), we use $\eta = 0.01$, though we find that our results are robust to the choice of learning rate. As stated in \cref{eq:def-signGD}, \SignGD is equivalent to \adam with parameters $\beta_1 = \beta_2 = 0$. In our experiments, we run \adam with $\beta_1 = \beta_2 = 0.9999$.

\begin{figure}[t]
     \centering
     \begin{subfigure}[t]{0.32\textwidth}
         \centering
         \includegraphics[width=\linewidth]{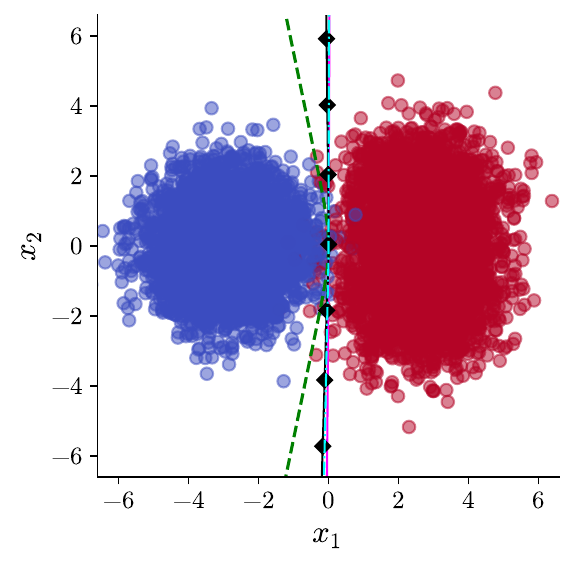}
         \caption{$\gamma = 0$}
         \label{fig:OG_eps2_gamma=0}
     \end{subfigure}
     \begin{subfigure}[t]{0.32\textwidth}
         \centering
         \includegraphics[width=\linewidth]{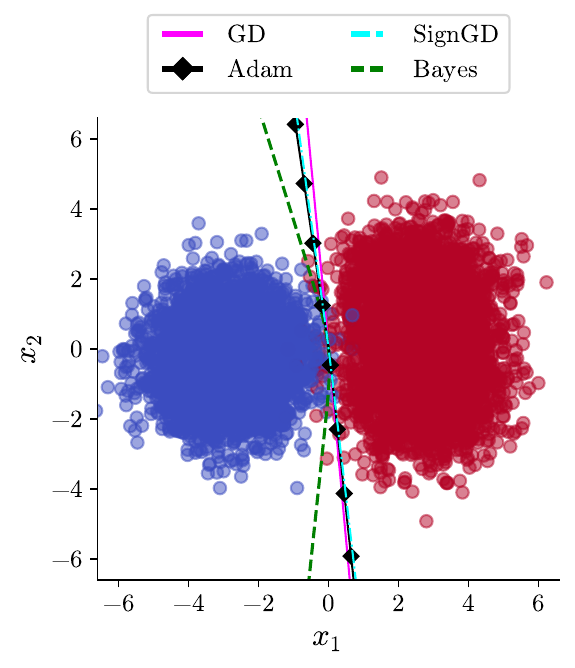}
         \caption{$\gamma = \frac{\pi}{32}$}
         \label{fig:OG_eps2_gamma=pi_over_32}
     \end{subfigure}
     \begin{subfigure}[t]{0.32\textwidth}
         \centering
         \includegraphics[width=\linewidth]{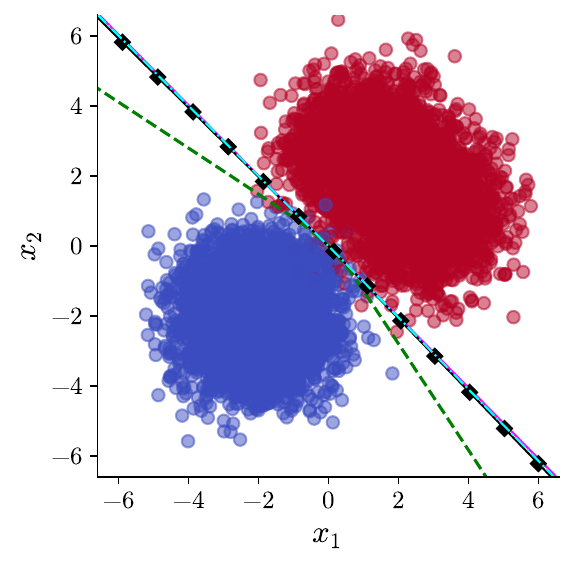}
         \caption{$\gamma = \frac{\pi}{4}$}
         \label{fig:OG_eps2_gamma=pi_over_4}
     \end{subfigure}    
     \caption{For \adam and \SignGD without reparameterization, increasing $\epsilon$ alone does \emph{not} lead to the nonlinear, good generalization, equivariant boundaries produced by EGOP-repameterization. Here we visualize results using \adam and \SignGD, with $\epsilon=3.0$, and \textit{without} reparameterization. Compare these results with those in Figures~\ref{fig:OG_gamma=0}, \ref{fig:OG_gamma=pi_over_32}, and \ref{fig:OG_gamma=pi_over_4}, where \adam and \SignGD are used \textit{without} reparameterization but with a lower value of $\epsilon$, namely $\epsilon=10^{-8}$.
     }
     \label{fig:OG_coordinates_larger_eps}
\end{figure}

\paragraph{EGOP reparameterization} EGOP reparameterization employs Monte Carlo sampling of the gradients in order to estimate the EGOP matrix, as detailed in \cref{alg:meta-algorithm-block}. We perform EGOP reparameterization using full-batch gradients of the sample loss in \cref{eq:sample-loss}. Let $M$ denote the number of gradient samples used to estimate the EGOP matrix. In our experiments, we set $M = 2m$ (where $m = 1000$ is the hidden width used in our experiments) which is exactly equal to the number of trainable parameters $p = 2m$. Thus this value of $M$ is a small number of samples for estimating the $p \times p$ EGOP matrix, though we note that our results are also robust to the choice of $M$.

When optimizing the EGOP-reparameterized objective with \adam and \SignGD, we find that moderate values of $\epsilon$ are necessary in order to produce optimal generalization. In Figure~\ref{fig:EGOP_gamma=pi_over_32} and Figure~\ref{fig:EGOP-array-of-angles}, we use $\epsilon = 3.0$ for both EGOP-reparameterized \adam and \SignGD. This value is larger than the default value for $\epsilon$:  when using \adam and \SignGD \textit{without} reparameterization, e.g., Figure~\ref{fig:OG_gamma=0}, we use $\epsilon = 10^{-8}$, which is a more typical value for this parameter. 

\emph{Both} EGOP-reparameterization \emph{and} larger values of $\epsilon$ are co-requisites for obtaining nonlinear, equivariant decision boundaries; when using \adam and \SignGD \textit{without} reparameterization, a larger value of $\epsilon$ leads to nearly-linear boundaries even in the case $\gamma = 0$, as shown in Figure~\ref{fig:OG_eps2_gamma=0}, and does not produce equivariant non-linear decision boundaries. It does, however, at least promote greater equivariance in this linear boundary, which improves generalization for some data rotations; comparing Figure~\ref{fig:OG_eps2_gamma=pi_over_32} with Figure~\ref{fig:OG_gamma=pi_over_32} shows that for small rotation angles ($\gamma = \nicefrac{\pi}{32}$) the linear boundary learned with the larger value of $\epsilon = 3.0$ is closer to that of \GD than the linear boundary learned by \adam when $\epsilon=10^{-8}$, and has better generalization.
\section{Conclusions and future work}

Adaptive gradient methods (such as \adam) commonly used in machine learning employ a diagonal preconditioner, rendering the overall method sensitive to small transformations of the parameter space. In machine learning, this manifests as sensitivity of the learned predictor to rotations of the training data distribution and must be carefully considered in order to fully understand the implicit bias of the underlying methods.

This paper shows that (i) the ``richness'' of decision boundaries learned by
\adam relative to gradient descent is by no means generic, but rather dependent on the input space representation; and (ii) a simple reparameterization
of the decision variable via the EGOP matrix (cf.~\cref{def:egop-matrix}) equips \emph{any} first-order
method with orthogonal invariance, and thus restores the richness bias of \SignGD / \adam that was originally established for axis-aligned data distributions. More generally, our analysis shows that transformations like EGOP-reparameterization provide a valuable mechanism for understanding the implicit bias of \adam and other adaptive optimization methods used to train machine learning models. As shown in \cref{appendix:Shampoo}, our analysis of the equivariance of EGOP-reparameterization extends to related algorithms, including \Shampoo and \SOAP \cite{gupta2018shampoo,vyas2024soap}.

This work focuses on the sensitivity of adaptive gradient methods to orthogonal transformations of the data distribution. An interesting avenue for future work would be to analyze the impact of more general data perturbations on adaptive gradient algorithms, such as nonlinear transformations or perturbations on only a subset of training points.

\section*{Acknowledgments}

We gratefully acknowledge the support of AFOSR FA9550-18-1-0166, NSF DMS-2023109, DOE DE-SC0022232, the NSF-Simons National Institute for Theory and Mathematics in Biology (NITMB) through NSF (DMS-2235451) and Simons Foundation (MP-TMPS-00005320), and the Margot and Tom Pritzker Foundation.

\bibliographystyle{plainnat}
\bibliography{ref}

\pagebreak

\appendix
\section{Review of optimization methods considered}
\label{app:methods}
Below, we formally define the methods that we use---namely, \adam (\cref{alg:adam}),
\SignGD (\cref{alg:signgd}) and gradient descent (\cref{alg:SGD}); they all assume first-order access to the loss function $\loss$. Note that \SignGD is equivalent to \adam with $\beta_1 = \beta_2 = 0$.

\begin{algorithm}[h]
\caption{\adam}
\begin{algorithmic}[1]
    \State \textbf{Input}: First-order oracle $(\loss(\cdot), \grad \loss(\cdot))$, initialization $\theta_0 \in \R^p$, stepsize $\eta > 0$, $\beta_1, \beta_2 \in [0,1]$, $\epsilon \geq 0$, max. iterations $T$.
    \State \textbf{Initialize} $m_{0} = 0$, $v_{0} = 0$
    \For{$t = 0, 1, \dots T$} \State
        \vspace*{-1.5em}
        \begin{align*}
            \theta_{t+1} &= \theta_{t} - \eta \cdot \left(
                \mathbf{diag}(v_{t}) + \epsilon I
            \right)^{-1/2} m_{t}; \\
            m_{t+1}        &= \frac{1}{1 - \beta_1^{t+1}} \left(\beta_{1} m_{t} + (1 - \beta_1) \grad \loss(\theta_{t+1})\right) \\
            v_{t+1}        &= \frac{1}{1 - \beta_2^{t+1}} \left(
                \beta_{2} v_{t} + (1 - \beta_2) (\grad \loss(\theta_{t+1}) \odot \grad \loss(\theta_{t+1}))
            \right) 
        \end{align*}
        \vspace*{-0.5em}
    \EndFor
    \State \Return $\theta_T$.
\end{algorithmic}
\label{alg:adam}
\end{algorithm}

\begin{algorithm}[h]
    \caption{$\SignGD$}
    \begin{algorithmic}[1]
        \State \textbf{Input}: First-order oracle $(\loss(\cdot), \grad \loss(\cdot))$, initialization $\theta_0 \in \R^p$, stepsize $\eta > 0$, $\epsilon \geq 0$, max. iterations $T$.
        \For{$t = 0, 1, \dots T$}
            \State $\theta_{t+1} = \theta_t - \eta \cdot \left(
                \mathbf{diag}\left(
                    \grad \loss(\theta_t) \odot \grad \loss(\theta_t)
                \right)
                + \epsilon I
            \right)^{-1/2} \grad \loss(\theta_t)$
        \EndFor
        \State \Return $\theta_T$.
    \end{algorithmic}
    \label{alg:signgd}
\end{algorithm}

\begin{algorithm}[h]
\caption{\texttt{GradientDescent}}
\begin{algorithmic}[1]
    \State \textbf{Input}: First-order oracle $(\loss(\cdot), \grad \loss(\cdot))$, initialization $\theta_0 \in \R^p$,  stepsize $\eta > 0$, max. iterations $T$.
    \For{$t = 0, 1, \dots T$}
        \State $\theta_{t+1} = \theta_t - \eta \cdot \grad \loss(\theta_t)$
    \EndFor
    \State \Return $\theta_T$.
\end{algorithmic}
\label{alg:SGD}
\end{algorithm}

\begin{algorithm}[H]
    \caption{EGOP-Reparameterization}\label{alg:meta-algorithm-block}
    \begin{algorithmic}[1]
        \State \textbf{Input}: First-order oracle $(\loss(\cdot), \grad \loss(\cdot))$, number of samples $M$, distribution $\rho$.
        \State Form empirical EGOP matrix
        \[
            \widehat{P} = \frac{1}{M}
            \sum_{i = 1}^{M} \grad \loss(\theta_{i}) \grad \loss(\theta_{i})^{\T}, \;\;
            \text{where} \;\;
            \set{\theta_{i}}_{i=1}^{M} \;{\sim}_{\mathrm{i.i.d.}} \;
            \rho.
        \]
        \State Optimize reparameterized function with adaptive algorithm of choice:
        \[
            {\widetilde{\theta}}^{\ast} =
            \argmin_{\theta \in \mathbb{R}^d} \loss(V\theta), \quad
            \text{where} \;\;
            V = \mathtt{eigenvectors}(\widehat{P})
        \]
        \State \Return $\theta^{\ast} \defeq V \widetilde{\theta}^{\ast}$.
    \end{algorithmic}
\end{algorithm}


\section{Proofs}\label{sec:deferred-proofs}

\subsection{Proofs from Section~\ref{ssec:learning-with-rotated-data}}

To prove \Cref{lemma:update-for-any-deg-rotation}, we first characterize the (sub)gradients of the rotated expected loss in \eqref{eq:population_loss}. This result is a straightforward consequence of Proposition 2 in \cite{vasudeva2025rich} (up to normalization factors), but we present the derivation here for completeness.

\begin{lemma}\label{lemma:grads-under-rotation}
    Given $\omega, \mu, \sigma$ satisfying Assumption~\ref{assumption:realizability}, define the vectors
    \begin{subequations}
    \begin{align}
        \mu_{+} &\defeq \frac{\mu}{2} \begin{pmatrix}
            \omega - 1/\omega \\
            2
        \end{pmatrix} = \mathbb{E}_{(x,y) \sim \mathcal{D}}[x \mid y = 1, \epsilon = +1], \\
        \mu_{\pm} &\defeq \frac{\mu}{2} \begin{pmatrix}
            \omega -1/\omega \\
            -2
        \end{pmatrix} =
        \mathbb{E}_{(x, y) \sim \mathcal{D}}[x \mid y = 1, \epsilon = -1], \\
        \mu_{-} &\defeq \frac{\mu}{2} \begin{pmatrix}
            -(\omega + 1/\omega) \\
            0
        \end{pmatrix} =
        \mathbb{E}_{(x, y) \sim \mathcal{D}}[x \mid y = -1].
    \end{align}   
    \end{subequations}
    Let $\phi(\cdot)$ and $\Phi(\cdot)$ denote the PDF and CDF of the standard Gaussian distribution, respectively. For a vector of weights $w \in \R^2$, we define
    \begin{align}
    \left.
    \begin{aligned}
       p_{\alpha} &\defeq \Phi\left(\frac{1}{\sigma}\cdot \langle \Ugamma {\mu}_{\alpha}, \bar{w}\rangle\right), \\
       \Gamma_{\alpha} &\defeq \frac{\phi\left(\dfrac{1}{\sigma }\cdot \langle \Ugamma {\mu}_{\alpha}, \bar{w}\rangle\right)}{\Phi\left(\dfrac{1}{\sigma}\cdot \langle \Ugamma {\mu}_{\alpha}, \bar{w}\rangle\right)},
    \end{aligned} \qquad \right\} \;\;
    \text{for} \;\; \alpha \in \set{+, \pm, -}. \label{eq:def-p_pmzero-gamma-pmzero}
    \end{align}
    Then the subgradient of $\loss^\rot(W; f)$ with respect to $w_k \equiv W_{k, :} \in \R^2$ is
    \[
        \grad_{w_k} \loss^\rot(W; f) = -\frac{\sigma a_k}{4} \left(\sigma^{-1} \Ugamma (p_+ \mu_+ + p_{\pm} \mu_{\pm} -2 p_{-} \mu_{-}) + \bar{w}_k (p_+\Gamma_+ + p_{\pm} \Gamma_{\pm} -2p_{-}\Gamma_{-})
        \right).
    \]
    Here $a_k$ denotes the $k$th entry of outer layer weights $a\in \R^m$ and $\bar{w} \defeq w / \norm{w}$.
\end{lemma}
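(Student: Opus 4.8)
The plan is to compute the gradient of the expected linear correlation loss directly, exploiting the structure of the ReLU network and the Gaussian-mixture data distribution, then specialize to the rotated case by a change of variables. First I would note that for the network $f(W;x) = a^{\T}\ReLU(Wx) = \sum_k a_k \ReLU(\langle w_k, x\rangle)$, the loss~\eqref{eq:population_loss} decomposes over the rows $w_k$, so that $\grad_{w_k}\loss^{\rot}(W;f) = -a_k\, \mathbb{E}_{(x,y)\sim\mathcal{D}^{(\gamma)}}[\, y\, \indic(\langle w_k, x\rangle > 0)\, x\,]$, using the fact that a subgradient of $\ReLU(t)$ is $\indic(t>0)$. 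Next I would push the rotation through the expectation: since $(x,y)\sim\mathcal{D}^{(\gamma)}$ means $x = \Ugamma x'$ for $(x',y)\sim\mathcal{D}$, we get $\grad_{w_k}\loss^{\rot}(W;f) = -a_k\,\Ugamma\, \mathbb{E}_{(x',y)\sim\mathcal{D}}[\, y\,\indic(\langle \Ugamma^{\T} w_k, x'\rangle > 0)\, x'\,]$. So it suffices to evaluate the unrotated expectation with $w_k$ replaced by $\Ugamma^{\T}w_k$; reintroducing $\Ugamma$ out front accounts for all the $\Ugamma$ factors appearing in the claimed formula.

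The core computation is then the unrotated conditional expectation $\mathbb{E}[\indic(\langle v, x'\rangle>0)\,x' \mid y,\epsilon]$ for each of the three mixture components (labelled $+$, $\pm$, $-$ with conditional means $\mu_+,\mu_\pm,\mu_-$ and isotropic covariance $\sigma^2 I$). This is a standard truncated/rectified Gaussian moment: if $x' \sim \mathcal{N}(\nu, \sigma^2 I)$ then, writing $\bar v = v/\norm{v}$, one has $\mathbb{E}[\indic(\langle v, x'\rangle>0)\,x'] = \Phi(\langle\nu,\bar v\rangle/\sigma)\,\nu + \sigma\,\phi(\langle\nu,\bar v\rangle/\sigma)\,\bar v$. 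I would derive this by rotating coordinates so $v$ aligns with a coordinate axis, splitting $x'$ into its component along $\bar v$ (a one-dimensional Gaussian truncated to the positive half-line after shifting) and its orthogonal components (which contribute $\Phi(\cdot)\nu_{\perp}$), and assembling the pieces; the scalar identities $\int_{-c}^{\infty}\phi = \Phi(c)$ and $\int_{-c}^{\infty}t\,\phi(t)\,dt = \phi(c)$ give the two terms. Applying this with $v = \Ugamma^{\T}w_k$ (so $\langle\nu, \bar v\rangle = \langle\Ugamma\nu, \bar w_k\rangle$) and the three means, then weighting by the label/sign-of-$\epsilon$ factors $y\cdot$(mixture weight) — which produce $+1$ for the $y=1,\epsilon=+1$ component, $+1$ for $y=1,\epsilon=-1$, and a factor accounting for $y=-1$ having probability mass on only one component versus $y=1$ splitting across two — yields exactly $p_\alpha$ and $\Gamma_\alpha = \phi/\Phi$ as defined, with the $-2p_-\mu_-$ and $-2p_-\Gamma_-$ coming from the relative mixture weights, and the overall $\sigma a_k/4$ prefactor from the $\tfrac14$ mixture probabilities and the extracted $\sigma$.

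The main obstacle I anticipate is purely bookkeeping rather than conceptual: getting the mixture weights and the sign conventions exactly right so that the three components combine with coefficients $p_+, p_\pm, -2p_-$ (and likewise for the $\Gamma$ terms), and correctly tracking where the lone $\sigma$ and the $1/4$ come from — the paper flags this as matching Proposition 2 of~\citet{vasudeva2025rich} ``up to normalization factors,'' which is precisely where care is needed. A secondary subtlety is the subgradient at $\langle w_k, x\rangle = 0$: since the data distribution is absolutely continuous, the event $\{\langle \Ugamma^{\T}w_k, x'\rangle = 0\}$ has measure zero, so any choice in $[0,1]$ for the ReLU subgradient there gives the same expected (sub)gradient, and I would remark on this to justify writing an equality rather than a set inclusion. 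Once the truncated-Gaussian moment formula is in hand, the rest is substitution and collecting terms.
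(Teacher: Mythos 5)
Your proposal is correct and follows essentially the same route as the paper: differentiate the loss row-wise, pull the rotation out by a change of variables so that the gradient factors as $-a_k U(\gamma)\,\mathbb{E}_{\mathcal{D}}[y\,x\,\indic(\langle U(\gamma)^{\T}\bar w_k, x\rangle\geq 0)]$, evaluate each of the three Gaussian components via the rectified first-moment identity $\mathbb{E}[x\,\indic(\langle \bar v, x\rangle\geq 0)] = \Phi(c)\nu + \sigma\phi(c)\bar v$, and combine with mixture weights $\tfrac14,\tfrac14,\tfrac12$ to produce the $p_+, p_\pm, -2p_-$ coefficients. The only bookkeeping detail worth flagging explicitly (which the paper does) is that the $\bar w_k$ term in the final expression carries no $U(\gamma)$ factor because $U(\gamma)U(\gamma)^{\T}\bar w_k = \bar w_k$, while the $\mu_\alpha$ terms do.
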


Lemma~\ref{lemma:grads-under-rotation} shows that the subgradient with respect to the $k^{\text{th}}$ row of $W$, $w_{k}$, only depends on $w_k$ via its \textit{direction}
$\nicefrac{w_k}{\norm{w_k}}$.

\begin{proof}[Proof of Lemma~\ref{lemma:grads-under-rotation}.]
    Given $\gamma > 0$, for clarity we denote samples from $\mathcal{D}^\rot$ as $(x^{\rot}, y^{\rot})$ to distinguish them from samples from $\mathcal{D}$.
    We will use the notation $\bar{w} \defeq w / \norm{w}$ to indicate the unit
    vector with the same direction as $w$.
    
    We begin by rewriting \Cref{eq:population_loss}:
    \begin{align*}
        \loss^\rot(W; f) &= \mathbb{E}_{\mathcal{D}^\rot}\left[-y^\rot \cdot a^\T \ReLU(W x^\rot)\right]\\
        &=\mathbb{E}_{\mathcal{D}^\rot} \left[-y^\rot \sum_{k=1}^m a_k \ReLU(\langle w_k, x^\rot\rangle ) \right].
    \end{align*}
    We will use $w_k$ to denote the weights of the $k$th hidden neuron, i.e., $w_k \defeq W_{k,:}$. For this row,
    \begin{align*}
        \nabla_{w_k} \loss^\rot(W; f) &= \mathbb{E}_{\mathcal{D}^\rot} \left[-y^\rot a_k \nabla_{w_k} \ReLU(\langle w_k, x^\rot\rangle ) \right]\\
        &= -a_k \mathbb{E}_{\mathcal{D}^\rot} \left[y^\rot x^\rot \indic\{\langle w_k, x^\rot\rangle \geq 0 \} \right] \\
        &= -a_k \mathbb{E}_{\mathcal{D}^\rot}\left[
            y^\rot x^\rot \indic\set{\ip{\bar{w}_k, x^\rot} \geq 0}
        \right],
    \end{align*}
    where the last equality follows from the fact that the indicator is invariant to scaling by a positive number.
    
    Recall that by the definition of $\mathcal{D}^\rot$ \cref{eq:def-rotated-dist},
    $
        \mathcal{D}^\rot\left((x, y)\right) = \mathcal{D}\left((\Ugamma ^\T x, y)\right)
    $.
    Thus for any function $g:\R^d\times \R \rightarrow \R$ whose expectation under $\mathcal{D}$ is finite, it holds that
    \[
        \mathbb{E}_{ \mathcal{D}^\rot}\left[ g\left((x^\rot, y^\rot)\right)\right] = \mathbb{E}_{\mathcal{D}}\left[ g\big((U(\gamma) x, y)\big)\right].
    \]
    In what follows, we will write $U \equiv \Ugamma$ for brevity.
    
    Consequently, we can rewrite our expectation as an expectation over $\mathcal{D}$:
    \begin{align*}
        \grad_{w_k} \loss^\rot(W) &=-a_k \mathbb{E}_{\mathcal{D}^\rot}\left[
            y^\rot x^\rot \indic\set{\ip{\bar{w}_k, x^\rot} \geq 0}
        \right]\\
        &=-a_k \mathbb{E}_{\mathcal{D}}\left[
            y Ux \indic\set{\ip{\bar{w}_k, Ux} \geq 0}
        \right]\\
        &=
        -a_k U \mathbb{E}_{\mathcal{D}}\left[
            y \cdot x \cdot \indic{\set{
                \ip{U^\T \bar{w}, x} \geq 0
            }}
        \right] \\
        &=
        -a_k U \mathbb{E}_{\mathcal{D}}\left[
            y \cdot x \cdot \indic{\set{
                \ip{\bar{w}_{\gamma}, x} \geq 0
            }}
        \right], \quad
        \text{where} \;\; \bar{w}_{\gamma} \defeq U^\T \bar{w}.
    \end{align*}
    In particular, note that since $U \in O(2)$, $U^\T \bar{w}$ is
    itself a unit vector. By the tower rule,
    \begin{equation}
        \mathbb{E}_{(x,y)}\left[
            y \cdot x \cdot \indic{\set{
                \ip{\bar{w}_{\gamma}, x} \geq 0
            }}
        \right] =
        \mathbb{E}_{y,\epsilon}\left[
            y \cdot \mathbb{E}_{x}\left[
                x \cdot \indic{\set{
                    \ip{\bar{w}_{\gamma}, x} \geq 0
                }}
            \mid y, \epsilon \right]
        \right]
        \label{eq:tower}
    \end{equation}
    We now simplify the inner expectation in~\eqref{eq:tower}. Since all required densities exist and are bounded away from zero, we can rewrite it as
    \begin{align}
        \mathbb{E}_{x}\left[
            x \cdot \indic{
            \set{\ip{\bar{w}_{\gamma}, x} \geq 0}
            } \mid y, \epsilon
        \right] &=
        \mathbb{E}_{x}\left[
            x \mid 
            \ip{\bar{w}_{\gamma}, x} \geq 0,
            y, \epsilon
        \right] \cdot
        \mathbb{P}\left(
        \ip{\bar{w}_{\gamma}, x} \geq 0 \mid
        y, \epsilon
        \right)
        \label{eq:expectation-after-tower}
    \end{align}
    For fixed $\bar{w}_{\gamma}$, we decompose $x$ into two components, parallel and orthogonal to $\bar{w}_{\gamma}$, respectively:
    \begin{equation}
        x_{\myparallel} \defeq \ip{x, \bar{w}_{\gamma}} \bar{w}_{\gamma}, \quad \text{and} \quad
        x_{\perp} \defeq x - x_{\myparallel} \implies
        \ip{\bar{w}_{\gamma}, x} =
        \ip{\bar{w}_{\gamma}, x_{\myparallel}}.
        \label{eq:x-orthodecomp}
    \end{equation}
    Note that $x_{\perp}$ is independent of
    $x_{\myparallel}$. Hiding the conditioning
    on $y$, $\epsilon$ for brevity, we have
    \begin{align}
        \mathbb{E}_{x}\left[
            x \mid \ip{\bar{w}_{\gamma}, x} \geq 0
        \right] &=
        \mathbb{E}_{x}\left[
            x_{\myparallel} \mid
            \ip{\bar{w}_{\gamma}, x_{\myparallel}} \geq 0
        \right] +
        \mathbb{E}_{x}\left[
            x_{\perp} \mid
            \ip{\bar{w}_{\gamma}, x_{\myparallel}} \geq 0
        \right] \notag \\
        &=
        \mathbb{E}_{x}\left[
            x_{\myparallel} \mid
            \ip{\bar{w}_{\gamma}, x_{\myparallel}} \geq 0
        \right] +
        \mathbb{E}_{x}\left[x_{\perp}\right] \notag \\
        &=
        \mathbb{E}_{x}\left[
            x_{\myparallel} \mid
            \ip{\bar{w}_{\gamma}, x_{\myparallel}} \geq 0
        \right] +
        \mathbb{E}_{x}[x] - \mathbb{E}_{x}\left[
            x_{\myparallel}
        \right] \notag \\
        &=
        \bar{w}_{\gamma} \cdot \mathbb{E}_{x}\left[
            \ip{\bar{w}_{\gamma}, x_{\myparallel}} \mid
            \ip{\bar{w}_{\gamma}, x_{\myparallel}} \geq 0
        \right] +
        \mathbb{E}_{x}[x] - \mathbb{E}_{x}\left[
            x_{\myparallel}
        \right]
        \notag \\
        &=
        \bar{w}_{\gamma} \cdot \left( \mathbb{E}_{x}\left[
            \ip{\bar{w}_{\gamma}, x_{\myparallel}} \mid
            \ip{\bar{w}_{\gamma}, x_{\myparallel}} \geq 0
        \right] -
        \mathbb{E}_{x}[\ip{\bar{w}_{\gamma}, x_{\myparallel}}]
        \right) +
        \mathbb{E}_{x}[x].
        \label{eq:expectation-simplify-1}
    \end{align}
    Recall the following fact about truncated Gaussian distributions: if $X \sim \mathcal{N}(\mu, \sigma^2)$,
    \begin{equation}
        \label{eq:truncated-gaussian}
        \mathbb{E}[X \mid \alpha \leq X \leq \beta] =
        \mu - \sigma \cdot \frac{
            \phi\big(\frac{\beta - \mu}{\sigma}\big) - \phi\big(\frac{\alpha - \mu}{\sigma}\big)
            }{
                \Phi\big(\frac{\beta - \mu}{\sigma}\big) -
                \Phi\big(\frac{\alpha - \mu}{\sigma}\big)
            }
    \end{equation}
    Applying~\eqref{eq:truncated-gaussian} with $\alpha = 0$
    and $\beta \to \infty$ to the first expectation in~\eqref{eq:expectation-simplify-1}, we obtain
    \begin{align*}
        \mathbb{E}_{x}\left[
            \ip{\bar{w}_{\gamma}, x_{\myparallel}} \mid
            \ip{\bar{w}_{\gamma}, x_{\myparallel}} \geq 0
        \right] &=
        \mathbb{E}[\ip{\bar{w}_{\gamma}, x_{\myparallel}}]
        + \sigma \cdot \frac{\phi\left(-\frac{\mathbb{E}[\ip{\bar{w}_{\gamma}, x_{\myparallel}}]}{\sigma}
        \right)}{1 - \Phi\left(
            -\frac{\mathbb{E}[\ip{\bar{w}_{\gamma}, x_{\myparallel}}]}{\sigma}
        \right)} \\
        &=
        \mathbb{E}[\ip{\bar{w}_{\gamma}, x_{\myparallel}}]
        + \sigma \cdot \frac{\phi\left(\frac{\mathbb{E}[\ip{\bar{w}_{\gamma}, x_{\myparallel}}]}{\sigma}
        \right)}{\Phi\left(
            \frac{\mathbb{E}[\ip{\bar{w}_{\gamma}, x_{\myparallel}}]}{\sigma}
        \right)},
    \end{align*}
    using the identities $\phi(x) = \phi(-x)$ and
    $1 - \Phi(-x) = \Phi(x)$ and the fact that
    the transformed random variable has equal variance. Plugging back into~\eqref{eq:expectation-simplify-1} yields
    \begin{equation}
        \mathbb{E}_{x}[x \mid \ip{\bar{w}_{\gamma}, x} \geq 0] =
        \sigma \bar{w}_{\gamma} \cdot
        \frac{
            \phi\left(-\frac{\mathbb{E}[\ip{\bar{w}_{\gamma}, x_{\myparallel}}]}{\sigma}\right)
        }{
            1 - \Phi\left(
                -\frac{\mathbb{E}[\ip{\bar{w}_{\gamma}, x_{\myparallel}}]}{\sigma}
            \right)
        } + \mathbb{E}_{x}[x].
        \label{eq:expectation-simplify-1+}
    \end{equation}
    The remaining expectations in\eqref{eq:expectation-simplify-1+} are equal to
    \begin{align*}
        \mathbb{E}[\ip{\bar{w}_{\gamma}, x_{\myparallel}} \mid y, \epsilon] &=
        \mathbb{E}[\ip{\bar{w}_{\gamma}, x} \mid y, \epsilon]
        =
        \frac{\mu}{2} \cdot \ip*{\bar{w}_{\gamma}, \begin{pmatrix}
            \left(y \omega - \frac{1}{\omega}\right) \\
            \epsilon (y + 1)
        \end{pmatrix}} \\
        \mathbb{E}[x \mid y, \epsilon] &=
        \frac{\mu}{2} \begin{pmatrix}
            y \omega - \frac{1}{\omega} \\
            \epsilon (y + 1)
        \end{pmatrix}.
    \end{align*}
    Before we proceed, we define the following events and quantities for brevity:
    \begin{align}
        \mathcal{E}_{+} &\defeq \set{
        y = \epsilon = 1
        }, \;\;
        \mathcal{E}_{\pm} \defeq \set{
        y = 1, \epsilon = -1
        }, \;\;
        \mathcal{E}_{-} \defeq \set{
        y = -1
        }
        \label{eq:e-events-omg} \\
        \mu_{+} &\defeq \frac{\mu}{2} \begin{pmatrix*}[c]
            \omega - \nicefrac{1}{\omega} \\
            2
        \end{pmatrix*}, \;\;
        \mu_{\pm} \defeq \frac{\mu}{2} \begin{pmatrix*}[c]
            \omega - \nicefrac{1}{\omega} \\
            -2
        \end{pmatrix*}, \;\;
        \mu_{-} \defeq \frac{\mu}{2} \begin{pmatrix*}[c]
            -\left(\omega + \nicefrac{1}{\omega}\right) \\
            0
        \end{pmatrix*}
        \label{eq:mus-omg} \\
        \Gamma_{+} &\defeq \frac{
            \phi\left(
                \frac{\ip{\bar{w}_{\gamma}, \mu_{+}}}{\sigma}
            \right)
        }{
            \Phi\left(
                \frac{\ip{\bar{w}_{\gamma}, \mu_{+}}}{\sigma}
            \right)
        }, \;\;
        \Gamma_{\pm} \defeq \frac{
            \phi\left(
                \frac{\ip{\bar{w}_{\gamma}, \mu_{\pm}}}{\sigma}
            \right)
        }{
            \Phi\left(
                \frac{\ip{\bar{w}_{\gamma}, \mu_{\pm}}}{\sigma}
            \right)
        }, \;\;
        \Gamma_{-} \defeq \frac{
            \phi\left(
                \frac{\ip{\bar{w}_{\gamma}, \mu_{-}}}{\sigma}
            \right)
        }{
            \Phi\left(
                \frac{\ip{\bar{w}_{\gamma}, \mu_{-}}}{\sigma}
            \right)
        }.
        \label{eq:gammas-omg}
    \end{align}
    Since $y$ and $\epsilon$ are iid Rademacher, we have
    \begin{equation}
        \prob{\mathcal{E}_{+}} = \prob{\mathcal{E}_{\pm}} = \frac{1}{4},
        \;\;
        \prob{\mathcal{E}_{-}} = \frac{1}{2}.
        \label{eq:e-events-prob}
    \end{equation}
    Depending on the values of $y$ and $\epsilon$, the above yields
    \begin{subequations}
    \begin{align}
        \mathbb{E}[\ip{\bar{w}_{\gamma}, x_{\myparallel}} \mid \mathcal{E}_{+}] &=
        \mu \cdot
        \left[
        \left(\omega - \frac{1}{\omega}\right) (\bar{w}_{\gamma})_{1} +
        (\bar{w}_{\gamma})_{2}
        \right] = \ip{\bar{w}_{\gamma}, \mu_{+}} \label{eq:expectation-simplify-2a} \\
        \mathbb{E}[\ip{\bar{w}_{\gamma}, x_{\myparallel}} \mid \mathcal{E}_{\pm}] &=
        \mu \cdot
        \left[
        \left(\omega - \frac{1}{\omega}\right) (\bar{w}_{\gamma})_{1} -
        (\bar{w}_{\gamma})_{2}
        \right] =
        \ip{\bar{w}_{\gamma}, \mu_{\pm}}
        \label{eq:expectation-simplify-2b} \\
        \mathbb{E}[\ip{\bar{w}_{\gamma}, x_{\myparallel}} \mid \mathcal{E}_{-}] &=
        -\mu \cdot
        \left(\omega + \frac{1}{\omega}\right)
        (\bar{w}_{\gamma})_{1} =
        \ip{\bar{w}_{\gamma}, \mu_{-}}
        \label{eq:expectation-simplify-2c}
    \end{align}
    \end{subequations}
    We can also calculate the corresponding truncation probabilities conditional on $y$ and $\epsilon$:
    \begin{subequations}
        \begin{align}
            \prob{
            \ip{\bar{w}_{\gamma}, x_{\myparallel}}
            \geq 0 \mid \mathcal{E}_{+}
            } &=
            \Phi\left(
            \frac{\mu}{\sigma} \left(
            (\bar{w}_{\gamma})_{1} \left(\omega - \frac{1}{\omega}\right) +
            (\bar{w}_{\gamma})_{2}
            \right)
            \right) \notag \\ 
            &=
            \Phi\left(
            \frac{\ip{\bar{w}_{\gamma}, \mu_{+}}}{\sigma}
            \right)
            \label{eq:probability-simplify-1a} \\
            \prob{
            \ip{\bar{w}_{\gamma}, x_{\myparallel}}
            \geq 0 \mid \mathcal{E}_{\pm}}
            &=
            \Phi\left(
            \frac{\mu}{\sigma} \left(
            (\bar{w}_{\gamma})_{1} \left(\omega - \frac{1}{\omega}\right) -
            (\bar{w}_{\gamma})_{2}
            \right)
            \right) \notag \\ 
            &=
            \Phi\left(\frac{\ip{\bar{w}_{\gamma}, \mu_{\pm}}}{\sigma}\right)
            \label{eq:probability-simplify-1b} \\
            \prob{
            \ip{\bar{w}_{\gamma}, x_{\myparallel}}
            \geq 0 \mid \mathcal{E}_{-}} &=
            \Phi\left(
            \frac{\mu}{\sigma} \left(
            -(\bar{w}_{\gamma})_{1} \left(\omega + \frac{1}{\omega}\right)
            \right)
            \right) \notag \\
            &=
            \Phi\left(
            \frac{\ip{\bar{w}_{\gamma}, \mu_{-}}}{\sigma}
            \right)
            \label{eq:probability-simplify-1c}
        \end{align}
    \end{subequations}
    Finally, we calculate the overall expectation
    in~\eqref{eq:tower} using the law of total expectation:
    \begin{align*}
        & \mathbb{E}_{\mathcal{D}}[
        y \cdot x \indic{\set{\ip{\bar{w}_{\gamma}, x} \geq 0}}
        ] \\
        &= \frac{1}{4} \cdot \left(
            \mathbb{E}[x \indic{\ip{\bar{w}_{\gamma}, x} \geq 0} \mid \mathcal{E}_{+}]
            + \mathbb{E}[x \indic{ \ip{\bar{w}_{\gamma}, x} \geq 0}
            \mid \mathcal{E}_{\pm}]
            - 2 \mathbb{E}[x \indic{ \ip{\bar{w}_{\gamma}, x} \geq 0}
            \mid
            \mathcal{E}_{-}]
        \right) \\
        &= \frac{\sigma}{4} \left(
            \Big(\bar{w}_{\gamma} \Gamma_{+}
             + \frac{\mu_{+}}{\sigma}\Big) \Phi(\mu_{+})
          + \Big(\bar{w}_{\gamma} \Gamma_{\pm}
            + \frac{\mu_{\pm}}{\sigma}\Big)
            \Phi(\mu_{\pm})
        - 2 \Big(\bar{w}_{\gamma} \Gamma_{-}
            + \frac{\mu_{-}}{\sigma}\Big)
            \Phi(\mu_{-})
        \right),
    \end{align*}
    using~\cref{eq:expectation-simplify-2a,eq:expectation-simplify-2b,eq:expectation-simplify-2c,eq:probability-simplify-1a,eq:probability-simplify-1b,eq:probability-simplify-1c} throughout. Writing
    \begin{equation}
        p_{\alpha} = \Phi\left(
        \frac{\ip{\bar{w}_{\gamma}, \mu_{\alpha}}}{\sigma}
        \right), \;\;
        \text{for $\alpha \in \set{+, -, \pm}$},
        \label{eq:cdfs-shorthand}
    \end{equation}
    and recalling that $\bar{w}_{\gamma} = U^\T \bar{w}$,
    we can simplify
    \begin{align*}
        & \grad_{w_{k}} \loss^{\rot}(W; f) \\
        &= -a_{k} \Ugamma \mathbb{E}_{\mathcal{D}}[
        y \cdot x \indic{\set{\ip{\bar{w}_{\gamma}, x} \geq 0}}
        ] \\
        &= -\frac{\sigma a_{k}}{4} \left(
            U \bar{w}_{\gamma} \left(
                p_{+} \Gamma_{+} +
                p_{\pm} \Gamma_{\pm} -
                2 p_{-} \Gamma_{-}
            \right) +
            \sigma^{-1} U
            \left(
            \mu_{+} p_{+} + \mu_{\pm} p_{\pm}
            - 2 \mu_{-} p_{-}
            \right)
        \right)\\
        &= -\frac{\sigma a_{k}}{4} \left(
            \bar{w}_{k} \left(
                p_{+} \Gamma_{+} +
                p_{\pm} \Gamma_{\pm} -
                2 p_{-} \Gamma_{-}
            \right) +
            \sigma^{-1} U
            \left(
            \mu_{+} p_{+} + \mu_{\pm} p_{\pm}
            - 2 \mu_{-} p_{-}
            \right)
        \right).
    \end{align*}
    Finally, recognizing $\ip{\bar{w}_{\gamma}, \mu_{+}} = \ip{U^\T \bar{w}, \mu_{+}}=\ip{\bar{w}, U \mu_{+}}$ (and similarly for $\mu_{\pm}, \mu_{-}$) yields the claim.
\end{proof}

In order to prove Lemma~\ref{lemma:update-for-any-deg-rotation}, we will make use of the following result:
\begin{lemma}\label{lemma:cmu-defn}
    Consider any $\omega > 1$ and $S = \nicefrac{\mu}{\sigma} \in (0, \infty)$. For $p_{+}$, $p_{\pm}$ and $p_{-}$ as defined in \cref{eq:def-p_pmzero-gamma-pmzero}, any $\gamma \in \R$, and any $w \in \R^2$, it holds that
    \begin{equation} \label{eq:def-cmu}
        p_+(w) + p_{\pm}(w) + 2p_{-}(w) \geq \cmu,
        \quad \text{where} \quad
        \cmu \defeq
        \Phi\left(
            S \cdot \min\set*{
                1,
                \frac{\omega - \nicefrac{1}{\omega}}{2}
            }
        \right)
    \end{equation}
    We note that for any $\omega > 1$ and any $\mu, \sigma > 0$, $\cmu \in [0.5, 1]$.
\end{lemma}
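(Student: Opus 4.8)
The plan is to reduce the claimed three-term inequality to a statement about an arbitrary unit vector and then dispatch it by a short case analysis on the sign of that vector's first coordinate, using only monotonicity and symmetry of $\Phi$. Concretely, since $U(\gamma)$ is orthogonal, $\langle U(\gamma)\mu_\alpha, \bar w\rangle = \langle \mu_\alpha, u\rangle$ for $u := U(\gamma)^\T \bar w$, which is again a unit vector; hence each $p_\alpha$ from~\cref{eq:def-p_pmzero-gamma-pmzero} depends on $(\gamma, w)$ only through $u$, and it suffices to prove $\Phi(\langle\mu_+, u\rangle/\sigma) + \Phi(\langle\mu_\pm, u\rangle/\sigma) + 2\Phi(\langle\mu_-, u\rangle/\sigma) \geq \cmu$ for \emph{every} unit vector $u = (u_1, u_2)$. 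Expanding the inner products against $\mu_+ = \tfrac{\mu}{2}(\omega - 1/\omega,\, 2)^\T$, $\mu_\pm = \tfrac{\mu}{2}(\omega - 1/\omega,\, -2)^\T$, $\mu_- = \tfrac{\mu}{2}(-(\omega + 1/\omega),\, 0)^\T$ and recalling $S = \mu/\sigma$, I would set $a := \tfrac{S}{2}\big((\omega - 1/\omega)u_1 + 2u_2\big)$, $b := \tfrac{S}{2}\big((\omega - 1/\omega)u_1 - 2u_2\big)$, and $e := -\tfrac{S}{2}(\omega + 1/\omega)u_1$, so the quantity to bound equals $\Phi(a) + \Phi(b) + 2\Phi(e)$. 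The two facts I would extract are $a + b = S(\omega - 1/\omega)u_1$, which has the same sign as $u_1$ because $\omega > 1$, and $\sign(e) = -\sign(u_1)$ because $\omega + 1/\omega > 0$.

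Then I would split on the sign of $u_1$. If $u_1 \geq 0$, then $a \geq -b$, so the identity $\Phi(a) + \Phi(b) - 1 = \Phi(a) - \Phi(-b)$ together with monotonicity of $\Phi$ gives $\Phi(a) + \Phi(b) \geq 1$; since $2\Phi(e) \geq 0$, the full sum is $\geq 1$. If $u_1 < 0$, then $e > 0$, so $\Phi(e) > 1/2$ and $2\Phi(e) > 1$, while $\Phi(a) + \Phi(b) \geq 0$, so the full sum again exceeds $1$. In either case the left-hand side is at least $1$, and it only remains to note $\cmu \leq 1$: the argument $S\min\{1, (\omega - 1/\omega)/2\}$ of $\Phi$ in the definition of $\cmu$ is strictly positive (since $\omega > 1$ forces $(\omega - 1/\omega)/2 > 0$ and $S > 0$), so $\cmu \in (1/2, 1)$. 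This simultaneously yields $p_+ + p_\pm + 2p_- \geq 1 \geq \cmu$ and the stated range $\cmu \in [1/2, 1]$.

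I do not anticipate a genuine obstacle here: the only thing requiring care is tracking signs---specifically that $\omega > 1$ makes $\omega - 1/\omega$ positive, which is what ties the sign of $a + b$ to that of $u_1$---and observing that the elementary bound by $1$ already implies the claim, so the constant $\cmu$ appears in the statement purely for notational consistency with~\cref{assumption:general-rotation-omega-mu-sigma-relationship}, where that precise value is what the subsequent gradient-sign analysis in~\cref{lemma:update-for-any-deg-rotation} relies on.
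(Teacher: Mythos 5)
Your proof is correct and in fact establishes a strictly stronger bound: you show $p_+ + p_\pm + 2p_- \geq 1$, whereas the paper only proves $\geq \cmu$. Both arguments split on the sign of $u_1 = \langle U(\gamma)^\T\bar{w}, e_1\rangle$, and your treatment of the case $u_1 < 0$ (equivalently, the paper's $\alpha \leq 0$), where $2p_- > 1$ carries the bound, matches the paper's. The real divergence is in the other case: the paper bounds $p_+ + p_\pm \geq \max\{p_+, p_\pm\}$ and then explicitly minimizes $\tfrac{\alpha}{2}(\omega - 1/\omega) + \sqrt{1-\alpha^2}$ over $\alpha \in [0,1]$ to extract the constant $\cmu$, whereas you instead invoke the elementary reflection identity $\Phi(a) + \Phi(b) - 1 = \Phi(a) - \Phi(-b)$, which gives $\Phi(a) + \Phi(b) \geq 1$ the moment $a + b \geq 0$ --- a condition that holds here because $a + b = S(\omega - 1/\omega)u_1$ has the sign of $u_1$. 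This avoids any optimization and shortens the proof considerably. Your concluding observation is also right, and worth pursuing: since your bound dominates $\cmu$, it could be plugged into the proof of Lemma~\ref{lemma:update-for-any-deg-rotation} in place of $\cmu$, replacing $\cmu\omega - 2/\omega$ by the larger quantity $\omega - 2/\omega$ and thereby weakening Assumption~\ref{assumption:general-rotation-omega-mu-sigma-relationship} (e.g.\ $\omega > \sqrt{2}$ instead of $\omega > \sqrt{2/\cmu}$), at no cost elsewhere in the argument.
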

\begin{proof}[Proof of Lemma~\ref{lemma:cmu-defn}]
    We note that $p_{+}$, $p_{\pm}$ and $p_{-}$, defined in \cref{lemma:update-for-any-deg-rotation}, are invariant under rescalings of $\norm{w}$. Thus without loss of generality, we prove the result for $w$ such that $\norm{w} = 1$.
    
    Recall that the rotation matrix $U \equiv \Ugamma$ is
    \[
        \Ugamma = \begin{bmatrix*}[r]
            \cos(\gamma) & -\sin(\gamma)\\
            \sin(\gamma) & \cos(\gamma)
        \end{bmatrix*} = \begin{bmatrix}
            u_{1} & u_{2}
        \end{bmatrix}, \;\;
        \text{where} \quad
        u_{1} = \begin{pmatrix*}[r]
            \cos(\gamma) \\ \sin(\gamma)
        \end{pmatrix*} \text{ and }
        u_{2} = \begin{pmatrix*}[r]
            -\sin(\gamma) \\ \cos(\gamma)
        \end{pmatrix*}.
    \]
    By the preceding display and the definitions of $\mu_{+}$,
    $\mu_{\pm}$ and $\mu_{-}$ in~\cref{lemma:update-for-any-deg-rotation}, we have that
    \begin{subequations}
        \begin{align}
            U \mu_{+} &=
            \mu \left( \frac{\omega - \nicefrac{1}{\omega}}{2} u_{1}
            + u_{2}\right)
            \label{eq:umu-a}
            \\
            U \mu_{\pm} &=
            \mu \left( \frac{\omega - \nicefrac{1}{\omega}}{2} u_{1}
            - u_{2} \right)
            \label{eq:umu-b}
            \\
            U \mu_{-} &=
            -\mu \left(\frac{\omega + \nicefrac{1}{\omega}}{2} u_{1}\right)
            \label{eq:umu-c}
        \end{align}
    \end{subequations}
    Since $u_{1}$ and $u_{2}$ are an orthonormal basis of $\mathbb{R}^2$, we can write any $\bar{w} \in \mathbb{S}^1$ as
    \[
        \bar{w} = \alpha u_{1} + \beta u_{2}, \quad
        \text{where} \;\; \alpha^2 + \beta^2 = 1.
    \]
    Using these values and the expressions derived above, we have
    \begin{align*}
        p_{+} &=
        \Phi\left(\frac{1}{\sigma} \ip{U \mu_{+}, \bar{w}}\right) =
        \Phi\left(
        \frac{\mu}{\sigma} \left(
            \frac{\alpha(\omega - \nicefrac{1}{\omega})}{2}
            + \beta
        \right)
        \right), \\
        p_{\pm} &=
        \Phi\left(\frac{1}{\sigma} \ip{U \mu_{\pm}, \bar{w}}\right) =
        \Phi\left(
        \frac{\mu}{\sigma} \left(
            \frac{\alpha(\omega - \nicefrac{1}{\omega})}{2}
            - \beta
        \right)
        \right), \\
        p_{-} &=
        \Phi\left(\frac{1}{\sigma} \ip{U \mu_{-}, \bar{w}}\right) =
        \Phi\left(
        -\frac{\mu}{\sigma} \left(
            \frac{\alpha(\omega + \nicefrac{1}{\omega})}{2}
        \right)
        \right).
    \end{align*}
    We now proceed on a case-by-case basis.
    \paragraph{The case where $\alpha \leq 0$.}
    Under this condition, since $\frac{\mu}{\sigma}$ and $\omega$ are nonnegative, we have
    \[
        -\frac{\mu}{\sigma} \frac{\alpha(\omega + \nicefrac{1}{\omega})}{2} \geq 0 \implies
        p_{-} = \Phi\left(
        -\frac{\mu}{\sigma} \left(
        \frac{\alpha(\omega + \nicefrac{1}{\omega})}{2}
        \right)
        \right) \geq \frac{1}{2},
    \]
    thus the entire sum is bounded from below by $1$.
    \paragraph{The case where $\alpha > 0$.} By monotonicity of the Gaussian CDF, we have that
    \begin{align*}
        \max\set{p_{+}, p_{\pm}} &=
        \max\set*{
            \Phi\left(
                \frac{\mu}{\sigma} \left(\frac{\alpha(\omega - \nicefrac{1}{\omega})}{2} + \beta
                \right)
            \right), 
            \Phi\left(
                \frac{\mu}{\sigma} \left(\frac{\alpha(\omega - \nicefrac{1}{\omega})}{2} - \beta
                \right)
            \right)
        } \\
        &=
        \Phi\left(
            \frac{\mu}{\sigma} \left(\frac{\alpha(\omega - \nicefrac{1}{\omega})}{2} + \abs{\beta}
            \right)
        \right) \\
        &=
        \Phi\left(
            \frac{\mu}{\sigma} \left(\frac{\alpha(\omega - \nicefrac{1}{\omega})}{2} + \sqrt{1 - \alpha^2}
            \right)
        \right),
    \end{align*}
    where the last equality follows from the fact that
    $\alpha^2 + \beta^2 = 1$ implies $\beta = \pm \sqrt{1 - \alpha^2}$. Analytically minimizing the inner expression
    over $\alpha \in (0, 1]$, we find
    \begin{align*}
        \min_{\alpha \in [0, 1]} \set[\Big]{
        \frac{\alpha}{2} \left(\omega - \nicefrac{1}{\omega}\right) + \sqrt{1 - \alpha^2}} =
        \min\set*{
        1, \frac{\omega - \nicefrac{1}{\omega}}{2}
        }.
    \end{align*}
    We conclude that $p_{+} + p_{\pm} \geq \max\set{p_{+}, p_{\pm}} \geq \Phi\left(\frac{\mu}{\sigma}
    \min\set*{1, \frac{\omega - \nicefrac{1}{\omega}}{2}}
    \right)$.

    \paragraph{Putting it all together.}
    Collecting both cases, we obtain the inequality
    \begin{equation}
        \min_{w \in \mathbb{S}^{1}} \set*{
            p_{+}(w) + p_{\pm}(w) + 2p_{-}(w)
        } \geq
        \Phi\left(
        \frac{\mu}{\sigma} \min\set*{
            1, \frac{\omega - \nicefrac{1}{\omega}}{2}
        }
        \right).
    \end{equation}
\end{proof}

\begin{proof}[Proof of Lemma~\ref{lemma:update-for-any-deg-rotation}]
    We will denote $w \defeq w_k$ and $U\defeq \Ugamma$. Recall that $U$ equals
    \[
        \Ugamma = \begin{bmatrix*}[r]
            \cos(\gamma) & -\sin(\gamma)\\
            \sin(\gamma) & \cos(\gamma)
        \end{bmatrix*} = \begin{bmatrix}
            u_{1} & u_{2}
        \end{bmatrix}, \;\;
        \text{where} \quad
        u_{1} = \begin{pmatrix*}[r]
            \cos(\gamma) \\ \sin(\gamma)
        \end{pmatrix*} \text{ and }
        u_{2} = \begin{pmatrix*}[r]
            -\sin(\gamma) \\ \cos(\gamma)
        \end{pmatrix*}.
    \]
    From these definitions,~\cref{eq:umu-a,eq:umu-b,eq:umu-c}
    and~\cref{lemma:grads-under-rotation}, it follows that
    \begin{align*}
        & \grad_{w_k} \loss^{\rot}(W; f) \\
        &= -\frac{\sigma a_{k}}{4} \left(
            \sigma^{-1} U
            \left(
            \mu_{+} p_{+} + \mu_{\pm} p_{\pm}
            - 2 \mu_{-} p_{-}
            \right)
            +
            \left(
                p_{+} \Gamma_{+} +
                p_{\pm} \Gamma_{\pm} -
                2 p_{-} \Gamma_{-}
            \right)\cdot \bar{w}
        \right)\\
        &= \begin{aligned}[t]
        & -\frac{\sigma a_k}{4}
        \left(
            \frac{\mu}{\sigma} \left[
            \left(
                \omega \left(p_{+} + p_{\pm} + 2 p_{-}\right)
              - \frac{1}{\omega} \left(
                    p_{+} + p_{\pm} - 2p_{-}
                \right)
            \right) \frac{u_{1}}{2} +
            (p_{+} - p_{\pm}) u_{2}
            \right]
        \right) \\
        & -\frac{\sigma a_k}{4}
        \left(p_{+} \Gamma_{+} + p_{\pm} \Gamma_{\pm}
        - 2p_{-} \Gamma_{-}\right) \cdot \bar{w}
        \end{aligned}
    \end{align*}
%
    Denoting $\bar{w}=[\bar{w}_1, \bar{w}_2]^\T$,  the entries of the gradient are thus
    \begin{align}
        & \left[\grad_{w_k} \loss^{\rot}(W)\right]_{1}
        \label{eq:grad-entry-1} \\
        &= \begin{aligned}[t]
        & -\frac{\sigma a_k}{4} \left(
        \frac{\mu}{\sigma} \left[ \left(
        (p_{+} + p_{\pm} + 2p_{-})
        \omega - (p_{+} + p_{\pm} - 2p_{-}) \nicefrac{1}{\omega}
        \right) \cos(\gamma) -
        (p_{+} - p_{\pm}) \sin(\gamma)
        \right]
        \right) \\
        & -\frac{\sigma a_k}{4} \left(
        p_{+} \Gamma_{+} + p_{\pm} \Gamma_{\pm} - 2p_{-} \Gamma_{-}
        \right) \bar{w}_1
        \end{aligned} \notag \\[1em]
        & \left[\grad_{w_k} \loss^{\rot}(W)\right]_{2}
        \label{eq:grad-entry-2} \\
        &= \begin{aligned}[t]
        & -\frac{\sigma a_k}{4} \left(
        \frac{\mu}{\sigma} \left[ \left(
        (p_{+} + p_{\pm} + 2p_{-})
        \omega - (p_{+} + p_{\pm} - 2p_{-}) \nicefrac{1}{\omega}
        \right) \sin(\gamma) +
        (p_{+} - p_{\pm}) \cos(\gamma)
        \right]
        \right) \\
        & -\frac{\sigma a_k}{4} \left(
        p_{+} \Gamma_{+} + p_{\pm} \Gamma_{\pm} - 2p_{-} \Gamma_{-}
        \right) \bar{w}_2
        \end{aligned} \notag
    \end{align}

    We proceed to bound the gradient entries based on the value of $a_{k} \in \set{\pm 1}$.

    \paragraph{Case 1: $a_{k} = 1$.}
    To upper bound~\eqref{eq:grad-entry-1}, we note that
    \begin{equation}
        \abs{p_{+} + p_{\pm} - 2p_{-}} \leq 2,
        \;\;
        \abs{p_{+} - p_{\pm}} \leq 1,
        \;\; \text{ and } \;\;
        \abs{p_{+} \Gamma_{+} + p_{\pm} \Gamma_{\pm}
        - 2 p_{-} \Gamma_{-}} \leq
        \frac{2}{\sqrt{2 \pi}},
        \label{eq:p-bounds}
    \end{equation}
    where the last inequality follows from bounding the standard Gaussian density, and from the fact that $\cos(\gamma)$ and $\sin(\gamma)$ are both
    nonnegative when $\gamma \in [0, \pi/4]$.
    At the same time, \cref{lemma:cmu-defn} shows that
    \(
        p_{+} + p_{\pm} + 2 p_{-} \geq \cmu.
    \)
    Combining all the above bounds, we arrive at
    \begin{align*}
        \left[ \grad_{w_{k}} \loss^{\rot}(W) \right]_{1}
        &\leq
        -\frac{\sigma}{4}
        \set*{\left[
        \frac{\mu}{\sigma}\left(\left(\cmu \omega
        - \frac{2}{\omega}\right)
        \cos(\gamma) - \sin(\gamma)
        \right)
        \right] - \frac{2}{\sqrt{2 \pi}}
        } \\
        \left[ \grad_{w_{k}} \loss^{\rot}(W) \right]_{2}
        &\leq
        -\frac{\sigma}{4}
        \set*{\left[
        \frac{\mu}{\sigma}\left(\left(\cmu \omega
        - \frac{2}{\omega}\right)
        \sin(\gamma) - \cos(\gamma)
        \right)
        \right] - \frac{2}{\sqrt{2 \pi}}
        }
    \end{align*}
    Thus, we have $\sign(\grad_{w_k} \loss^{\rot}(W)) = [-1, -1]^{\T}$ whenever
    \begin{align}
    \left\{
    \begin{aligned}
        \frac{\mu}{\sigma} \left(\cmu \omega - \frac{2}{\omega}
        \right) \cos(\gamma) - \sin(\gamma)
        - \frac{2}{\sqrt{2 \pi}} &> 0 \\
        \frac{\mu}{\sigma} \left(\cmu \omega - \frac{2}{\omega}
        \right) \sin(\gamma) - \cos(\gamma)
        - \frac{2}{\sqrt{2 \pi}} &> 0
    \end{aligned}
    \right.
    \label{eq:sign-condition-1}
    \end{align}

    We note that in order for the above inequality to have solutions, a necessary condition is
    \begin{equation}\label{eq:omega-bound}
        \cmu \omega - \frac{2}{\omega} >0 \quad \iff \quad \omega  > \sqrt{2/\cmu}.
    \end{equation}
    Moreover, for $\gamma \in (0, \pi/4]$, $\cos(\gamma) \geq \sin(\gamma) \geq 0$. Thus both inequalities in \eqref{eq:sign-condition-1} are satisfied whenever
    \begin{equation}\label{eq:simplified-sign-condition-1}
        \frac{\mu}{\sigma} \left(\cmu \omega - \frac{2}{\omega}
        \right) \sin(\gamma) - \cos(\gamma)
        - \frac{2}{\sqrt{2 \pi}} > 0.
    \end{equation}

    \paragraph{Case 2: $a_{k} = -1$.}
    Again making use of~\eqref{eq:p-bounds} and the expression for $\grad_{w_k} \loss(W)$, we obtain
    \begin{align*}
        \left[\grad_{w_k} \loss^{\rot}(W)\right]_{1} &\geq
        \frac{\sigma}{4} \left(
        \frac{\mu}{\sigma} \left[
        \left(\cmu \omega - \frac{2}{\omega}\right) \cos(\gamma)
        - \sin(\gamma)
        \right] - \frac{2}{\sqrt{2 \pi}}
        \right) \\
        \left[\grad_{w_k} \loss^{\rot}(W)\right]_{2} &\geq
        \frac{\sigma}{4} \left(
        \frac{\mu}{\sigma} \left[
        \left(\cmu \omega - \frac{2}{\omega}\right) \sin(\gamma)
        - \cos(\gamma)
        \right] - \frac{2}{\sqrt{2 \pi}}
        \right)
    \end{align*}
    By the preceding display, we have
    $\sign(\grad_{w_k} \loss^{\rot}(W)) = [1, 1]^{\T}$ precisely when both conditions in~\eqref{eq:sign-condition-1}
    are satisfied.

    Combining the results from the two cases $a_k = \pm 1$, we conclude that under~\cref{assumption:realizability,assumption:general-rotation-omega-mu-sigma-relationship}, it holds that
    \[
        \sign(\grad_{w_k} \loss^{\rot}(W))
        = - a_{k} \cdot [1, 1]^{\T}.
    \]
\end{proof}

\subsection{Proofs from Section~\ref{sec:EGOP-reparameterization}}\label{ssec:appendix-equivariance}

We begin by generalizing Theorem~\ref{thm:EGOP-invariant-decision-boundaries-2d-gamma} to a broad class of data distributions, loss functions, and rotations, including higher-dimensional settings. In \cref{sssec:general-equivariance}, we briefly define generalized notation and state Theorem~\ref{thm:EGOP-invariant-decision-boundaries}, which subsumes Theorem~\ref{thm:EGOP-invariant-decision-boundaries-2d-gamma} as a special case. In \cref{sssec:proof-generalized-equivariance}, we prove Theorem~\ref{thm:EGOP-invariant-decision-boundaries}. In \cref{sssec:eigenmatrix-computation-assumption}, we comment on issues of non-uniqueness of the eigenbasis in the case of repeated eigenvalues, and why these issues can be resolved without loss of generality.

\subsubsection{Equivariance for Generalized Distributions and Loss Functions}\label{sssec:general-equivariance}

In this section we prove that EGOP-reparameterization endows adaptive algorithms for a broad family of netowrk architectures, data distributions, and loss functions. Consider feature-label pairs $(x, y) \in \mathbb{R}^d \times \mathbb{R}$. Our results hold for any feature-label distribution $\mathcal{D}$ and any loss function $\ell:\R\times \R \rightarrow \R$ for which the population loss is well-defined, in the sense that
\begin{equation}\label{eq:general-population-loss}
        \mathcal{L}(\theta; f) \defeq \mathbb{E}_{\mathcal{D}}\left[\ell\big(y; f(\theta; x)\big)\right]
\end{equation}
is finite for any fixed $\theta \in \R^p$.

Given any population loss $\mathcal{L}(\cdot, \cdot)$ of the form in \cref{eq:general-population-loss}, we use $\mathcal{L}^{(U)}(\cdot, \cdot)$ to denote the loss when the data distribution is transformed by some $U\in O(d)$:
\begin{equation}\label{eq:rotated-general-population-loss}
    \mathcal{L}^{(U)}(\theta; f) \defeq \mathbb{E}_{\mathcal{D}^{(U)}}\left[\ell\big(y; f(\theta; x)\big)\right].
\end{equation}
In the above, $\mathcal{D}^{(U)}(\cdot, \cdot)$ is as-defined in \cref{eq:def-rotated-dist}. 

The EGOP-reparameterization method leverages the EGOP matrix of the expected loss,
$\mathrm{EGOP}_{\rho}(\loss^{(U)})$ (cf.~\cref{def:egop-matrix}). Writing $V_{U} \in O(p)$ for the eigenbasis of the
EGOP matrix for data distribution $\mathcal{D}^{(U)}$,
\begin{equation}\label{eq:EGOP-eigenbasis}
    V_{U} \defeq \mathtt{eigenvectors}(\mathrm{EGOP}_{\rho}(\mathcal{L}^{(U)}(\cdot\,; f))),
\end{equation}
we define the EGOP-reparameterized network $f_{U}(\cdot\,; \cdot)$ 
\begin{equation}\label{eq:def-EGOP-forward-map}
    f_{U}(\theta; x) \defeq f(V_{U} \theta; x), 
\end{equation}
and the corresponding loss $\mathcal{L}^{(U)}(\theta; f_{U})$, which is an instantiation of \cref{eq:rotated-general-population-loss} with EGOP-reparameterized forward map $f_{U}(\cdot; \cdot)$.

We consider the class of deterministic algorithms, defined in \cref{eq:deterministic-update}, and prove that EGOP-reparameterization endows such algorithms with equivariance to orthogonal transformations in data space.
\begin{theorem}\label{thm:EGOP-invariant-decision-boundaries}
    Consider any family of feed-forward networks of the form in Definition~\ref{def:multilayer-network} and any deterministic, first-order algorithm $\mathcal{A}$. Consider data distribution $\mathcal{D}$, population loss $\mathcal{L}(\cdot, \cdot)$, and EGOP sampling distribution $\rho$ satisfying \cref{assumption:rotational-invariance}. For any $U\in O(d)$, let $\theta^{(U)}_T$ denote the result of optimizing the EGOP-reparameterized objective $\mathcal{L}^{(U)}(\cdot; f_{U})$ as defined in \cref{eq:rotated-general-population-loss} with $\mathcal{A}$ for $T$ timesteps, given initialization $\theta_0$. Then the resultant forward map defined in \cref{eq:def-EGOP-forward-map} is \emph{equivariant} under rotations of the data:
    \[
        f_{U}\left(\theta^{(U)}_t; x\right) = f_{\mathbb{I}_d}\left(\theta^{(\mathbb{I}_d)}_t; U^\T x\right) \quad \forall t\in \{1,\dots,T\}.
    \]
    Equivariance of the forward map immediately implies equivariance of the decision boundary. Moreover, the iterates themselves are \emph{invariant} under rotations of the data distribution: for all $t \in \{1,\dots,T\}$, we have that $\theta^{(U)}_t=\theta^{(\mathbb{I}_d)}_t$.
\end{theorem}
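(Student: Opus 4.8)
The plan is to reduce the theorem to two facts about the EGOP-reparameterized objects: (i) the reparameterized objective $\mathcal{L}^{(U)}(\cdot\,; f_U)$ does not depend on $U$ at all, so a deterministic algorithm launched from the same $\theta_0$ produces literally the same iterates regardless of $U$; and (ii) the reparameterized forward maps satisfy the pointwise identity $f_U(\theta; x) = f_{\mathbb{I}_d}(\theta; U^{\T} x)$ for \emph{every} $\theta$, not merely along the optimization trajectory. Together these give both the iterate-invariance claim $\theta^{(U)}_t = \theta^{(\mathbb{I}_d)}_t$ and the forward-map (hence decision-boundary) equivariance claim. The engine behind both facts is the generalization of Lemma~\ref{lemma:generalized-rotations-in-data-are-rotations-in-param-gamma} from planar rotations to arbitrary $U \in O(d)$, which supplies an orthogonal $Q \equiv Q^{(U)} \in O(p)$ with $f(\theta; Ux) = f(Q\theta; x)$ for all $\theta, x$; I would state and prove this generalization first, the matrix $Q^{(U)}$ being block-diagonal, rotating only the vectorized first-layer weight block (via a Kronecker product with $U^{\T}$) and fixing all remaining parameters.

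Next I would track how the EGOP matrix transforms under the data rotation. Using the change of variables $x \mapsto Ux$ in the expectation defining $\mathcal{L}^{(U)}$ (legitimate since $\lvert\det U\rvert = 1$), together with $f(\theta; Ux) = f(Q\theta; x)$, one gets $\mathcal{L}^{(U)}(\theta) = \mathcal{L}(Q\theta)$ where $\mathcal{L} \equiv \mathcal{L}^{(\mathbb{I}_d)}$; the chain rule then yields $\nabla \mathcal{L}^{(U)}(\theta) = Q^{\T} \nabla \mathcal{L}(Q\theta)$. Substituting this into Definition~\ref{def:egop-matrix} and invoking Assumption~\ref{assumption:rotational-invariance} (so that the law of $Q\theta$ under $\theta \sim \rho$ is again $\rho$, licensing another change of variables) gives $\mathrm{EGOP}_{\rho}(\mathcal{L}^{(U)}) = Q^{\T}\, \mathrm{EGOP}_{\rho}(\mathcal{L})\, Q$. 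Writing an eigendecomposition $\mathrm{EGOP}_{\rho}(\mathcal{L}) = V \Lambda V^{\T}$ with $V = V_{\mathbb{I}_d}$, this reads $\mathrm{EGOP}_{\rho}(\mathcal{L}^{(U)}) = (Q^{\T} V)\Lambda(Q^{\T} V)^{\T}$, so the eigenbasis may be taken to be $V_U = Q^{\T} V$.

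With $V_U = Q^{\T} V$ in hand, the two key facts fall out of a few substitutions. For (i): $f_U(\theta; Ux) = f(V_U\theta; Ux) = f(Q V_U\theta; x) = f(Q Q^{\T} V\theta; x) = f(V\theta; x) = f_{\mathbb{I}_d}(\theta; x)$, hence $\mathcal{L}^{(U)}(\theta; f_U) = \mathbb{E}_{\mathcal{D}}[\ell(y; f_U(\theta; Ux))] = \mathbb{E}_{\mathcal{D}}[\ell(y; f_{\mathbb{I}_d}(\theta; x))] = \mathcal{L}(\theta; f_{\mathbb{I}_d})$, independent of $U$; an induction on $t$ using the deterministic update rule~\eqref{eq:deterministic-update} (same initialization, same gradients) then gives $\theta^{(U)}_t = \theta^{(\mathbb{I}_d)}_t$. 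For (ii): rearranging $f(\theta; Uz) = f(Q\theta; z)$ gives $f(\psi; U^{\T} x) = f(Q^{\T}\psi; x)$, hence $f_{\mathbb{I}_d}(\theta; U^{\T} x) = f(V\theta; U^{\T} x) = f(Q^{\T} V\theta; x) = f(V_U\theta; x) = f_U(\theta; x)$. Combining (i) and (ii) proves the forward-map equivariance along the trajectory, and the decision-boundary statement follows by applying $U$ to the level set.

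The one genuinely delicate point is the step ``$\mathrm{EGOP}_{\rho}(\mathcal{L}^{(U)}) = (Q^{\T} V)\Lambda (Q^{\T} V)^{\T} \implies V_U = Q^{\T} V$'': when the EGOP matrix has repeated eigenvalues its eigenbasis is determined only up to an orthogonal rotation within each eigenspace, so $\mathtt{eigenvectors}(\cdot)$ must be chosen consistently under conjugation by $Q$. I would dispatch this exactly as in~\cref{sssec:eigenmatrix-computation-assumption}, noting that one can assume without loss of generality that the eigenvector routine commutes with orthogonal conjugation (equivalently, restrict to the generic simple-spectrum case, where $V_U = Q^{\T} V$ holds up to irrelevant per-coordinate signs that do not affect the span used for reparameterization). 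Everything else is routine bookkeeping with orthogonal matrices and changes of variables.
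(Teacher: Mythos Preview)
Your proposal is correct and mirrors the paper's own proof almost exactly: the paper establishes the same chain of lemmas---$f(\theta;Ux)=f(Q^{(U)}\theta;x)$ (Lemma~\ref{lemma:generalized-rotations-in-data-are-rotations-in-param}), the EGOP similarity $\mathrm{EGOP}_\rho(\mathcal{L}^{(U)})=(Q^{(U)})^{\T}\mathrm{EGOP}_\rho(\mathcal{L})Q^{(U)}$ giving $V_U=(Q^{(U)})^{\T}V_{\mathbb{I}_d}$ (Lemma~\ref{lemma:gen-rotational-invariance-fact4}), the forward-map identity $f_U(\theta;x)=f_{\mathbb{I}_d}(\theta;U^{\T}x)$ (Lemma~\ref{lemma:gen-rotational-invariance-fact5}), and the objective invariance $\mathcal{L}^{(U)}(\theta;f_U)=\mathcal{L}(\theta;f_{\mathbb{I}_d})$ (Lemma~\ref{thm:objective-invariant})---followed by the same induction on $t$ and the same treatment of the repeated-eigenvalue subtlety. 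The only cosmetic difference is that you obtain the forward-map identity by the orthogonality substitution $\theta\mapsto Q^{\T}\psi$, $x\mapsto U^{\T}x$, whereas the paper unrolls the network layer by layer; both are valid and yield the same conclusion.
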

We note that while these results are presented for reparameterization with the exact EGOP eigenbasis, as defined in \cref{eq:EGOP-eigenbasis}, our proofs extend to the setting when the EGOP matrix is estimated with gradient samples.

\subsubsection{Proof of Theorem~\ref{thm:EGOP-invariant-decision-boundaries}}\label{sssec:proof-generalized-equivariance}

To prove \cref{thm:EGOP-invariant-decision-boundaries}, we first prove Lemma~\ref{lemma:generalized-rotations-in-data-are-rotations-in-param}, which generalizes Lemma~\ref{lemma:generalized-rotations-in-data-are-rotations-in-param-gamma}, and establish a few helper lemmas. Note that in the special case covered by Lemma~\ref{lemma:generalized-rotations-in-data-are-rotations-in-param-gamma}, we have that
\begin{equation}\label{eq:def-Q-U-gamma}
    Q^{(\gamma)} \defeq Q^{(U(\gamma))}
\end{equation}
for $Q^{(U)}$ defined below.
\begin{lemma}\label{lemma:generalized-rotations-in-data-are-rotations-in-param}
    Consider any $\mathcal{F}$ satisfying Definition~\ref{def:multilayer-network}, any $f\in \mathcal{F}$ and any $U\in O(d)$. Then there exists a $Q^{(U)} \in O(p)$ such that
    \[
        f(\theta; Ux) = f(Q^{(U)}\theta; x).
    \]
    Specifically, $Q^{(U)}$ can be constructed as
    \begin{equation}\label{eq:def-Q-U}
        Q^{(U)} \defeq \begin{bmatrix}
            U^\T \otimes \mathbb{I}_{m_1} & \vec{0}_{m_1 d}(\vec{0}_{p-m_1 d})^\T \\
            \vec{0}_{p-m_1 d}{(\vec{0}_{m_1 d})}^\T & \mathbb{I}_{p-(m_1 d)} 
        \end{bmatrix}
    \end{equation}
    Here $\vec{0}_k$ denotes the all-zeros vector of dimension $k$.
\end{lemma}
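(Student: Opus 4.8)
The plan is to exploit the fact that, in any architecture of Definition~\ref{def:multilayer-network}, the raw input $x$ enters the forward map \emph{only} through the first hidden layer: one has $h_1(\theta; x) = x$, and $h_1$ appears nowhere except inside $h_2(\theta; x) = \act_2(W_1 h_1(\theta; x) + b_1) = \act_2(W_1 x + b_1)$, after which every subsequent $h_j$, and hence $f$, depends on $x$ solely through $h_2$. Since $W_1(Ux) = (W_1 U)x$, feeding $Ux$ into the network with parameters $(W_1, W_2, \dots, b_1, \dots)$ produces exactly the same output as feeding $x$ into the network with the first-layer weight matrix replaced by $W_1 U$ and all other weights and biases unchanged.

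It remains to realize the map $W_1 \mapsto W_1 U$ as left-multiplication of $\theta$ by an orthogonal matrix. First I would invoke the (column-major) vectorization identity $\vecop(AXB) = (B^\T \otimes A)\vecop(X)$ with $A = \mathbb{I}_{m_1}$, $X = W_1$, $B = U$, which gives $\vecop(W_1 U) = (U^\T \otimes \mathbb{I}_{m_1})\vecop(W_1)$. The matrix $U^\T \otimes \mathbb{I}_{m_1}$ is orthogonal, since $(U^\T \otimes \mathbb{I}_{m_1})^\T (U^\T \otimes \mathbb{I}_{m_1}) = (U U^\T) \otimes \mathbb{I}_{m_1} = \mathbb{I}_{m_1 d}$. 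Because $\theta$ places $\vecop(W_1)$ in its first $m_1 d$ coordinates and everything that is to be left untouched ($\vecop(W_2), \dots, \vecop(W_{L+1})$ together with all the bias vectors) in the remaining $p - m_1 d$ coordinates, the correct operator is exactly the block-diagonal matrix $Q^{(U)}$ of \eqref{eq:def-Q-U}; it lies in $O(p)$ as a block-diagonal matrix whose blocks $U^\T \otimes \mathbb{I}_{m_1}$ and $\mathbb{I}_{p - m_1 d}$ are orthogonal.

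Finally I would check $f(Q^{(U)}\theta; x) = f(\theta; Ux)$ directly. By construction $Q^{(U)}\theta$ is the parameter vector of the same network with first-layer weights $W_1 U$ and all other weights and biases equal to those in $\theta$, so $h_2(Q^{(U)}\theta; x) = \act_2(W_1 U x + b_1) = \act_2(W_1 (Ux) + b_1) = h_2(\theta; Ux)$, and a one-line induction on $j$ — using that the recursion for $h_j$ with $j \ge 3$ involves only $h_{j-1}$ and the unchanged parameters $W_{j-1}, b_{j-1}$ — yields $h_j(Q^{(U)}\theta; x) = h_j(\theta; Ux)$ for every $j$, whence $f(Q^{(U)}\theta; x) = f(\theta; Ux)$. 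I do not anticipate a genuine obstacle: the only points requiring care are keeping the layer indexing straight and applying the vectorization identity in the form consistent with the paper's column-major convention. The generalizations promised in the footnote (convolutional and residual networks with linear-projection shortcuts) follow the same template, since there too $x$ enters the forward map only through a fixed linear map applied to it.
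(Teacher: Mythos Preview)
Your proposal is correct and follows essentially the same approach as the paper's own proof: both reduce the claim to the observation that $W_1(Ux)=(W_1U)x$, invoke the standard vectorization/Kronecker identity to rewrite $W_1\mapsto W_1U$ as left-multiplication by $U^\T\otimes\mathbb{I}_{m_1}$ on $\vecop(W_1)$, and then induct layer by layer using that all later parameters are untouched by $Q^{(U)}$. Your write-up is in fact slightly more complete than the paper's, since you explicitly verify that $Q^{(U)}\in O(p)$, which the paper asserts but does not check.
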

\begin{proof}
    By definition, $h_1(\theta; Ux) = Ux$. Thus
    \begin{align*}
        h_2(\theta; Ux) &= \act_2(W_1 h_1(\theta; Ux) + b_1)\\
        &=\act_2(W_1 Ux + b_1)\\
        &=\act_2\left(\reshape\left((U^\T \otimes \mathbb{I}_{m_1})\vecop(W_1)\right) x + b_1\right)
    \end{align*}
    where the last equality uses the following property of Kronecker products:
    \[
        \reshape\left((A\otimes B)\theta\right) = B \reshape(\theta) A^\T.
    \]
    Consider the block-diagonal matrix $Q^{(U)} \in O(p)$, defined in \cref{eq:def-Q-U}. In what follows, let $Q\defeq Q^{(U)}$ for brevity. Under this transformation, the first $m_1\cdot d$ entries of $Q\theta$ are
    \[
        (Q \theta)_{1:(m_1\cdot d)} = (U^\T \otimes \mathbb{I}_{m_1})\theta_{1:m_1}= (U^\T \otimes \mathbb{I}_{m_1})\vecop(W_1)
    \]
    because $\vecop(W_1) =\theta_{1:m_1}$, and the subsequent entries of $\theta$ (hence, the weights of later layers and all bias vectors) are unchanged:
    \[
        [Q \theta]_{j} = \theta_{j}, \quad \forall j\in [m_1\cdot d + 1, p].
    \]
    Thus
    \[
        \act_2\left(\reshape\left((U^\T \otimes \mathbb{I}_{m_1})\vecop(W_1)\right) x +b_1\right) = h_2(Q\theta; x)
    \]
    and we conclude
    \[
        h_2(\theta; Ux) = h_2(Q\theta; x).
    \]
    Similarly, because $Q$ leaves $W_\ell, b_{\ell}$ invariant for all $\ell \in [2,L+1]$,
    \[
        h_3(\theta; Ux) = \act_3(W_2 h_2(\theta; Ux) + b_2) = \act_3(W_2(h_2(Q\theta;x)) + b_2) = h_3(Q\theta;x).
    \]
    Applying the same argument recursively yields
    \[
        h_\ell(\theta; Ux) = h_\ell(Q\theta; x) \quad \forall \ell \in [2, L+1]
    \]
    and we thus conclude
    \[
        f(\theta; Ux) = W_{L+1} h_L(\theta; Ux) + b_{L+1} = W_{L+1} h_L(Q\theta; x)  + b_{L+1} = f(Q\theta; x)
    \]
    where we've used the fact that $Q$ leaves $W_{L+1}$ invariant.
\end{proof}

\begin{lemma}\label{lemma:gen-rotational-invariance-fact1}
    Assume the expectation in \cref{eq:population_loss} is finite for any fixed $\theta\in \R^p$. Then for any $\theta\in \R^{p}$ and any  $U \in O(d)$, we have that
    \begin{align*}
        \mathbb{E}_{\mathcal{D}^{(U)}}[\ell(y; f(\theta; x))] &= \mathbb{E}_{\mathcal{D}}[\ell(y; f(\theta; Ux))]; \\
        \mathbb{E}_{\mathcal{D}^{(U)}}[\ell(y; f_{U}(\theta; x))] &= \mathbb{E}_{\mathcal{D}}[\ell(y; f_{U}(\theta; Ux))];
    \end{align*}
\end{lemma}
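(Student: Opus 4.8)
The plan is to establish both identities with a single change-of-variables (pushforward) computation, applied once with the forward map $f$ and once with the EGOP-reparameterized map $f_{U}$.

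First I would unpack the defining relation of the rotated distribution from~\eqref{eq:def-rotated-dist}: $\mathcal{D}^{(U)}\big((x,y)\big) = \mathcal{D}\big((U^{\T}x, y)\big)$ says precisely that $\mathcal{D}^{(U)}$ is the pushforward of $\mathcal{D}$ under the measurable, bijective map $\Psi_{U}\colon (x,y)\mapsto (Ux, y)$ (bijective since $U\in O(d)$). Consequently, by the law of the unconscious statistician, for any measurable $g\colon \R^{d}\times\R\to\R$ that is integrable against $\mathcal{D}^{(U)}$ we have
\[
    \mathbb{E}_{(x,y)\sim\mathcal{D}^{(U)}}[g(x,y)] = \mathbb{E}_{(x',y)\sim\mathcal{D}}[g(Ux', y)].
\]

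Next I would instantiate this display with $g(x,y) = \ell\big(y; f(\theta; x)\big)$ for a fixed $\theta\in\R^{p}$. The hypothesis that the population loss~\eqref{eq:general-population-loss} is finite for every parameter guarantees that $g$ is integrable (the two expectations in the display coincide, so finiteness on one side is finiteness on the other), and we obtain the first claimed identity. For the second identity I would apply the same display with $g(x,y) = \ell\big(y; f_{U}(\theta; x)\big)$; since $f_{U}(\theta;x) = f(V_{U}\theta; x)$ by~\eqref{eq:def-EGOP-forward-map}, integrability follows from the finiteness hypothesis applied at the point $V_{U}\theta\in\R^{p}$, and the substitution $x = Ux'$ yields $\mathbb{E}_{\mathcal{D}^{(U)}}[\ell(y; f_{U}(\theta;x))] = \mathbb{E}_{\mathcal{D}}[\ell(y; f_{U}(\theta; Ux))]$.

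There is essentially no obstacle here; the only point demanding minor care is the integrability bookkeeping for $f_{U}$, which is dispatched by observing $f_{U} = f\circ(V_{U}\,\cdot\,)$ and reducing to the stated finiteness assumption evaluated at the shifted parameter $V_{U}\theta$. I would also remark that the argument uses nothing about the network structure of $f$---it holds verbatim for any measurable map $f(\theta;\cdot)$---so the same reasoning covers $f_{U}$, or indeed any reparameterization of $f$, without modification.
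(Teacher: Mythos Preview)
Your proposal is correct and follows essentially the same approach as the paper: both invoke the change-of-variables identity $\mathbb{E}_{\mathcal{D}^{(U)}}[g(x,y)] = \mathbb{E}_{\mathcal{D}}[g(Ux,y)]$ for integrable $g$, instantiate it with $g(x,y)=\ell(y;f(\theta;x))$, and then handle $f_{U}$ by writing $f_{U}(\theta;x)=f(V_{U}\theta;x)$ to reduce integrability to the finiteness hypothesis at the parameter $V_{U}\theta$.
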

\begin{proof}[Proof of Lemma~\ref{lemma:gen-rotational-invariance-fact1}]
    By the definition of $\mathcal{D}^{(U)}$ in~\cref{eq:def-rotated-dist}, we have that
    $
        \mathcal{D}^{(U)}\left((x, y)\right) = \mathcal{D}\left((U ^\T x, y)\right).
    $
    Thus for any function $g:\R^d\times \R \rightarrow \R$ whose expectation with respect to $\mathcal{D}(\cdot,\cdot)$ is finite, it holds that
    \[
        \mathbb{E}_{ \mathcal{D}^{U}}\left[ g\left((x, y)\right)\right] = \mathbb{E}_{\mathcal{D}}\left[ g\big((U x, y)\big)\right].
    \]
    For settings when the expectation in \cref{eq:population_loss} is finite for any fixed $\theta\in \R^p$, setting $g(x,y) = f(\theta; x)$ in the above equality implies the first result.
    
    Recall that by definition, the EGOP-reparameterized network satisfies
    \[
        f_{U}(\theta; x) \defeq f(V_{U} \theta; x).
    \]
    Under the assumption that the expectation in \cref{eq:population_loss} is finite for any $\theta\in \R^p$, we conclude
    \[
        \mathbb{E}_{\mathcal{D}}[\ell(y; f_{U}(\theta; x))] = \mathbb{E}_{\mathcal{D}}[\ell(y; f(V_{U}\theta; x))]
    \]
    is finite for any $\theta$. Thus, the result for $f_{U}$ follows by an argument analogous to that for $f$.
\end{proof}


\begin{lemma}\label{lemma:gen-rotational-invariance-fact4}
    For any $\rho(\cdot)$ satisfying Assumption~\ref{assumption:rotational-invariance} and any $U\in O(d)$,
    \[
        V_U = \left(Q^{(U)}\right)^\T V_{\mathbb{I}_d}  
    \]
    where $V_U$ is defined in \eqref{eq:EGOP-eigenbasis}, $V_{\mathbb{I}_d}$ is an instantiation of \cref{eq:EGOP-eigenbasis} with the identity transformation $\mathbb{I}_d$,  and $Q^{(U)}$ is defined in \eqref{eq:def-Q-U}.
\end{lemma}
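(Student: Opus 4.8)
The plan is to exhibit $\mathrm{EGOP}_\rho(\mathcal{L}^{(U)})$ and $\mathrm{EGOP}_\rho(\mathcal{L}^{(\mathbb{I}_d)})$ as orthogonally similar matrices --- conjugate through $Q^{(U)}$ --- and then read off the relation between their eigenbases. First I would combine \cref{lemma:gen-rotational-invariance-fact1} with \cref{lemma:generalized-rotations-in-data-are-rotations-in-param} to rewrite the rotated population loss as a reparameterization of the unrotated one: for every $\theta \in \R^p$,
\[
    \mathcal{L}^{(U)}(\theta; f) = \mathbb{E}_{\mathcal{D}}[\ell(y; f(\theta; Ux))] = \mathbb{E}_{\mathcal{D}}[\ell(y; f(Q^{(U)}\theta; x))] = \mathcal{L}^{(\mathbb{I}_d)}(Q^{(U)}\theta; f).
\]
Differentiating (exchanging $\grad$ and $\mathbb{E}$, justified under the standing finiteness assumption on the population loss) and using the chain rule then gives $\grad \mathcal{L}^{(U)}(\theta; f) = (Q^{(U)})^\T \grad \mathcal{L}^{(\mathbb{I}_d)}(Q^{(U)}\theta; f)$.

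Next I would substitute this gradient identity into the definition of the EGOP matrix (\cref{def:egop-matrix}), pull the orthogonal factors $Q^{(U)}$ and $(Q^{(U)})^\T$ outside the expectation, and apply the change of variables $\theta' = Q^{(U)}\theta$. Since $Q^{(U)} \in O(p)$ and $\rho$ is rotationally invariant (\cref{assumption:rotational-invariance}), this change of variables preserves $\rho$, so the remaining inner expectation is exactly $\mathrm{EGOP}_\rho(\mathcal{L}^{(\mathbb{I}_d)})$, yielding
\[
    \mathrm{EGOP}_\rho(\mathcal{L}^{(U)}) = (Q^{(U)})^\T\, \mathrm{EGOP}_\rho(\mathcal{L}^{(\mathbb{I}_d)})\, Q^{(U)}.
\]
Finally, writing the eigendecomposition $\mathrm{EGOP}_\rho(\mathcal{L}^{(\mathbb{I}_d)}) = V_{\mathbb{I}_d}\Lambda V_{\mathbb{I}_d}^\T$ returned by $\mathtt{eigenvectors}(\cdot)$, I would observe that $\mathrm{EGOP}_\rho(\mathcal{L}^{(U)}) = \big((Q^{(U)})^\T V_{\mathbb{I}_d}\big)\Lambda\big((Q^{(U)})^\T V_{\mathbb{I}_d}\big)^\T$ with $(Q^{(U)})^\T V_{\mathbb{I}_d} \in O(p)$ --- i.e.\ this is a valid orthogonal eigendecomposition with the same spectrum $\Lambda$ --- and conclude $V_U = (Q^{(U)})^\T V_{\mathbb{I}_d}$.

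The step I expect to require the most care is this last one: eigenvector matrices are determined only up to sign flips, up to permutations of equal eigenvalues, and up to arbitrary rotations within repeated-eigenvalue subspaces, so passing from ``$(Q^{(U)})^\T V_{\mathbb{I}_d}$ is \emph{an} admissible eigenbasis'' to ``$V_U$ \emph{equals} $(Q^{(U)})^\T V_{\mathbb{I}_d}$'' requires that the routine $\mathtt{eigenvectors}(\cdot)$ commute with orthogonal conjugation, i.e.\ $\mathtt{eigenvectors}(Q^\T M Q) = Q^\T\,\mathtt{eigenvectors}(M)$ for $Q \in O(p)$. This is precisely the convention discussed and justified in \cref{sssec:eigenmatrix-computation-assumption}, which I would invoke to close the argument; when the EGOP spectrum is simple, the identity holds (up to a fixed sign choice per eigenvector) with no further assumption.
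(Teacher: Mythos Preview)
Your proposal is correct and follows essentially the same approach as the paper's proof: establish $\mathcal{L}^{(U)}(\theta;f)=\mathcal{L}(Q^{(U)}\theta;f)$ via \cref{lemma:gen-rotational-invariance-fact1,lemma:generalized-rotations-in-data-are-rotations-in-param}, differentiate, substitute into the EGOP expectation, use rotational invariance of $\rho$ to absorb the change of variables, and read off the eigenbasis relation from the resulting orthogonal similarity. Your explicit treatment of the eigenbasis non-uniqueness and the appeal to \cref{sssec:eigenmatrix-computation-assumption} is exactly the right way to close the final step.
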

\begin{proof}[Proof of Lemma~\ref{lemma:gen-rotational-invariance-fact4}]
    Consider the map $\mathcal{L}^{(U)}(\cdot; f):\R^{p}
    \rightarrow \R$. Expanding the definition of $\loss^{(U)}$ and applying Lemmas~\ref{lemma:generalized-rotations-in-data-are-rotations-in-param} and \ref{lemma:gen-rotational-invariance-fact1},
    \begin{align*}
        \loss^{(U)} (\theta; f) = \mathbb{E}_{ \mathcal{D}^{(U)}}[\ell(y; f(\theta; x)]
        = \mathbb{E}_{\mathcal{D}}[\ell(y; f(\theta; Ux)]
        = \mathbb{E}_{\mathcal{D}}[\ell(y; f(Q^{(U)}\theta; x)]
        = \loss\left(Q^{U} \theta; f\right).
    \end{align*}
    Thus by chain rule, we conclude
    \[
        \grad_\theta \loss^{(U)} (\theta; f) = \left(Q^{(U)}\right)^\T \grad \loss(Q^{U} \theta; f).
    \]
    We now substitute the above characterization of the gradient into the definition of  $\EGOP\left(\mathcal{L}^{(U)}\right)$ (cf. \cref{eq:egop-def}): 
    \begin{align*}
        &\EGOP\left(\mathcal{L}^{(U)}(\cdot; f)\right)= \mathbb{E}_{\theta \sim \rho}[\nabla_{\theta} \mathcal{L}^{(U)}(\theta; f)\nabla_{\theta} \mathcal{L}^{(U)}(\theta; f)^\T]\\
        &\quad =
        \mathbb{E}_{\theta \sim \rho}
        \left[
            \left(Q^{(U)}\right)^\T \grad \loss(Q^{U} \theta; f) \left(\left(Q^{(U)}\right)^\T \grad \loss(Q^{U} \theta; f)\right)^\T
        \right]\\
        &\quad =
        \left(Q^{(U)}\right)^\T \mathbb{E}_{\theta \sim \rho}
        \left[
             \grad \loss(Q^{U} \theta; f) \left(\grad \loss(Q^{U} \theta; f)\right)^\T
        \right] Q^{(U)}.
    \end{align*}
    Because $\rho(\cdot)$ is rotationally invariant and $Q^{(U)}\in O(p)$, the above implies
    \begin{align*}
        &\EGOP\left(\mathcal{L}^{(U)}\right)= \left(Q^{(U)}\right)^\T \mathbb{E}_{\theta \sim \rho}
        \left[
             \grad \loss(\theta; f) \left(\grad \loss(\theta; f)\right)^\T
        \right] Q^{(U)}\\
        &\quad =\left(Q^{(U)}\right)^\T \EGOP\left(\mathcal{L}^{(\mathbb{I}_d)}\right) Q^{(U)}
    \end{align*}
    because by definition,
    \[
        \mathbb{E}_{\theta \sim \rho}
        \left[
             \grad \loss(\theta; f) \left(\grad \loss(\theta; f)\right)^\T
        \right] = \EGOP(\mathcal{L}(\cdot; f)) =  \EGOP\left(\mathcal{L}^{(\mathbb{I}_d)}(\cdot; f)\right).
    \]
    Thus $\EGOP\left(\mathcal{L}^{(U)}\right)$ and $\EGOP\left(\mathcal{L}^{(\mathbb{I}_d)}\right)$ are equal up to similarity, so their eigenvectors satisfy
    \[
         V_U = \left(Q^{(U)}\right)^\T V_{\mathbb{I}_d}.
    \]
\end{proof}


\begin{lemma}\label{lemma:gen-rotational-invariance-fact5}
    For all $\theta\in \R^{p}$, all $x\in \R^d$, and all $U\in O(d)$,
    \[
        f_U(\theta; x) = f_{\mathbb{I}_d}(\theta; U^\T x).
    \]
\end{lemma}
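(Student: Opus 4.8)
The plan is to chain together the two structural facts already in hand. Unwinding the definition $f_U(\theta; x) \defeq f(V_U \theta; x)$ from~\cref{eq:def-EGOP-forward-map}, the desired identity $f_U(\theta; x) = f_{\mathbb{I}_d}(\theta; U^\T x)$ is equivalent to $f(V_U \theta; x) = f(V_{\mathbb{I}_d}\theta; U^\T x)$. Since Lemma~\ref{lemma:gen-rotational-invariance-fact4} gives $V_U = (Q^{(U)})^\T V_{\mathbb{I}_d}$, it suffices to establish the more basic ``inverted'' form of Lemma~\ref{lemma:generalized-rotations-in-data-are-rotations-in-param}, namely
\[
    f\big((Q^{(U)})^\T \psi; x\big) = f\big(\psi; U^\T x\big) \quad \text{for all } \psi \in \R^p,\ x \in \R^d,
\]
and then instantiate it at $\psi = V_{\mathbb{I}_d}\theta$.

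First I would derive the inverted form. Lemma~\ref{lemma:generalized-rotations-in-data-are-rotations-in-param} states $f(\theta; Ux) = f(Q^{(U)}\theta; x)$ for all $\theta, x$. Replacing $x$ by $U^\T x$ — a legitimate substitution because $x \mapsto U^\T x$ is a bijection of $\R^d$ — and using $U U^\T = \mathbb{I}_d$ yields $f(\theta; x) = f(Q^{(U)}\theta; U^\T x)$. Next, replacing $\theta$ by $(Q^{(U)})^\T \psi$ — again legitimate since $\theta \mapsto (Q^{(U)})^\T \theta$ is a bijection of $\R^p$, as $Q^{(U)} \in O(p)$ — and using $Q^{(U)}(Q^{(U)})^\T = \mathbb{I}_p$ gives exactly $f\big((Q^{(U)})^\T \psi; x\big) = f(\psi; U^\T x)$.

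Finally I would set $\psi = V_{\mathbb{I}_d}\theta$. The left-hand side becomes $f\big((Q^{(U)})^\T V_{\mathbb{I}_d}\theta; x\big) = f(V_U \theta; x) = f_U(\theta; x)$, using Lemma~\ref{lemma:gen-rotational-invariance-fact4} and then the definition~\cref{eq:def-EGOP-forward-map}; the right-hand side becomes $f(V_{\mathbb{I}_d}\theta; U^\T x) = f_{\mathbb{I}_d}(\theta; U^\T x)$ by the same definition applied with the identity transformation $\mathbb{I}_d$. Combining gives the claim.

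There is essentially no substantive obstacle here: all the content has been front-loaded into Lemmas~\ref{lemma:generalized-rotations-in-data-are-rotations-in-param} and~\ref{lemma:gen-rotational-invariance-fact4}, and what remains is a short bookkeeping argument. The only point that warrants a moment of care is to verify that each variable substitution ranges over the full domain, so that the universally quantified identities are preserved — this is exactly guaranteed by the orthogonality (hence invertibility) of $U$ and $Q^{(U)}$.
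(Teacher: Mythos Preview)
Your proof is correct and takes a genuinely different, more economical route than the paper. The paper's proof of this lemma re-derives the layer-by-layer computation---essentially repeating the induction over layers from the proof of Lemma~\ref{lemma:generalized-rotations-in-data-are-rotations-in-param}---to show directly that $f\big((Q^{(U)})^\T V_{\mathbb{I}_d}\theta; x\big) = f\big(V_{\mathbb{I}_d}\theta; U^\T x\big)$, by tracking how $(Q^{(U)})^\T$ acts on the first-layer weights and propagating through $h_2, h_3, \dots$. You instead recognize that the ``inverted'' identity $f\big((Q^{(U)})^\T\psi; x\big) = f(\psi; U^\T x)$ is an immediate consequence of Lemma~\ref{lemma:generalized-rotations-in-data-are-rotations-in-param} via two bijective substitutions, avoiding any need to reopen the network architecture. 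Your approach is shorter and cleanly factors the argument through previously established lemmas; the paper's approach is self-contained but redundant with the earlier proof. (Incidentally, the paper does use essentially your shortcut later, in the proof of Theorem~\ref{thm:Shampoo-equvariance-result}, by noting $(Q^{(U)})^\T = Q^{(U^\T)}$ and reapplying Lemma~\ref{lemma:generalized-rotations-in-data-are-rotations-in-param}.)
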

\begin{proof}[Proof of Lemma~\ref{lemma:gen-rotational-invariance-fact5}]
    We begin by expanding $f_U(\cdot)$ following the definition in \eqref{eq:def-EGOP-forward-map}:
    \begin{align*}
        f_U(\theta; x) = f(V_U \theta; x) = f\left(\left(Q^{(U)}\right)^\T V_{\mathbb{I}_d} \theta; x\right).
    \end{align*}
    By \cref{eq:def-Q-U},
    \[
        \left(Q^{(U)}\right)^\T = \begin{bmatrix}
            U^\T \otimes \mathbb{I}_{m_1} & \vec{0}_{m_1 d}(\vec{0}_{p-m_1 d})^\T \\
            \vec{0}_{p-m_1 d}{(\vec{0}_{m_1 d})}^\T & \mathbb{I}_{p-(m_1 d)} 
        \end{bmatrix}^\T = \begin{bmatrix}
            U \otimes \mathbb{I}_{m_1} & \vec{0}_{p-m_1 d}(\vec{0}_{m_1 d})^\T \\
            \vec{0}_{m_1 d}(\vec{0}_{p-m_1 d})^\T & \mathbb{I}_{p-(m_1 d)} 
        \end{bmatrix}
    \]
    where we've used the fact that $(A\otimes B)^\T = A^\T \otimes B^\T$. Thus
    \begin{align*}
        \left[\left(Q^{(U)}\right)^\T V_{\mathbb{I}_d} \theta\right]_{j} &= \left[V_{\mathbb{I}_d} \theta\right]_{j} \quad \forall j\in [d m_1 + 1, p- dm_1]; \quad \text{and} \\
        \left[\left(Q^{(U)}\right)^\T V_{\mathbb{I}_d} \theta\right]_{1:(d m_1)} &= (U^\T \otimes \mathbb{I}_{m_1})^\T \left[ V_{\mathbb{I}_d} \theta\right]_{1:(d m_1)} = \mathbb{I}_{m_1} \reshape\left(\left[V_{\mathbb{I}_d} \theta\right]_{1:(d m_1)}\right) U^\T,
    \end{align*}
    where the latter follows from the matrix-vector property of Kronecker products $$\reshape\left((A\otimes B)\theta\right) = B \reshape(\theta) A^\T.$$

    Given a parameter vector $z\in \R^p$, let $W_k[z]$ and $b_k[z]$ denote the weights and biases defined by the entries of $z$, following~\cref{def:multilayer-network}. Using this notation, the above implies
    \begin{equation}\label{eq:Q-weights}
        W_k\left[\left(Q^{(U)}\right)^\T V_{\mathbb{I}_d} \theta\right] = \begin{cases}
            W_k\left[ V_{\mathbb{I}_d} \theta\right]U^\T \quad &\text{if $k=1$}; \\
             W_k\left[ V_{\mathbb{I}_d} \theta\right]  &\text{otherwise}.
        \end{cases}
    \end{equation}
    and
    \begin{equation}\label{eq:Q-biases}
        b_k\left[\left(Q^{(U)}\right)^\T V_{\mathbb{I}_d} \theta\right] = b_k\left[V_{\mathbb{I}_d} \theta\right], \;\; \text{for all $k$.}
    \end{equation}

    Recalling the definitions of the layer output functions $h_k$ in~\cref{def:multilayer-network}, this implies 
    \[
        h_1\left(\left(Q^{(U)}\right)^\T V_{\mathbb{I}_d} \theta; x\right) = x
    \]
    and thus, applying \cref{eq:Q-weights} and \cref{eq:Q-biases},
    \begin{align*}
        h_2\left(\left(Q^{(U)}\right)^\T V_{\mathbb{I}_d} \theta; x\right) &= \sigma_2\left( W_1\left[\left(Q^{(U)}\right)^\T V_{\mathbb{I}_d} \theta\right] x +b_1\left[\left(Q^{(U)}\right)^\T V_{\mathbb{I}_d} \theta\right] \right)\\
        &= \sigma_2\left( W_1\left[V_{\mathbb{I}_d} \theta\right]U^\T x +b_1\left[V_{\mathbb{I}_d} \theta\right] \right)\\
        &= h_2\left( V_{\mathbb{I}_d} \theta;U^\T x\right).
    \end{align*}
    Thus by induction, for all $k\in [3, L+1]$,
    \begin{align*}
        h_k\left(\left(Q^{(U)}\right)^\T V_{\mathbb{I}_d} \theta; x\right) &= \sigma_k\Bigg( W_{k-1}\left[\left(Q^{(U)}\right)^\T V_{\mathbb{I}_d} \theta\right] h_{k-1}\left(\left(Q^{(U)}\right)^\T V_{\mathbb{I}_d} \theta; x\right)\\
        &\hspace{2cm} +b_{k-1}\left(\left(Q^{(U)}\right)^\T V_{\mathbb{I}_d} \theta\right) \Bigg)\\
        &= \sigma_k\Bigg( W_{k-1}\left[V_{\mathbb{I}_d} \theta\right] h_{k-1}\left(\left(Q^{(U)}\right)^\T V_{\mathbb{I}_d} \theta; x\right) +b_{k-1}\left[V_{\mathbb{I}_d} \theta\right] \Bigg)\\
        &= \sigma_k\Bigg( W_{k-1}\left[V_{\mathbb{I}_d} \theta\right] h_{k-1}\left(V_{\mathbb{I}_d} \theta; U^\T x\right) +b_{k-1}\left[V_{\mathbb{I}_d} \theta\right] \Bigg)\\
        &=h_k\left(V_{\mathbb{I}_d} \theta; U^\T x\right).
    \end{align*}
    We thus conclude 
    \begin{align*}
        f\left(\left(Q^{(U)}\right)^\T V_{\mathbb{I}_d} \theta; x\right) &= W_{L+1}\left[\left(Q^{(U)}\right)^\T V_{\mathbb{I}_d} \theta\right] h_{L+1}\left(\left(Q^{(U)}\right)^\T V_{\mathbb{I}_d} \theta; x\right)\\
        &\hspace{2cm}+ b_{L+1}\left(\left(Q^{(U)}\right)^\T V_{\mathbb{I}_d} \theta\right)\\
        &= W_{L+1}\left[ V_{\mathbb{I}_d} \theta\right] h_{L+1}\left(\left(Q^{(U)}\right)^\T V_{\mathbb{I}_d} \theta; x\right)+ b_{L+1}\left[ V_{\mathbb{I}_d} \theta\right]\\
        &= W_{L+1}\left[ V_{\mathbb{I}_d} \theta\right] h_{L+1}\left( V_{\mathbb{I}_d} \theta;U^\T x\right)+ b_{L+1}\left[ V_{\mathbb{I}_d} \theta\right]\\
        &=f\left(V_{\mathbb{I}_d} \theta; U^\T x\right).
    \end{align*}
    The result then follows by the definition of the EGOP-reparameterized map in \cref{eq:def-EGOP-forward-map}:
    \[
        f_U(\theta; x) = f\left(\left(Q^{(U)}\right)^\T V_{\mathbb{I}_d} \theta; x\right) = f\left(V_{\mathbb{I}_d} \theta; U^\T x\right) = f_{\mathbb{I}_d}(\theta; x).
    \]
\end{proof}
We are now equipped to prove~\cref{thm:EGOP-invariant-decision-boundaries}. Theorem~\ref{thm:EGOP-invariant-decision-boundaries} is a direct consequence of Lemma~\ref{thm:objective-invariant} below, which implies that for any $U_1, U_2\in O(d)$, $\loss^{(U_1)}(\cdot\,; f_{U_1})$ and $\loss^{(U_2)}(\cdot\,; f_{U_2})$ are identical as functions,
and thus first-order methods produce identical iterates when used to minimize them.
Consequently, the implicit bias of EGOP-reparameterized algorithms is invariant to rotations of the data distribution.

\begin{lemma}\label{thm:objective-invariant}
    Given $\rho$ satisfying Assumption~\ref{assumption:rotational-invariance}, for any $\theta\in \R^{p}$ and any $U \in O(d)$ it holds that
    \[
        \loss^{(U)}(\theta; f_{U}) =
        \loss(\theta; f_{\mathbb{I}_d}).
    \]
    Since this holds at every $\theta$, the (sub)gradients of both functions are also identical:
    \[
        \grad_{\theta}\loss^{(U)}(\theta; f_{U}) = \grad_{\theta}\loss(\theta; f_{\mathbb{I}_d}).
    \]
\end{lemma}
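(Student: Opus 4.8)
The plan is to prove the identity by composing the three helper lemmas in sequence; the result is essentially a bookkeeping exercise once those are in hand. First I would unfold $\loss^{(U)}(\theta; f_{U})$ using its definition in~\eqref{eq:rotated-general-population-loss}, giving $\loss^{(U)}(\theta; f_{U}) = \mathbb{E}_{\mathcal{D}^{(U)}}[\ell(y; f_{U}(\theta; x))]$. Applying the second identity of Lemma~\ref{lemma:gen-rotational-invariance-fact1} — whose hypothesis is exactly the finiteness of the population loss assumed here — rewrites this as
\[
    \loss^{(U)}(\theta; f_{U}) = \mathbb{E}_{\mathcal{D}}\big[\ell\big(y; f_{U}(\theta; Ux)\big)\big].
\]
Next I would invoke Lemma~\ref{lemma:gen-rotational-invariance-fact5}, which asserts $f_{U}(\theta; x) = f_{\mathbb{I}_d}(\theta; U^\T x)$ for \emph{every} $\theta$ and $x$; evaluating it at the point $Ux$ and using $U^\T U = \mathbb{I}_d$ (since $U \in O(d)$) gives $f_{U}(\theta; Ux) = f_{\mathbb{I}_d}(\theta; x)$. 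Substituting yields
\[
    \loss^{(U)}(\theta; f_{U}) = \mathbb{E}_{\mathcal{D}}\big[\ell\big(y; f_{\mathbb{I}_d}(\theta; x)\big)\big] = \loss^{(\mathbb{I}_d)}(\theta; f_{\mathbb{I}_d}) = \loss(\theta; f_{\mathbb{I}_d}),
\]
where the middle equality is just the definition~\eqref{eq:rotated-general-population-loss} at $U = \mathbb{I}_d$. This is the claimed functional identity, valid at every $\theta \in \mathbb{R}^p$.

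The (sub)gradient statement then follows with no additional work: the two displays above show that $\theta \mapsto \loss^{(U)}(\theta; f_{U})$ and $\theta \mapsto \loss(\theta; f_{\mathbb{I}_d})$ are the \emph{same} function on $\mathbb{R}^p$, so their (sub)differentials at any point coincide. In particular there is no need to differentiate through the nonsmooth activations separately — the subdifferential depends only on the function, which we have already shown is identical on both sides.

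The real content of this lemma is therefore imported, not created: it lives inside Lemma~\ref{lemma:gen-rotational-invariance-fact5}, which in turn rests on Lemma~\ref{lemma:gen-rotational-invariance-fact4}, i.e., on the identity $V_{U} = (Q^{(U)})^\T V_{\mathbb{I}_d}$ relating the two EGOP eigenbases. That is the step where Assumption~\ref{assumption:rotational-invariance} is indispensable: rotational invariance of $\rho$ is what allows the orthogonal change of variables $\theta \mapsto Q^{(U)}\theta$ to be absorbed inside the EGOP expectation, making $\mathrm{EGOP}_{\rho}(\loss^{(U)})$ and $\mathrm{EGOP}_{\rho}(\loss^{(\mathbb{I}_d)})$ orthogonally similar via $Q^{(U)}$, so that their eigenvector matrices differ by precisely that conjugation. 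Thus the main obstacle — the fact that the data-space rotation $x \mapsto Ux$ is exactly undone by the EGOP-induced reparameterization — has already been dispatched upstream (modulo the eigenvector non-uniqueness under repeated eigenvalues, handled in Section~\ref{sssec:eigenmatrix-computation-assumption} via a fixed tie-breaking rule), and what remains for the present lemma is only the short composition described above. As a corollary, since $\loss^{(U)}(\cdot; f_{U})$ and $\loss(\cdot; f_{\mathbb{I}_d})$ are literally the same objective, any deterministic first-order method~\eqref{eq:deterministic-update} started at the same $\theta_0$ generates identical iterates in both settings, which is how Theorem~\ref{thm:EGOP-invariant-decision-boundaries} is then obtained.
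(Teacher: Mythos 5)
Your proof is correct and takes essentially the same route as the paper: unfold $\loss^{(U)}(\theta; f_U)$, pass the rotation from $\mathcal{D}^{(U)}$ to $\mathcal{D}$ via Lemma~\ref{lemma:gen-rotational-invariance-fact1}, cancel it against Lemma~\ref{lemma:gen-rotational-invariance-fact5} using $U^\T U = \mathbb{I}_d$, and observe the subgradient identity is automatic since the two maps are equal pointwise. Your closing remarks correctly trace where \cref{assumption:rotational-invariance} actually does the work (inside Lemma~\ref{lemma:gen-rotational-invariance-fact4}), but that is commentary rather than a different argument.
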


\begin{proof}[Proof of \cref{thm:objective-invariant}]
    By the definition of $\mathcal{L}^{(U)}$ and by Lemma~\ref{lemma:gen-rotational-invariance-fact1},
    \[
        \mathcal{L}^{(U)}(\theta; f_{U}) = \mathbb{E}_{\mathcal{D}^{(U)}} \left[\ell(y; f_U(\theta; x)\right] = \mathbb{E}_{\mathcal{D}} \left[\ell(y; f_U(\theta; Ux)\right].
    \]
    Lemma~\ref{lemma:gen-rotational-invariance-fact5} implies
    \[
        f_U(\theta; Ux) = f_{\mathbb{I}_d}(\theta; U^\T Ux) = f_{\mathbb{I}_d}(\theta; x)
    \]
    where the last equality holds because $U\in O(d)$. Thus
    \[
        \mathbb{E}_{\mathcal{D}} \left[\ell(y; f_U(\theta; Ux)\right] = \mathbb{E}_{\mathcal{D}} \left[\ell(y; f_{\mathbb{I}_d}(\theta; x)\right]
    \]
    so we conclude
    \[
        \mathcal{L}^{(U)}(\theta; f_{U}) = \mathbb{E}_{\mathcal{D}} \left[\ell(y; f_{\mathbb{I}_d}(\theta; x)\right]  = \mathcal{L}(\theta; f_{\mathbb{I}_d}). 
    \]
\end{proof}

\begin{proof}[Proof of \cref{thm:EGOP-invariant-decision-boundaries}]
    To show that $\theta^{(U)}_T=\theta^{(\mathbb{I}_d)}_T$, we proceed by induction. For $T=0$, we have
    \[
        \theta^{(U)}_0=\theta^{(\mathbb{I}_d)}_0 = \theta_0.
    \]
    Thus by \cref{thm:objective-invariant},
    \[
        \nabla_{\theta} \mathcal{L}^{(U)}(\theta^{(U)}_0; f_U)= \nabla_{\theta} \mathcal{L}(\theta^{(\mathbb{I}_d)}_0; f_{\mathbb{I}_d}).
    \]
    Consequently,
    \[
        \theta_1^{(U)} = \update\left(
            \set{\theta_0}, \set{\grad \loss^{(U)}(\theta_0; f_U)}
        \right) = \update\left(
            \set{\theta_0},
            \set{\grad \loss(\theta_0; f_{\mathbb{I}_d})}
        \right) = \theta_1^{(\mathbb{I}_d)}.
    \]
    Assume that $\theta^{(U)}_{\tau} = \theta_{\tau}^{(\mathbb{I}_d)} \equiv \theta_{\tau}$ for
    all $\tau \leq t$. By~\cref{thm:objective-invariant}, that implies
    \[
        \grad_{\theta}\loss^{(U)}(\theta_\tau; f_U) =
        \grad_{\theta}\loss(\theta_{\tau}; f_{\mathbb{I}_d}), \;\; \text{for all $\tau\leq t$}.
    \]
    By definition of the $(t+1)^{\text{st}}$ iterate, we obtain
    \begin{align*}
        \theta^{(U)}_{t+1} &=
        \update\left(
            \set{\theta^{(U)}_\tau}_{\tau\leq t},
            \set{\grad_{\theta}\loss^{(U)}(\theta_\tau; f_U)}_{\tau \leq t}
        \right) \\
        &=
        \update\left(
            \set{\theta^{(\mathbb{I}_d)}_\tau}_{\tau\leq t},
            \set{\grad_{\theta}\loss(\theta_\tau; f_{\mathbb{I}_d})}_{\tau\leq t}
        \right) \\
        &= \theta^{(\mathbb{I}_d)}_{t+1},
    \end{align*}
    whence $\theta^{(U)}_{T} = \theta_{T}^{(\mathbb{I}_d)}$ for all $T \geq 0$.

    To prove the remaining identity, recall that by~\cref{lemma:gen-rotational-invariance-fact5}, for all $\theta\in \R^{2m}, \gamma\in \R$, we have
    \[
        f_U(\theta;x) = f_{\mathbb{I}_d}(\theta; U^\T x).
    \]
    Therefore, because $\theta^{(U)}_{T} = \theta_{T}^{(\mathbb{I}_d)}$ for all $T \geq 0$, we have that
    \[
        f_U\left(\theta^{(U)}_T;x\right) = f_{\mathbb{I}_d}\left(\theta^{(U)}_T; U^\T x\right)= f_{\mathbb{I}_d}\left(\theta^{(\mathbb{I}_d)}_T; U^\T x\right).
    \]
\end{proof}

\subsubsection{Eigenbasis Computation}\label{sssec:eigenmatrix-computation-assumption}

When the EGOP matrix contains repeated eigenvalues, the definition of $V_U$ in \cref{eq:EGOP-eigenbasis} is non-unique. We resolve this ambiguity by making the following assumption: given the similar PSD matrices $\EGOP(\mathcal{L})$ and $Q \EGOP(\mathcal{L})Q^\T$, any procedure used to compute eigenvectors will return
\[
    V \gets \texttt{eigenvectors}(\EGOP(\mathcal{L})) \quad \text{and}\quad (QV) \gets \texttt{eigenvectors}(Q\EGOP(\mathcal{L})Q^\T).
\]
When $\EGOP(\mathcal{L})$ has distinct eigenvalues, the above is guaranteed up to the signs of the columns of $QV$. Moreover, our proofs can be extended to account for the fact that this equality only holds up to sign\footnote{To do so, one need only add the assumption that the algorithm $\mathcal{A}$ is equivariant under negation of entries of $\theta$, a property shared by most optimization algorithms employed in machine learning, including (stochastic) gradient descent, \SignGD, \adam, AdamW, Adagrad, and more recently proposed optimizers like SOAP, Shampoo, and Lion \cite{vyas2024soap,jordan6muon, gupta2018shampoo}. }, and if $\rho$ is assumed to be supported over all of $\R^d$ (e.g., $\rho$ a scaled Gaussian) then this assumption can further be weakened to only hold for eigenvectors corresponding to \textit{non-zero} eigenvalues.

The above assumption is mild because with rich, real-world data, $\EGOP(\mathcal{L})$ will not contain repeated non-zero eigenvalues. Thus this assumption does not limit the applicability of our result in typical settings. 

\section{Equivariance of Structure-Aware Preconditioning Methods}\label{appendix:Shampoo}

In this section, we show that our arguments establishing equivariance under EGOP-reparam\-eterization can be extended to show that related optimization methods, namely structure-aware preconditioning methods including Shampoo and SOAP, can also yield equivariant adaptive optimization algorithms. In \cref{alg:Shampoo}, we present pseudocode instantiating Shampoo for a feed-forward network of the form in Definition~\ref{def:multilayer-network}. Shampoo constructs left- and right-preconditioner matrices $L^{(j)}_t$ and $R^{(j)}_t$ for each layer $j\in \{1,\dots,L+1\}$. We show that these preconditioner matrices lead to equivariance. SOAP, a closely related method, constructs analogous preconditioners and our arguments extend to this algorithms as well. 

\paragraph{Setup} We follow a similar problem setup to that in \cref{sssec:general-equivariance}, which we briefly review. Fix any data distribution $\mathcal{D}$ and any loss function $\ell(\cdot; \cdot)$ such that the population loss $\mathcal{L}$ (cf. \cref{eq:population_loss}) is finite for any fixed $\theta\in \R^p$. Fix any family of networks satisfying Definition~\ref{def:multilayer-network}, any data rotation $U\in O(d)$, any number of time steps $T$, and any initialization $\theta_0 \in \R^{p}$. Recall the definitions of the rotated data distribution $\mathcal{D}^{(U)}$ (cf. \cref{eq:def-rotated-dist}) and the population loss under the rotation, $\mathcal{L}^{(U)}$ (cf. \cref{eq:rotated-general-population-loss}).

Let $\theta_{T}^{(\mathbb{I}_d)}$ denote the result of initializing at $\theta_0$ and training with Shampoo for the original data distribution,  i.e. minimizing loss $\mathcal{L}(\cdot, \cdot)$ for $T$ timesteps. We label these iterates with superscript $\mathbb{I}_d$ because the original data distribution corresponds to the rotation under the identity rotation. Let $\theta^{(U)}_{T}$ denote the analogous parameters obtained by training with gradient descent on the \textit{rotated} data distribution. Specifically, $\theta^{(U)}_{T}$ is the result of initializing at the rotated point $(Q^{(U)})^\T \theta_0$, for $Q^{(U)}$ defined in \cref{eq:def-Q-U}, and minimizing loss $\mathcal{L}^{(U)}(\cdot, \cdot)$ for $T$ time steps with Shampoo. Then the following holds:
\begin{theorem}\label{thm:Shampoo-equvariance-result}
    For any family of networks satisfying Definition~\ref{def:multilayer-network} and any data rotation $U\in O(d)$, the iterates produced by \cref{alg:Shampoo} in the original ($\theta_{t}^{(\mathbb{I}_d)}$) and rotated ($\theta_{t}^{(U)}$) settings are equivalent up to rotation:
    \begin{equation}\label{eq:shampoo-iterates}
        \theta_{t}^{(U)} = (Q^{(U)})^\T \theta_{t}^{(\mathbb{I}_d)} \ \forall t\in \{1,\dots,T\},
    \end{equation}
    where $Q^{(U)}$ is defined in \cref{eq:def-Q-U}. Moreover the resultant trained network forward maps are equivariant:
    \[
        f\left(\theta^{(U)}_{t}; x \right) = f\left(\theta^{(\mathbb{I}_d)}_{t}; U^\T x \right).
    \]
    This implies that the decision boundaries produced by Shampoo are equivariant.
\end{theorem}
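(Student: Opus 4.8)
The plan is to prove~\eqref{eq:shampoo-iterates} by induction on $t$, tracking not only the iterates but also the internal preconditioner state of Shampoo. The key structural observation, which follows from~\cref{lemma:generalized-rotations-in-data-are-rotations-in-param} together with~\cref{lemma:gen-rotational-invariance-fact1}, is that the reparameterization $Q^{(U)}$ acts on the parameter vector by right-multiplying the first-layer weight matrix by $U$ (that is, $W_1 \mapsto W_1 U$) and leaving every other weight matrix and every bias vector unchanged; consequently $\mathcal{L}^{(U)}(\theta; f) = \mathcal{L}(Q^{(U)}\theta; f)$, so by the chain rule the layer gradients of the rotated problem, evaluated at $(Q^{(U)})^\T \theta$, relate to those of the original problem at $\theta$ by
\[
    G_1^{(U)}\!\big((Q^{(U)})^\T\theta\big) = G_1(\theta)\,U^\T, \qquad
    G_j^{(U)}\!\big((Q^{(U)})^\T\theta\big) = G_j(\theta) \quad (j \neq 1),
\]
with all bias gradients unchanged as well. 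Thus, from the optimizer's point of view, the entire effect of the data rotation is that the first-layer gradient matrix is right-multiplied by the orthogonal matrix $U^\T$.

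With this in hand, I would carry the following inductive invariant: $\theta^{(U)}_t = (Q^{(U)})^\T\theta^{(\mathbb{I}_d)}_t$; the left preconditioners satisfy $L^{(j),(U)}_t = L^{(j),(\mathbb{I}_d)}_t$ for all $j$; the right preconditioners satisfy $R^{(j),(U)}_t = R^{(j),(\mathbb{I}_d)}_t$ for $j \neq 1$ and $R^{(1),(U)}_t = U\,R^{(1),(\mathbb{I}_d)}_t\,U^\T$; and, if Shampoo maintains a gradient momentum buffer, the first-layer buffer is right-multiplied by $U^\T$ while the others coincide. The base case is immediate: the iterates agree at initialization by the choice $\theta_0^{(U)} = (Q^{(U)})^\T\theta_0$, the preconditioners are initialized to $0$ (or $\epsilon I$), and $0 = U 0 U^\T$, $\epsilon I = U(\epsilon I)U^\T$. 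For the inductive step, write $G := G_1(\theta^{(\mathbb{I}_d)}_t)$ for the original first-layer gradient, so the rotated one evaluated at $\theta^{(U)}_t = (Q^{(U)})^\T\theta^{(\mathbb{I}_d)}_t$ is $GU^\T$. Orthogonality of $U$ gives $(GU^\T)(GU^\T)^\T = GG^\T$ and $(GU^\T)^\T(GU^\T) = U(G^\T G)U^\T$, which propagates the preconditioner invariants. Since spectral functions commute with orthogonal conjugation --- $(U M U^\T + \epsilon I)^{-1/4} = U(M + \epsilon I)^{-1/4}U^\T$ --- the first-layer Shampoo update transforms as
\[
    \big(L^{(1)} + \epsilon I\big)^{-1/4}\,(GU^\T)\,\big(U R^{(1)} U^\T + \epsilon I\big)^{-1/4}
    = \big(L^{(1)} + \epsilon I\big)^{-1/4}\,G\,\big(R^{(1)} + \epsilon I\big)^{-1/4}\,U^\T,
\]
so the updated first-layer weight equals $\big(W_1 - \eta (L^{(1)} + \epsilon I)^{-1/4}G(R^{(1)} + \epsilon I)^{-1/4}\big)U^\T$, i.e.\ the new first-layer weight of $\theta^{(\mathbb{I}_d)}_{t+1}$ right-multiplied by $U^\T$; every remaining layer and bias is updated identically in the two problems. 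This re-establishes $\theta^{(U)}_{t+1} = (Q^{(U)})^\T\theta^{(\mathbb{I}_d)}_{t+1}$, completing the induction and proving~\eqref{eq:shampoo-iterates}.

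Finally, the forward-map equivariance is a direct consequence of~\cref{lemma:generalized-rotations-in-data-are-rotations-in-param}: from $f(\theta; Ux) = f(Q^{(U)}\theta; x)$ one gets $f\big((Q^{(U)})^\T\psi; x\big) = f(\psi; U^\T x)$, and applying this with $\psi = \theta^{(\mathbb{I}_d)}_t$ yields $f(\theta^{(U)}_t; x) = f(\theta^{(\mathbb{I}_d)}_t; U^\T x)$, whence equivariance of the decision boundary. The same scheme covers SOAP: its left and right preconditioner statistics evolve exactly as Shampoo's, so the eigenbasis of the rotated first-layer right statistic is $U$ times the original (invoking the consistent-eigenbasis convention of~\cref{sssec:eigenmatrix-computation-assumption} and equivariance of Adam under sign flips of coordinates), and running Adam in the preconditioned coordinates then produces identical iterates. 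I expect the main obstacle to be bookkeeping rather than conceptual: pinning down the precise form of the Shampoo/SOAP updates in~\cref{alg:Shampoo} --- the $\epsilon$-regularization, any exponent other than $1/4$, the treatment of biases, and the momentum buffer --- and checking that the identity $U M^{\alpha} U^\T = (U M U^\T)^{\alpha}$ for orthogonal $U$ is exactly what is needed at each point where a matrix power of the (layer-$1$) right preconditioner is formed.
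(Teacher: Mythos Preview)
Your proposal is correct and follows essentially the same route as the paper's proof: both argue by induction on $t$, derive from \cref{lemma:generalized-rotations-in-data-are-rotations-in-param,lemma:gen-rotational-invariance-fact1} and the chain rule that the rotated first-layer gradient is $G_1 U^\T$ with all other layer gradients unchanged, track the invariants $\tilde L^{(j)}_t = L^{(j)}_t$ and $\tilde R^{(1)}_t = U R^{(1)}_t U^\T$ (with $\tilde R^{(j)}_t = R^{(j)}_t$ for $j\ge 2$), and use $(U M U^\T)^{-1/4} = U M^{-1/4} U^\T$ to conclude that the first-layer Shampoo step is the original step right-multiplied by $U^\T$. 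Your write-up is in fact slightly more explicit than the paper's in that you carry the preconditioner relations as part of the induction hypothesis from the outset, whereas the paper works out the $t=0\to 1$ transition in full and then appeals to induction; both are the same argument.
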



In practice, one-sided variants of these algorithms are sometimes employed to reduce computational costs, in which only one of the input- or output-side preconditioner are stored. Our arguments show that in fact to obtain equivariance to data rotations, it suffices to retain only the input-side preconditioners $R^{(j)}_t$. In fact, our proofs further show that this can be weakened, and that input-side reparameterization \textit{of the first layer alone} suffices to convey equivariance with algorithms like Shampoo and SOAP.

While \cref{alg:Shampoo} corresponds to the algorithm analyzed by \citet{gupta2018shampoo}, in practice computing ${L}^{-1/4}_t, {R}^{-1/4}_t$ on every iteration is too computationally intensive, and thus these preconditioners are only updated periodically. These periodic updates are also employed in the closely related SOAP algorithm. Our equivariance results hold under such periodic updates to the preconditioners, \textit{provided the preconditioner matrix roots ${L}^{-1/4}_1, {R}^{-1/4}_1$ are computed on the first iteration.} Thus, in order to achieve equivariance, our results highlight that practitioners should ensure their periodic update scheme includes a matrix root computation on the first iteration. 

\begin{algorithm}[ht]
\caption{\texttt{Shampoo}, instantiated for feed-forward networks satisfying Definition~\ref{def:multilayer-network}}
\begin{algorithmic}[1]
    \State \textbf{Input}: initialization $\theta_0 \in \R^p$, step-size $\eta > 0$, $\epsilon \geq 0$, first-order oracle $\nabla \mathcal{L}(\cdot)$.
    \State \textbf{Initialize} For each layer $j\in \{1,\dots, L+1\}$, initialize $L_0^{(j)} = \epsilon \mathbb{I}_{m_j}$ and $R_0^{(j)} = \epsilon \mathbb{I}_{m_{j-1}}$, where $m_0$ is identified with the dimension of the input data: $m_0 = d$.
    \For{$t = 0, 1, \dots$} 
        \State $G_t \gets \nabla_{\theta} \mathcal{L}(\theta_t) $
        \State $S_0 \gets 0$ \Comment{Index counter}
        \For{$j = 1, \dots, L+1$} \Comment{Update for each layer's weight matrix}
            \State $S_j\gets S_{j-1} + m_{j-1}m_j$ \Comment{Update index counter}
            \State $G_{t}^{(j)} \gets \reshape\left( (G_t)_{S_{j-1}:S_j}\right)$
            \State $L_{t+1}^{(j)} \gets L_{t}^{(j)} + G_t^{(j)}\left(G_t^{(j)}\right)^\T$
            \State $R_{t+1}^{(j)} \gets R_{t}^{(j)} + \left(G_t^{(j)}\right)^\T G_t^{(j)}$
            \State $\left(\theta_{t+1}\right)_{S_{j-1}:S_j} \gets \left(\theta_{t}\right)_{S_{j-1}:S_{j}} -\eta \vecop\left(\left(L_{t+1}^{(j)}\right)^{-1/4} G^{(j)}_t \left(R_{t+1}^{(j)}\right)^{-1/4}\right)$
        \EndFor
    \EndFor
\end{algorithmic}
\label{alg:Shampoo}
\end{algorithm}

\begin{proof}[Proof of Theorem~\ref{thm:Shampoo-equvariance-result}]
    We begin by establishing \cref{eq:shampoo-iterates}. When $t=0$, the equality holds by construction:
    \[
        \theta_{0}^{(U)} = (Q^{(U)})^\T \theta_0 = (Q^{(U)})^\T \theta_{0}^{(\mathbb{I}_d)}.
    \]
    We show that this equality is preserved by the Shampoo update. Consider the map $\mathcal{L}^{(U)}(\cdot; f):\R^{p}
    \rightarrow \R$. Expanding the definition of $\loss^{(U)}$ and applying Lemmas~\ref{lemma:generalized-rotations-in-data-are-rotations-in-param} and \ref{lemma:gen-rotational-invariance-fact1},
    \begin{align*}
        \loss^{(U)} (\theta; f) = \mathbb{E}_{ \mathcal{D}^{(U)}}[\ell(y; f(\theta; x)]
        = \mathbb{E}_{\mathcal{D}}[\ell(y; f(\theta; Ux)]
        = \mathbb{E}_{\mathcal{D}}[\ell(y; f(Q^{(U)}\theta; x)]
        = \loss\left(Q^{U} \theta; f\right).
    \end{align*}
    Thus by chain rule, we conclude that for any $\theta$,
    \begin{equation}\label{eq:shampoo-chain-rule}
        \grad_\theta \loss^{(U)} (\theta; f) = \left(Q^{(U)}\right)^\T \grad \loss(Q^{U} \theta; f).
    \end{equation}
    Recalling the definition of $Q^{(U)}$ in \cref{eq:def-Q-U}, this implies that for any $\theta$, the first $d m_1$ entries of these two gradients are related by an orthogonal transformation:
    \begin{align*}
        \left(\grad_\theta \loss^{(U)} (\theta; f)\right)_{1:(d m_1)} &= \left(U^\T \otimes \mathbb{I}_{m_1}\right)^\T \left(\grad \loss (Q^{(U)}\theta; f)\right)_{1:(d m_1)} \\
        &= \left(U \otimes \mathbb{I}_{m_1}\right) \left(\grad \loss (Q^{(U)}\theta; f)\right)_{1:(d m_1)}
    \end{align*}
    and thus 
    \begin{equation}\label{eq:shampoo-first-layer}
        \reshape\left(\left(\grad_\theta \loss^{(U)} (\theta; f)\right)_{1:(d m_1)}\right) = \reshape\left( \left(\grad \loss (Q^{(U)}\theta; f)\right)_{1:(d m_1)}\right) U^\T
    \end{equation}
    where the last equality uses the following property of Kronecker products:
    \[
        \reshape\left((A\otimes B)\theta\right) = B \reshape(\theta) A^\T.
    \]
    Similarly, by the definition of $Q^{(U)}$, \cref{eq:shampoo-chain-rule} implies that these two gradients are identical in  all later entries, namely for all all $k \in \{d m_1 + 1, \dots, p\}$
    \begin{equation}\label{eq:shampoo-deeper-layers}
        \left(\grad_\theta \loss^{(U)} (\theta; f)\right)_{k} = \left(\grad \loss (Q^{(U)}\theta; f)\right)_{k}.
    \end{equation}
    
    Let $G^{(j)}_t, L^{(j)}_{t+1},$ and $R^{(j)}_{t+1}$ denote the gradient entries and left- and right-side preconditioners constructed by Shampoo when instantiated for the original distribution, i.e., training with oracle access to $\mathcal{L}$. Let $\tilde{G}^{(j)}_t, \tilde{L}^{(j)}_{t+1},$ and $\tilde{R}^{(j)}_{t+1}$ denote analogous quantities for Shampoo instantiated on loss for the rotated distribution, $\mathcal{L}^{(U)}$. On the first iteration,
    \[
        \theta_{0}^{(U)} = (Q^{(U)})^\T \theta_{0}^{(\mathbb{I}_d)} \quad \implies \quad \theta_{0}^{(\mathbb{I}_d)} = Q^{(U)} \theta_{0}^{(U)}
    \]
    so by \cref{eq:shampoo-first-layer} and \cref{eq:shampoo-deeper-layers}, we conclude
    \begin{equation}\label{eq:shampoo-matrix-grads}
        \tilde{G}^{(j)}_0 = \begin{cases}
            G^{(j)}_0 U^\T \quad &j=1\\
            G^{(j)}_0 & j\in \{2,\dots,L+1\}.
        \end{cases}
    \end{equation}
    Thus the left- preconditioners for the first layers satisfy
    \begin{align*}
        \tilde{L}^{(1)}_1 &= \epsilon \mathbb{I}_{m_1} + \tilde{G}^{(j)}_0(\tilde{G}^{(j)}_0)^\T\\
        &= \epsilon \mathbb{I}_{m_1} +(G^{(j)}_0 U^\T)(G^{(j)}_0 U^\T)^\T\\
        &= \epsilon \mathbb{I}_{m_1} +G^{(j)}_0 U^\T U (G^{(j)}_0)^\T\\
        &= \epsilon \mathbb{I}_{m_1} +G^{(j)}_0 (G^{(j)}_0)^\T\\
        &= L^{(1)}_1
    \end{align*}
    where the penultimate inequality holds because $U\in O(d)$. Similarly the right- preconditioners for the first layers satisfy
    \begin{align*}
        \tilde{R}^{(1)}_1 &= \epsilon \mathbb{I}_{d} + (\tilde{G}^{(j)}_0)^\T\tilde{G}^{(j)}_0\\
        &= \epsilon \mathbb{I}_{d} +(G^{(j)}_0 U^\T)^\T (G^{(j)}_0 U^\T)\\
        &= \epsilon \mathbb{I}_{d} +U (G^{(j)}_0)^\T (G^{(j)}_0) U^\T\\
        &= U\left( \epsilon \mathbb{I}_{d} +(G^{(j)}_0)^\T (G^{(j)}_0)\right) U^\T\\
        &= U R^{(1)}_1 U^\T.
    \end{align*}
    Combining these with \cref{eq:shampoo-matrix-grads} implies that
    \begin{equation}\label{eq:shampoo-all-L}
        \tilde{L}^{(j)}_1 = {L}^{(j)}_1 \quad \forall j\in \{1,\dots,L+1\}
    \end{equation}
    and
    \begin{equation}\label{eq:Shampoo-all-R}
        \tilde{R}^{(j)}_1 = \begin{cases}
            U R^{(j)}_1 U^\T \quad &j=1\\
            R^{(j)}_1 & j\in \{2,\dots,L+1\}.
        \end{cases}
    \end{equation}
    We consider the result of the first iteration, using these preconditioners. We first consider $\theta^{(\mathbb{I}_d)}_1$,  the updated parameters when training with the original data distribution. Recall that $\theta^{(\mathbb{I}_d)}_0 = \theta_0$ by construction. Thus for every $j\in \{1, \dots, L+1\}$,
    \begin{align*}
        \left(\theta^{(\mathbb{I}_d)}_1\right)_{S_{j-1}:S_j} = \left(\theta_0\right)_{S_{j-1}:S_{j}} - \eta \vecop\left(\left(L_{1}^{(j)}\right)^{-1/4} G^{(j)}_0 \left(R_{1}^{(j)}\right)^{-1/4}\right).
    \end{align*}
    In the above, $\{S_j\}_{j=1}^{L+1}$ are the index counters constructed in \cref{alg:Shampoo}.

    We now compare with $\theta^{(U)}_1$, the updated parameters when training with the \textit{rotated} data distribution. Recall that $\theta^{(U)}_0 = (Q^{(U)})^\T \theta_0$. By \cref{eq:def-Q-U}, for any $\theta\in \R^p$, $(Q^{(U)})^\T \theta$ leaves invariant all entries of $\theta$ except those corresponding to the first layer weights, i.e., all entries except $(\theta)_{1:dm_1}$. Thus for all $j\in \{2,\dots,L+1\}$,
    \begin{equation}\label{eq:shampoo-inits-deeper-layers}
        \left(\theta^{(U)}_0\right)_{S_{j-1}:S_j} = \left((Q^{(U)})^\T \theta_0\right)_{S_{j-1}:S_j} = \left(\theta_0\right)_{S_{j-1}:S_j}.
    \end{equation}
    Combining this with \cref{eq:shampoo-all-L,eq:Shampoo-all-R,eq:shampoo-matrix-grads} implies that
    \begin{align*}
        \left(\theta^{(U)}_1\right)_{S_{j-1}:S_j} &= \left(\theta^{(U)}_0\right)_{S_{j-1}:S_j} - \eta \vecop\left( \left(\tilde{L}_{1}^{(j)}\right)^{-1/4} \tilde{G}^{(j)}_0 \left(\tilde{R}_{1}^{(j)}\right)^{-1/4}\right)\\
        &= \left(\theta_0\right)_{S_{j-1}:S_j} - \eta \vecop\left(\left({L}_{1}^{(j)}\right)^{-1/4} {G}^{(j)}_0 \left({R}_{1}^{(j)}\right)^{-1/4}\right)\\
        &= \left(\theta^{(\mathbb{I}_d)}_1\right)_{S_{j-1}:S_j}.
    \end{align*}
    We thus find that for all but the first layer, the iterates $\theta^{(U)}_1$ and $\theta^{(\mathbb{I}_d)}_1$ are identical. Because $(Q^{U})^\T \theta$ leaves invariant all entries of $\theta$ except those in the first layer, we also conclude that for all $j\in \{2,\dots,L+1\}$,
    \begin{equation}\label{eq:shampoo-iterate-later-layer-entries}
        \left(\theta^{(U)}_1\right)_{S_{j-1}:S_j} = \left((Q^{(U)})^\T \theta^{(\mathbb{I}_d)}_1\right)_{S_{j-1}:S_j}.
    \end{equation}
    It remains to establish this correspondance for the entries of the first layer. For $j=1$,
    \begin{equation}\label{eq:shampoo-inits-first-layer}
        \left(\theta^{(U)}_0\right)_{S_0:S_1} =\left(\theta^{(U)}_0\right)_{1:(d m_1)} = \left((Q^{(U)})^\T \theta_0\right)_{1:(d m_1)} = (U \otimes \mathbb{I}_{m_1})\left(\theta_0\right)_{1:(d m_1)}
    \end{equation}
    Combining this with \cref{eq:shampoo-all-L,eq:Shampoo-all-R,eq:shampoo-matrix-grads},
    \begin{align*}
        \left(\theta^{(U)}_1\right)_{S_0:S_1} &= \left(\theta^{(U)}_0\right)_{S_0:S_1} - \eta \vecop\left(\left(\tilde{L}_{1}^{(1)}\right)^{-1/4} \tilde{G}^{(1)}_0 \left(\tilde{R}_{1}^{(1)}\right)^{-1/4}\right)\\
         &= (U \otimes \mathbb{I}_{m_1})\left(\theta_0\right)_{S_0:S_1} - \eta \vecop\left(\left({L}_{1}^{(1)}\right)^{-1/4} G^{(1)}_0 U^\T U \left({R}_{1}^{(1)}\right)^{-1/4} U^\T\right)\\
         &= (U \otimes \mathbb{I}_{m_1})\left(\theta_0\right)_{S_0:S_1} - \eta \vecop\left(\left({L}_{1}^{(1)}\right)^{-1/4} G^{(1)}_0 \left({R}_{1}^{(1)}\right)^{-1/4} U^\T\right)\\
         &= (U \otimes \mathbb{I}_{m_1})\left(\theta_0\right)_{S_0:S_1} - \eta (U\otimes \mathbb{I}_{m_1})\vecop\left(\left({L}_{1}^{(1)}\right)^{-1/4} G^{(1)}_0 \left({R}_{1}^{(1)}\right)^{-1/4}\right)\\
         &= (U \otimes \mathbb{I}_{m_1}) \left(\left(\theta_0\right)_{S_0:S_1} - \eta\vecop\left(\left({L}_{1}^{(1)}\right)^{-1/4} G^{(1)}_0 \left({R}_{1}^{(1)}\right)^{-1/4}\right)\right)\\
         &= (U \otimes \mathbb{I}_{m_1})\left(\theta^{(\mathbb{I}_d)}_1\right)_{S_0:S_1}\\
         &= \left((Q^{(U)})^\T \theta^{(\mathbb{I}_d)}_1\right)_{S_0:S_1}.
    \end{align*}
    This, combined with \cref{eq:shampoo-iterate-later-layer-entries}, implies
    \[
        \theta^{(U)}_1 = (Q^{(U)})^\T \theta^{(\mathbb{I}_d)}_1.
    \]
    Applying the above argument in proof-by-induction shows that the equality in \cref{eq:shampoo-iterates} holds for all time steps.

    We now establish the second result. \cref{lemma:generalized-rotations-in-data-are-rotations-in-param} states that for any $f\in \mathcal{F}$ for $\mathcal{F}$ satisfying \cref{def:multilayer-network},
    \[
        f\left(Q^{(U)}\theta^{(\mathbb{I}_d)}_{t}; x \right) = f\left(\theta^{(\mathbb{I}_d)}_{t}; U x \right).
    \]
    Taking $U' \defeq U^\T$ and observing that by definition, $Q^{(U')} = (Q^{(U)})^\T$, \cref{lemma:generalized-rotations-in-data-are-rotations-in-param} also implies
    \[
        f\left((Q^{(U)})^\T \theta^{(\mathbb{I}_d)}_{t}; x \right) = f\left(\theta^{(\mathbb{I}_d)}_{t}; U^\T x \right).
    \]
    Combining the equality in \cref{eq:shampoo-iterates} with this fact yields the equivariance of the forward maps:
    \[
        f\left(\theta^{(U)}_{t}; x \right)=f\left((Q^{(U)})^\T)\theta^{(\mathbb{I}_d)}_{t}; x \right) = f\left(\theta^{(\mathbb{I}_d)}_{t}; U^\T x \right).
    \]
\end{proof}

\end{document}